\newcommand{\users}{\mathify{t}}
\newcommand{\samples}{\mathify{n}}
\newcommand{\samplesp}{\mathify{n}_{\text{pers}}}
\newcommand{\datadim}{d}
\newcommand{\sampledom}{\cZ}
\newcommand{\outdom}{\cO}
\newcommand{\bbdom}{\cB}
\newcommand{\dist}{P}
\newcommand{\distfam}{\cP}
\newcommand{\hyp}{g}
\newcommand{\rep}{h}
\newcommand{\metadist}{\cQ}
\newcommand{\alg}{\cM}
\newcommand{\err}{\ell}
\newcommand{\Det}{\text{Det}}
\DeclarePairedDelimiterX{\infdivx}[2]{(}{)}{%
  #1\;\delimsize\|\;#2%
}
\newcommand{\renyi}{\mathrm{D}_\alpha\infdivx}
\title{Privacy in Metalearning and Multitask Learning: \\ Modeling and Separations}
\author{
    Maryam Aliakbarpour\thanks{Department of Computer Science, Rice University. Supported by NSF awards CNS-2120667, CNS-2120603, CCF-1934846, and the Hariri Institute for Computing at Boston University. Part of this work was conducted while M.A.\ was affiliated with Boston University and Northeastern University, and while serving as a research fellow at the Simons Institute for the Theory of Computing.
    } \and
    Konstantina Bairaktari\thanks{Khoury College of Computer Sciences, Northeastern University.  Supported by NSF awards CNS-2232692 and CCF-2311649.} \and
     \and
    Adam Smith\thanks{Department of Computer Science, Boston University. Work at BU supported by  NSF awards CCF-1763786 and CNS-2120667, Faculty Awards from Google and Apple, and Cooperative Agreement CB16ADR0160001 with the Census Bureau. The views expressed in this paper are those of the authors and not those of the U.S. Census
    Bureau or any other sponsor.} \and
    Marika Swanberg\thanks{Google. This work was done while M.S.\ was affiliated with Boston University, supported by  NSF awards CCF-1763786 and CNS-2120667.}
    \and
    Jonathan Ullman\thanks{Khoury College of Computer Sciences, Northeastern University.  Supported by NSF awards CNS-2232692 and CNS-2247484.}
}
\date{}
\begin{document}

\maketitle

\begin{abstract}
Model personalization allows a set of individuals, each facing a different learning task, to train models that are more accurate for each person than those they could develop individually. The goals of personalization are captured in a variety of formal frameworks, such as multitask learning and metalearning. Combining data for model personalization poses risks for privacy because the output of an individual's model can depend on the data of other individuals.  In this work we undertake a systematic study of differentially private personalized learning. Our first main contribution is to construct a taxonomy of formal frameworks for private personalized learning. This taxonomy captures different formal frameworks for learning as well as different threat models for the attacker. Our second main contribution is to prove separations between the personalized learning problems corresponding to different choices.  In particular, we prove a novel separation between private multitask learning 
and private metalearning.
\end{abstract}

\vfill\newpage
\tableofcontents
\vfill\newpage





\section{Introduction}
\label{sec:intro}

Model personalization allows a set of individuals, each facing a different learning task, to train models that are more accurate for each person than those they could develop individually. For example, consider a set of people, each of whom holds a relatively small dataset of photographs labeled with the names of their loved ones that appear in each picture.  Each person would like to build a classifier that labels future pictures with the names of people in the picture, but training such an image classifier would take more data than any individual person has.  Even though the tasks they want to carry out are different---their photos have different subjects---those tasks share a lot of common structure.  By pooling their data, a large group of people could learn the shared components of a good set of classifiers.  Each individual could then train the subject-specific components on their own, requiring only a few examples for each subject.  Other applications of personalization include next-word prediction on a mobile keyboard, speech recognition, and recommendation systems.

The goals of personalization are captured in a variety of formal frameworks, such as multitask learning and metalearning. Roughly, in \emph{multitask learning} we are given data for $t$ tasks and wish to find $t$ different, but related, models $g_1,\ldots, g_t$ that each perform well on one task. In \emph{metalearning}, we aim to find a common \emph{representation}, which is a summary (e.g.\ an embedding of the data into a lower-dimensional space) that can be adapted to new test tasks that are similar to the training tasks, and where similarity is formalized through some task distribution.

Although it offers many benefits, model personalization poses new risks for privacy because each individual's model depends on the data of other individuals.  Thus, we undertake a systematic study of \emph{differentially private (DP) personalized learning}.  Informally, DP personalized learning requires that the components of the models made visible to \emph{people other than you} do not reveal too much information about \emph{your data}.  As in the motivating example above, we think of your data as being synonymous with the training data for a single learning task.

\paragraph{Frameworks for private personalization.}
Our first contribution is a taxonomy of different formal frameworks for private personalized learning, which vary according to the learning objective, the privacy requirements, and the structure of the output.

Our main interest in investigating these models is sample complexity: How many individuals/tasks $\users$ need to contribute data to achieve a given learning objective, and how many samples $\samples$ do we need from each individual/task?  We focus on small $\samples$, so no one individual data can learn well on their own. 

Because we focus on sample complexity (as opposed to computation or communication), we consider a centralized curator that can collect and process the data of all parties. Such a curator can typically be simulated by a secure multiparty computation without the need for an actual trusted party. Specifically, we consider the following output structures (omitting citations---see the more detailed discussion and related work sections below)\adam{Adding excuses.}: 
\begin{itemize}
    \item[] \emph{Billboard}: The outputs of the curator are visible to all individuals (``written on a billboard''), but may be adapted locally by each individual for their use. This is the model used in the overwhelming majority of papers on private personalized learning.
    \item[] \emph{Separate Outputs}: The curator sends a separate output to each individual, that is only visible to them. 
\end{itemize}

Each output structure leads to one or more corresponding privacy requirements.  In the billboard structure, it is clear that the DP condition has to apply to everything the curator outputs.  When individuals receive separate outputs, we consider two types of attackers.  One type sees the outputs of \emph{all but one} individual and we want this attacker to be unable to make inferences about the data of the remaining individual (\emph{joint DP}).  In the other ($1$-out-of-$t$ DP), we require privacy only for an individual that sees \emph{one} output.

\mypar{Relating the frameworks.}
Our second contribution is a set of relationships among and separations between the different combinations of learning objectives and privacy requirements. 

We introduce two basic problems, one involving estimation and one involving classification, and show a hierarchy of frameworks for both problems:

\begin{center}
\fbox{\begin{tabular}{c}
     metalearning \\
     DP\\
\end{tabular}}
$=$
\fbox{\begin{tabular}{c}
     multitask \\
     DP billboard\\ 
\end{tabular}}
$\subsetneq$
\fbox{\begin{tabular}{c}
     multitask \\
     JDP\\ 
\end{tabular}}
$\subsetneq$
\fbox{\begin{tabular}{c}
     multitask \\
     1-out-of-$t$ DP\\ 
\end{tabular}}
\end{center}

\noindent Our results have a number of technical and conceptual implications for private personalized learning:
\begin{enumerate}
    \item Multitask learning with billboard DP implies DP metalearning and vice-versa.  
    \item For some tasks, multitask learning with a general JDP algorithm can require much less data than for DP billboard algorithms.  Thus, while most JDP learning algorithms to date use a billboard, there is potential to significantly reduce the sample complexity by exploring different algorithmic paradigms.  As a consequence, there is a separation between DP multitask and DP metalearning, even though the two objectives are essentially equivalent in the absence of privacy constraints.  
    \item There is a separation between private multitask learning with JDP and with $1$-out-of-$\users$ DP.  Thus, private personalized learning require much less data if we are in a setting where a realistic adversary can only see the outputs or behavior of a small number of individuals.
\end{enumerate}




\subsection{Problem Formulation and Results}
\label{sec:contributions}

In this section we describe the key concepts in our taxonomy and then describe our technical results.

\mypar{Learning Objectives.}
We consider two learning objectives---\emph{multitask learning} and \emph{metalearning}.  In both, data from $\users$ people is pooled together for learning. In multitask learning we simply want to learn a separate model for each of these tasks, but in metalearning, we want to extract some kind of representation that can later be specialized to another unseen task.  Note that metalearning is intuitively the harder objective because the representation can also be specialized to the original tasks.

In \emph{multitask learning}, we have $t$ tasks, each of which is modeled by a distribution $\dist_i$, with training data for the task sampled i.i.d.\ from this distribution.  The algorithm takes in training datasets $S_i$ for each task and returns hypotheses $\hyp_i$ for each task. Our goal is for the $\hyp_i$ to have low error on their respective tasks, as measured by some loss function $\err(\dist_i, \hyp_i)$, averaged  over the $\users$ tasks.
\begin{defn}[Multitask Learning]
    \label{def:multitask_learning} Let $\distfam$ be a set of $\users$-tuples of distributions and $\err$ be a loss function.  An algorithm $\alg$ \emph{multitask learns $\distfam$ with error $\alpha$ with $\users$ tasks and $\samples$ samples per task} if, when given datasets $S_1 \sim \dist_1^n,\dots,S_\users \sim \dist_\users^n$ for any $(\dist_1,\dots,\dist_\users) \in \distfam$, it returns hypotheses $\hyp_1,\dots,\hyp_\users$ such that 
    \[
        \mathbb{E}\left[\frac{1}{\users} \sum_{i \in [\users]} \err(\dist_i,\hyp_i)\right] \leq \alpha
    \]
    where the expectation is taken over the datasets and the randomness of $\alg$.
\end{defn}
Note that the definition doesn't make assumptions about the individual tasks or relationships between them, but some assumptions will be necessary for any non-trivial multitask learning.

The other learning objective we consider is \emph{metalearning}, which models learning a \emph{representation} that can be used for some unseen task.  Here, we have a collection of $\users$ \emph{training tasks} and a separate \emph{test task} drawn from a task distribution.  The first algorithm takes training datasets $S_i$ for each of the training tasks and outputs a representation\footnote{We do not constrain the form of $h$ in any way, and use ``representation'' to mean any summary of the training data that can be specialized to future tasks.  In fact, if privacy were not a concern, the representation could consist of raw training data.} $h$.  The second algorithm then takes $h$ and training data for the unseen test task and returns a hypothesis $\hyp$.  Our goal is for the final hypothesis $\hyp$ to have low error on the test task.
\begin{defn}[Metalearning]
    \label{def:metalearning}
    Let $\metadist$ be a distribution over $(\users+1)$-tuples of distributions and $\err$ be a loss function.  A pair of algorithms $(\alg_{\text{meta}}, \alg_{\text{pers}})$ \emph{metalearns $\metadist$ with error $\alpha$ using $\users$ training tasks, $\samples$ samples per training task, and a test task with $\samplesp$ personalization samples} if the following holds: 
    Let $(\dist_1,\dots,\dist_{\users+1})$ be a tuple of tasks drawn from $\metadist$.  Let $S_i \sim \dist_i^\samples$ for $i \in [\users]$ and give $S_1,\dots,S_\users$ to $\alg_{\text{meta}}$ to obtain a representation $h$.  Now give $h$ and $S_{\users+1} \sim \dist_{\users+1}^{\samplesp}$ to $\alg_{\text{pers}}$ to obtain a hypothesis $\hyp_{\users+1}$.  Then
    \[
      \mathbb{E}[\err(\dist_{\users+1},\hyp_{\users+1})] \leq \alpha  
    \]
    where the expectation is taking over the choice of tasks, the training datasets, the test dataset, and the randomness of the algorithms.
\end{defn}


\mypar{Privacy Requirements for Personalized Models.}
We consider a range of privacy requirements  based on \emph{differential privacy}, which we summarize here informally (see \S\ref{sec:privacy-defs} for precise definitions).  Roughly, an algorithm $\alg$ is differentially private if the distribution of its output is insensitive to changing one individual's data.  In our context, this means that if we have $S = (S_1,\dots,S_\users)$, where each $S_i$ is all the training data of a specific individual/task, and $S'$ which differs on the data for one individual, then $\alg(S)$ and $\alg(S')$ have nearly the same distribution.

Our  privacy requirements vary according to what we assume to be visible to an attacker. 
Some of these requirements are specific to one of the two learning objectives above, or assume particular structural constraints on the algorithms.  

The privacy requirement for metalearning is easiest to describe, because we assume that the representation $h$ is published, and thus require that $h$ does not reveal too much about any of the individuals' data.  Formally, this means simply that algorithm $\alg_{\text{meta}}$ is differentially private.

Multitask learning offers a richer space of possible privacy requirements, because the learning framework allows for each person $i$ to receive a different output $\hyp_i$.  In this context, the natural privacy requirement is \emph{joint differential privacy (JDP)}.  Here, we imagine that each individual $i$ is given only their own model $\hyp_i$.  For any non-trivial learning, $\hyp_i$ must depend on individual $i$'s dataset $S_i$.  However, we do not want an attacker who can observe other players' outputs to learn about individual $i$'s data set.
%
So, we require that for every individual $i$, the collection of $\users - 1$ models $\hyp_{-i}$ is differentially private as a function of the dataset $S_i$ belonging to individual $i$.

The above definition requires that $S_i$ is protected even if every other individual colludes and combines their models.  We can also consider a relaxation of this definition called \emph{1-out-of-$\users$ differential privacy} where we do not allow individuals to collude.  Here, we require that for every individual $i$, and for every other individual $i' \neq i$, the model $\hyp_{i'}$ given to individual $i'$ is differentially private as a function of the dataset $S_i$ belonging to individual $i$.

The majority of algorithms that satisfy joint differential privacy have a particular form, called a \emph{billboard algorithm}. This concept is general, but we will describe it in the context of multitask learning.  In a billboard algorithm, we decompose $\alg$ into two phases $\alg_{\text{BB}}$ and $\alg_{\text{pers}}$.  $\alg_{\text{BB}}$ takes the training data $S_1,\dots,S_\users$ and outputs a representation $h$, and then for each individual $i$, we give them (or they compute for themselves) the model $\hyp_i = \alg_{\text{pers}}(S_i, \rep)$.  So far we have described billboard algorithms as a constraint on the \emph{structure} of the algorithm.  However, conceptually, we think of the representation $h$ as being published, and thus publicly available, while the individual models $\hyp_i$ are computed secretly by each individual and not published.  Thus, when defining privacy for a billboard algorithm, we require that the algorithm $\alg_{\text{BB}}$ be differentially private. Any private billboard algorithm also satisfies joint DP.

Intuitively, there is a hierarchy of frameworks for private personalized learning: the easiest is multitask learning with $1$-out-of-$\users$-DP, then multitask learning with joint DP, then multitask learning with a private billboard algorithm, and finally the hardest is metalearning. 


\mypar{Technical Contributions.} Our technical contributions are a set of relationships between these different frameworks for private personalized learning, which we can now summarize:

\smallskip{\bfseries \em DP metalearning and DP billboard multitask learning are equivalent.} While metalearning can be much harder than multitask learning in general, we show that private multitask learning with billboard algorithms actually implies metalearning. In other words, we recover (a version of) the nonprivate equivalence of \cite{AliakbarpourBBSSU24} when the multitask learner is constrained to produce a publicly visible representation. However, the proof of the implication is quite different.  To gain some intuition for the argument, first observe that the  syntax of algorithms for the two settings is similar: in both cases, we look at all the training datasets to produce a private representation $h$ and then use an individual's data to specialize $h$ to a model in an arbitrary non-private way. The difference lies in which learning objective we expect this representation to satisfy. We prove the implication using the connection between  differential privacy and generalization~\cite{DworkFHPRR14,BassilyNSSSU16,LigettNRWW17,JungLNRSS20}.  Intuitively, in multitask learning, the billboard depends on the training tasks $S_1,\dots,S_\users$ and produces a representation that can be specialized to one of those tasks. However, by privacy, the representation would have almost the same distribution if we had inserted data $S_{\users+1}$ from a fresh training task. Hence, the representation can also be specialized to this unseen test task as well.  The formal result is stated and proved in Section~\ref{sec:metalearning}.

\smallskip{\bfseries \em Separating DP billboard multitask and DP metalearning from JDP multitask learning.} Our first main contribution is to show a separation between JDP multitask learning and billboard multitask learning for an estimation problem that we introduce.  Since our models satisfy a hierarchy from easier to hardest, this result implies a separation between JDP multitask learning and metalearning as well. Such a result has no nonprivate analogue since, absent a privacy constraint, multitask learning implies metalearning \cite{AliakbarpourBBSSU24}. Perhaps not surprisingly, the nonprivate equivalence is inherently privacy-violating---it uses the concatenation of the training data sets as the ``representation''.

We prove the separation 
using what we call the {\bfseries \em indexed mean estimation} problem.  Here, each training datum has the form $(x,j)$ where $x \in \pmo^d$ is a vector and $j \in [d]$ is an index of a coordinate in the vector.  For a task $\dist$, which is a distribution over pairs $(x,j)$, the goal is to output an estimate of the mean of the $j$-th coordinate $\mathbb{E}_{(x,j)\sim \dist}[x_j]$ low mean squared error.  In the multitask learning problem, we will consider $\users$-tuples of tasks with the following constraints: (1) For every individual task $\dist_i$, the marginal distribution of the vector $x$ is identical.  (2) For every individual task $\dist_i$, the marginal distribution of $j$ is deterministic. Under these assumptions, the $x$ part of the distribution has some common mean vector $p$ and and each individual has their own $j_{i}$, and they want an estimate of $p_{j_{i}}$.

Our separations show that DP billboard algorithms for this problem have much higher loss than general JDP algorithms, when the dimension $d$ is larger than the number $\users$ of individuals.  Intuitively, a JDP algorithm can simply give a private estimate of $p_{j_{i}}$ to each individual, which is obtained by averaging $x_{j_{i}}$ over all individuals and adding noise.  Since we only compute $\users$ values in total, the noise variance can be proportional to $\users$.  In contrast, a billboard DP algorithm cannot depend on the specific values $j_{i}$ held by each individual, and thus the billboard must contain enough information to estimate $p_{j}$ for \emph{most} coordinates $j$.  Since the billboard must estimate every coordinate, it must add noise proportional to $d$.  This last statement follows by adapting the  lower bounds of \cite{BunUV14,DworkSSUV15} for marginal estimation.


\smallskip{\bfseries \em Separating $1$-out-of-$\users$ DP from JDP.} We also use the indexed mean estimation problem to separate $1$-out-of-$\users$ and JDP.  In JDP, an attacker sees
estimates of $\users-1$ distinct coordinates of $p$, while in $1$-out-of-$\users$ DP, each individual gets just a single coordinate of $p$ so privacy intuitively requires much less noise.

Our results for indexed mean estimation are summarized in Table~\ref{tab:bound_est}, and stated formally in \Cref{sec:estimation}.

{\bfseries \em Separating DP billboard and JDP for classification.} The separation above applies for an estimation (unsupervised learning) problem.  We also extend our results to prove a separation for a binary classification (supervised learning) problem that we call {\em \bfseries indexed classification}.  In this problem, each task is a distribution over labeled examples $((x,j),y)$ where $x \in \pmo^d$, $j \in [d]$, and $y \in \pmo$.  Our goal is to produce a classifier that predicts $y$ while minimizing the excess classification error.  We  consider distributions in which $y$ is strongly correlated with $x_{j}$ but uncorrelated with other coordinates of $x$. We set these distributions up in such a way that finding a good classifier essentially requires estimating the mean of $x_{j}$.  From here, we proceed similarly to indexed mean estimation.  We give a careful argument that a DP billboard must estimate the mean vector of $x$, denoted $p$, but a JDP algorithm can get away with estimating the mean of a only small number of coordinates of $p$, and thereby introduce less noise.  

This argument is more complex than for indexed mean estimation, since a good billboard algorithm only directly implies good  estimates of the \emph{sign} of each coordinate of $p$. We overcome this by showing that even this simpler problem is hard under differential privacy, via a novel extension of the fingerprinting technique~\cite{DworkSSUV15, BunUV14j}.


Our results for indexed classification are summarized in Table~\ref{tab:bound_est}.  See Section~\ref{sec:classification} for formal statements.

\begin{table*}
    \centering
    \renewcommand{\arraystretch}{1.4}
    \begin{tabular}{|c|c|c||c|c|}
        \hline
        \multirow{2}{4cm}{\centering Privacy Requirement and Learning Objective} & \multicolumn{2}{c||}{Indexed Mean Estimation} &  \multicolumn{2}{c|}{Indexed Classification} \\
         & Upper Bound & 
        Lower Bound & Upper Bound & Lower Bound\\
        \hline
        Nonprivate baseline & \multicolumn{2}{c||}{$\frac{1}{\users}$} &  \multicolumn{2}{c|}{$\frac{1}{\sqrt{\users}}$} \\
        \hline
        1-out-of-$\users$ DP Multitask& $\frac{1}{\eps^2 \users^2} + \frac{1}{\users}$ & - &  $\frac{1}{\eps \users} + \frac{1}{\sqrt{\users}}$ & -\\
        \hline
         JDP Multitask & $\frac{1}{\eps^2\users} + \frac{1}{\users}$
         & $\frac{1}{\eps^2 \users}$  & $\frac{1}{\eps \sqrt{\users}} + \frac{1}{\sqrt{\users}}$ & $\frac{1}{\eps \sqrt{\users}}$\\
         \hline
         DP Billboard Multitask & 
         $\frac{\datadim }{\eps^2 \users^2} + \frac{1}{\users}$
         & $\frac{\datadim}{\eps^2 t^2}$ & $\frac{\sqrt{\datadim}}{\eps\users} + \frac{1}{\sqrt{\users}}$ & $\frac{\sqrt{\datadim}}{\eps\users}$\\
         \hline
         DP Metalearning & $\frac{\datadim}{\eps^2 \users^2} + \frac{1}{\users}$ & $\frac{\datadim}{\eps^2\users^2}$  & $\frac{\sqrt{\datadim}}{\eps \users} + \frac{1}{\sqrt{\users}}$ & $\frac{\sqrt{\datadim}}{\eps\users}$\\
         \hline
    \end{tabular}
    \caption{Asymptotic bounds on the squared-error for Indexed Mean Estimation and excess error for Indexed Classification (assuming $\users \ll \datadim$ and $\samples$ constant).  Here $(\eps,\delta)$ are the privacy parameters, and we assume $\eps \leq 1$ and $\delta = 1/\mathrm{poly}(t)$ for simplicity.} 
    \label{tab:bound_est}
\end{table*}

\subsection{Related Work}
\label{sec:related}

Without privacy constraints, there is a large body of literature on both multitask learning and metalearning, including related concepts or alternative names such as \emph{transfer learning}, \emph{learning to learn}, and \emph{few-shot learning}, which is too vast to survey here.  Another related learning framework is \emph{collaborative learning}~\cite{BlumHPQ17}, which considers multitask classification in a setting where each task has a different marginal distribution on features but there exists a single good labeling function for every task.

There is also related work looking at privacy for model personalization.  The most directly related work is that of  \cite{JainRSST21}, together with predecessors on private recommender systems (e.g. \cite{McSherryM09}), which fits into our framework of JDP multitask learning.  \cite{KricheneJSSTZ2023multi} consider a private multitask learning setting where each individual can contribute data to one task, as opposed to our framework in which tasks perfectly correspond to individuals. 

Private model personalization has some similarities to \emph{federated learning} (see the survey in~\cite{Kairouz+19}), in which the data for each task is stored on a different device, with devices coordinated by a central server, and we want the server to obtain a single good model while minimizing what they can learn about the training data.  Federated learning refers to a distributed system architecture and not a particular learning objective or privacy model. The objective may still be to solve a single learning problem, and there are not necessarily any privacy constraints at all. 

The concept of joint differential privacy was introduced in~\cite{KearnsPRU14}.  Our notion of $1$-out-of-$\users$ DP is also called \emph{marginal differential privacy}~\cite{KannanMRR18}.  The majority of JDP algorithms use the billboard model, which was first formally defined in~\cite{HsuHRRW14}, but was used implicitly in~\cite{McSherryM09,GuptaLMRT10}.

Our lower bound arguments are based on the fingerprinting methodology that was introduced in~\cite{Ullman13,BunUV14} and further refined in~\cite{DworkSSUV15}.  In particular we use (and prove extensions of) the so-called fingerprinting lemmas from~\cite{BunSU17,PeterTU24}.  See~\cite{Kamath2020primer} for a partial survey.

    
    


\section{Privacy Definitions}
\label{sec:privacy-defs}

We now present the formal definitions of privacy that we utilize throughout the paper. 
%
%
We present the definition of standard (central) differential privacy, which is the natural privacy notion for billboard algorithms and for metalearning, as both of these produce a single public output. We then consider privacy models beyond standard (central) DP.

\begin{defn}[Differential Privacy \cite{DworkMNS06, BunS16}]
    \label{def:differential-privacy}
    A randomized algorithm $\alg:\sampledom^\users \to \cO$ satisfies the following differential privacy notions at the person level if, for any pair of $D, D'\in \sampledom^\users$ that differ in one person's dataset,
    \begin{itemize}
        \item $(\eps, \delta)$-DP$:\forall E\subset \cO \quad \pr{}{\alg(D)\in E} \le e^\eps \pr{}{\alg(D')\in E}+\delta$.
        \item $\rho$-zCDP: $\forall \alpha\in(1,\infty), \renyi{\alg(D)}{\alg(D')} \le \rho \alpha$, 
    \end{itemize}
    where $\renyi{\cdot}{\cdot}$ denotes Rényi divergence of order $\alpha$.
\end{defn}

\begin{defn}[Rényi Divergence] For two probability distributions $P$ and $Q$ defined over the same domain, the Rényi divergence of order $\alpha>1$ is 
\begin{equation*}
    \renyi{P}{Q} = \frac{1}{\alpha - 1} \log \mathbb{E}_{x\sim Q} \left(\frac{P(x)}{Q(x)} \right)^\alpha.
\end{equation*}
\end{defn}

Intuitively, differential privacy protects against an adversary from inferring any person $i$'s input even if the adversary knows all the other peoples' inputs as well as the algorithm output. We will say that a billboard algorithm is differentially private if the output of the curator is differentially private.

One can convert between zCDP and DP using the following fact. 

\begin{fact}[Adapted from \cite{BunS16}]\label{fct:zcdp_to_approxDP}
    Let $\alg: \sampledom^{\users} \to \outdom$ satisfy $\rho$-zCDP, then $\alg$ satisfies $(\eps, \delta)$-DP for all $\delta >0$ and 
    \[
    \eps = \rho + 2\sqrt{\rho \log\left(\frac{1}{\delta}\right)}.
    \]
In other words,  for any two  parameters $\epsilon$ and $\delta > 0$, there exists an absolute constant $c$, such that if $\alg$ satisfies $c\cdot\left(\min\left(\epsilon, \frac{\epsilon^2}{\log(1/\delta)}\right)\right)$-zCDP,  then $\alg$ satisfies $(\eps, \delta)$-DP.
\end{fact}

Differentially private mechanisms enjoy a property called \emph{post-processing}. 

\begin{lem}[Post-processing]
For any $(\epsilon, \delta)$-DP algorithm $\alg$ and any (possibly randomized) function $g:\cO \to [0,1]$, for any $D$ and $D'$ that differ in the dataset of one person
\[
\E[g, \alg]{g(\alg(D))} \leq e^\epsilon \E[g, \alg]{g(\alg(D'))} + \delta.
\]
\label{lem:epsdelexp_mult}
\end{lem}

One common way of achieving differential privacy is by using the Gaussian mechanism, which adds noise to the mechanism output calibrated to the \emph{sensitivity} of the mechanism.

\begin{defn}[Sensitivity] A function $q: \cX^n \to \R^d$ has sensitivity $\Delta$ if for all $x, x' \in \cX^n$ differing in a single entry, we have $\| q(x) - q(x')\|_2 \leq \Delta$.
    
\end{defn}
\begin{fact}[Gaussian Mechanism for zCDP \cite{BunS16}]\label{fct:guassianMec}
Let $q: \cX^n \to \R^d$ be a sensitivity-$\Delta$ query. Consider the mechanism $M: \cX^n \to \R^d$ that on input $x$, releases a sample from $\mathcal{N}(q(x), \mathbb{I}_d \sigma^2)$. Then, $M$ satisfies $(\Delta^2/2\sigma^2)$-zCDP.
\end{fact}

Next we consider different models of privacy for algorithms that may produce \emph{different outputs} for different people. The definitions protect against an adversary who is trying to infer the input of some person $i$ and who gets all other $\users -1$ inputs as well as some outputs of the mechanism. We consider other privacy requirements that vary according to what we assume to be visible to an attacker (corresponding to weaker or stronger privacy guarantees).



\mypar{Joint DP}
First, we define Joint DP, where the adversary gets all outputs \emph{except the secret person's output}. 

\begin{defn}[Joint Differential Privacy \cite{KearnsPRU14}]
     A randomized algorithm $\alg \from \sampledom^{\users} \to \outdom^{\users}$
     satisfies $(\eps, \delta)$-JDP if for all people $i \in [\users]$, all pairs of neighboring datasets for person $i$ $S_i, S_i' \in \sampledom$, all datasets for everyone else $S_{-i} \in \sampledom^{\users-1}$, and all $E\subset \cO^{\users-1}$, 
     \(
        \pr{}{\alg(S_i;S_{-i})_{-i}\in E} \le e^\eps \pr{}{\alg(S_i';S_{-i})_{-i}\in E}+\delta
     \)
     If the analogous definition holds with
     \(
        \renyi{\alg(S_i;S_{-i})_{-i}}{\alg(S_i';S_{-i})_{-i}} \le \rho \alpha.
     \)
     instead, then $\alg$ is $\rho$-JCDP.
\end{defn}

\mypar{1-out-of-\users DP}
We also consider a weaker privacy requirement in which the adversary only sees one person's output (not the secret person's). We call this 1-out-of-\users differential privacy (instead of marginal DP \cite{KannanMRR18}).




\begin{defn}[1-out-of-t Differential Privacy]
\label{def:1ot_privacy}
    A randomized algorithm $\alg \from \sampledom^{\users} \to \outdom^{\users}$
     satisfies $(\eps, \delta)$-1-out-of-$\users$-DP (resp. $\rho$-1-out-of-$\users$-zCDP) if for all pairs of distinct people $i,j \in [\users]$, all pairs of neighboring datasets for person $i$ $S_i, S_i' \in \sampledom$, all datasets for everyone else $S_{-i} \in \sampledom^{\users-1}$, and all $E\subset \cO$, \[\pr{}{\alg(S_i;S_{-i})_j\in E} \le e^\eps \pr{}{\alg(S_i';S_{-i})_j\in E}+\delta\]
     Resp.\ $\renyi{\alg(S_i;S_{-i})_j}{\alg(S_i';S_{-i})_j} \le \rho \alpha$  $\forall \alpha \in (1,\infty)$.
\end{defn}

\section{Billboard Multitask Learning Implies 
Metalearning}
\label{sec:metalearning}

In this section, we present a general reduction from multitask learning to metalearning. 
Suppose $\distfam^{(\users)}$ denotes the set of all $\users$-tuples that an algorithm $\MM$ multitask learns. Without loss of generality, we assume that $\distfam^{(\users)}$ is closed under permutation. For any tuple in $\distfam^{(\users)}$, an unseen task could be a similar one that can be swapped with any of the $\users$ tasks, and the new tuple would still remain in the domain. We can apply metalearning to this set of tasks. We first define a notion called \emph{exchangeable distributions}.

\begin{defn}[Exchangeable distributions] A $\users$-tuple of distributions $(\dist_1, \ldots, \dist_\users)\sim \metadist$ is exchangeable with respect to $\metadist$ if for every permutation $\pi: [\users] \to [\users]$, the tuple $(\dist_{\pi(1)}, \ldots, \dist_{\pi(\users)})$ is equally likely to occur in $\metadist$ as the unpermuted one. 
When the metadistribution is clear from the context, we simply refer to the tuple as exchangeable.
\end{defn}

We consider a distribution $\metadist$ over \emph{exchangeable tuples} of length $\users + 1$. Let $\metadist^{(t)}$ be the marginal distribution over $\users$-tuples of distributions obtained by taking the first $\users$ distributions from a $(\users + 1)$-tuple drawn from $\metadist$. If $\distfam^{(t)}$ contains all the $\users$-tuples of distributions that are in the support of $\metadist^{(t)}$, then $\MM$ can metalearn as well. We formalize this argument in the following theorem.

\newcommand{\thmMetaToMulti}{
Fix parameters $\alpha > 0$, $\epsilon > 0$, $\delta \in (0,1)$, $\users \in \mathbb{N}$, $\samples \in \mathbb{N}$, a loss function $\err$ taking values in $[0,1]$, and a distribution $\metadist$ over $(\users + 1)$-tuples of exchangeable distributions. Assume $\distfam^{(t)}$ is a set of $\users$-tuples of distributions that contains all the $\users$-tuples of distributions in the support of $\metadist^{(t)}$ (as defined above). Let $\alg = (\alg_{\text{BB}}, \alg_{\text{pers}})$ be an $(\epsilon, \delta)$-DP billboard algorithm. If $\alg$ multitask learns  $\distfam^{(t)}$ with $\users$ tasks and $\samples$ samples per task 
with expected error $\alpha$, 
then $\alg$ also metalearns $\metadist$ with $\users$ training tasks, $\samples$ samples per training task, and $\samples$ personalization samples, 
with expected error at most 
$e^\epsilon \alpha + \delta$.
}

\begin{thm}
    \thmMetaToMulti
    \label{thm:metatomulti}
\end{thm}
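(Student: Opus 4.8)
The plan is to establish the implication by a standard ``privacy implies generalization'' (leave-one-out) argument; the only facts used are the exchangeability of $\metadist$ and the $(\epsilon,\delta)$-differential privacy of $\alg_{\text{BB}}$ together with post-processing (\Cref{lem:epsdelexp_mult}). Draw $(\dist_1,\dots,\dist_{\users+1}) \sim \metadist$ and then $S_j \sim \dist_j^{\samples}$ independently for every $j \in [\users+1]$; note that the personalization dataset $S_{\users+1}$ has the same size $\samples$ as the training datasets, which is exactly why the theorem takes $\samplesp = \samples$. In the metalearning execution the curator outputs $h = \alg_{\text{BB}}(S_1,\dots,S_\users)$ and then $\hyp_{\users+1} = \alg_{\text{pers}}(S_{\users+1}, h)$; by \Cref{def:metalearning} we must show $\mathbb{E}[\err(\dist_{\users+1},\hyp_{\users+1})] \le e^{\epsilon}\alpha + \delta$. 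Meanwhile, viewing the same $\alg$ as a multitask learner on the first $\users$ tasks produces $\hyp_i = \alg_{\text{pers}}(S_i, h)$, and since every tuple in the support of $\metadist^{(\users)}$ lies in $\distfam^{(\users)}$, \Cref{def:multitask_learning} gives $\mathbb{E}\big[\tfrac{1}{\users}\sum_{i\in[\users]} \err(\dist_i,\hyp_i)\big] \le \alpha$.

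The first step is to rewrite the test-task error as an average over the training tasks using exchangeability. Because $\metadist$ is supported on exchangeable $(\users+1)$-tuples and, conditioned on the tuple of task distributions, the datasets are drawn i.i.d.\ with $S_j \sim \dist_j^{\samples}$, the joint law of $\big((\dist_j)_{j\in[\users+1]},(S_j)_{j\in[\users+1]}\big)$ is invariant under applying any permutation of $[\users+1]$ simultaneously to the tasks and their datasets. Write $G_i := \err\big(\dist_i,\, \alg_{\text{pers}}(S_i,\, \alg_{\text{BB}}(S_1,\dots,S_{i-1},S_{\users+1},S_{i+1},\dots,S_\users))\big)$ for the error person $i$ would incur if the $(\users+1)$-st task's dataset were substituted into slot $i$ of the billboard (while person $i$ still personalizes with its own $S_i$). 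Applying the transposition that swaps coordinates $i$ and $\users+1$ shows, for every $i\in[\users]$,
\[
\mathbb{E}\big[\err(\dist_{\users+1},\hyp_{\users+1})\big] \;=\; \mathbb{E}[G_i], \qquad\text{hence}\qquad \mathbb{E}\big[\err(\dist_{\users+1},\hyp_{\users+1})\big] \;=\; \frac{1}{\users}\sum_{i\in[\users]} \mathbb{E}[G_i].
\]

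The second step compares $G_i$ with the $i$-th multitask term $\err(\dist_i,\hyp_i)$. These quantities differ only in that $\alg_{\text{pers}}(S_i,\cdot)$ is applied to $\alg_{\text{BB}}(D)$ versus $\alg_{\text{BB}}(D')$, where $D = (S_1,\dots,S_{i-1},S_{\users+1},S_{i+1},\dots,S_\users)$ and $D'=(S_1,\dots,S_\users)$ are neighboring databases: they differ only in the dataset held by person $i$ (namely $S_{\users+1}$ vs.\ $S_i$). Conditioning on all task distributions, all datasets, and the internal randomness of $\alg_{\text{pers}}$, the map $z\mapsto \err(\dist_i,\alg_{\text{pers}}(S_i,z))$ is a fixed $[0,1]$-valued function of the billboard, so post-processing (\Cref{lem:epsdelexp_mult}) applied to the $(\epsilon,\delta)$-DP algorithm $\alg_{\text{BB}}$ on the neighboring pair $(D,D')$ gives $\mathbb{E}[G_i] \le e^{\epsilon}\,\mathbb{E}[\err(\dist_i,\hyp_i)] + \delta$ after integrating the conditioning back out. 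Averaging over $i\in[\users]$ and invoking the multitask guarantee yields
\[
\mathbb{E}\big[\err(\dist_{\users+1},\hyp_{\users+1})\big] \;=\; \frac{1}{\users}\sum_{i\in[\users]}\mathbb{E}[G_i] \;\le\; e^{\epsilon}\cdot\frac{1}{\users}\sum_{i\in[\users]}\mathbb{E}[\err(\dist_i,\hyp_i)] + \delta \;\le\; e^{\epsilon}\alpha + \delta,
\]
which is the claim.

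The step I expect to require the most care is the conditioning in the post-processing argument: the personalization dataset $S_i$ occurs both as an argument of $\alg_{\text{pers}}$ and (in $\hyp_i$, but not in $G_i$) as an input to $\alg_{\text{BB}}$, so one must condition on the full data and on $\alg_{\text{pers}}$'s randomness before invoking \Cref{lem:epsdelexp_mult}, so that the only remaining randomness is that of $\alg_{\text{BB}}$ and the two databases it is run on are genuinely a neighboring pair. A second, smaller point is that exchangeability is only assumed for the distribution $\metadist$ over tuples of task distributions; it lifts to exchangeability of the $(\dist_j,S_j)$ pairs precisely because, conditioned on the tasks, the per-task datasets are i.i.d.\ and of equal size, and this lift is what licenses the coordinate transposition in the first step.
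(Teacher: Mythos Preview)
Your proof is correct and follows essentially the same approach as the paper's: both use exchangeability to rewrite the test-task error as an average of ``swapped-out'' training errors, then apply the $(\epsilon,\delta)$-DP post-processing bound (\Cref{lem:epsdelexp_mult}) term-by-term on neighboring billboard inputs, and finally invoke the multitask guarantee. Your $G_i$ is exactly the paper's $\err(\dist_i,\alg_{\text{pers}}(S_i,\alg_{\text{BB}}(D_{-i})))$, and your careful remarks on conditioning match the paper's implicit fixing of $\dist_i,S_i$ when defining the post-processing function $g_i$.
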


 \begin{proof}We focus on the error of $\alg$ in the metalearning framework. As we have described the error in Definition~\ref{def:metalearning}, the error of this algorithm corresponds to its performance on the $\users+1$-st distribution. More precisely, we have: 
 \[
 \ex{\substack{\alg_{\text{pers}}, \alg_{\text{BB}}, \\ (\dist_1, \ldots, \dist_{\users+1})\sim \metadist,\\ S_1 \sim \dist_1^\samples, \ldots,S_\users \sim \dist_{\users}^\samples, S_{\users+1}\sim \dist_{\users+1}^{\samples}
        }}{ \err(\dist_{\users+1}, \alg_{\text{pers}}(S_{\users+1}, \alg_{\text{BB}}(S_1, \ldots, S_{\users}))}.
 \]

 We know that the distributions in the tuple that we draw from $\metadist$ are exchangeable, so the distribution of $(\users+1)$-tuples of distributions remains identical when we swap $\dist_{\users+1}$ with any $\dist_i$. 
 Therefore, in the metalearning error of $\alg$ we can swap $\dist_{\users+1}, S_{\users+1}$ with any $\dist_i, S_i$ for $i \in [\users]$. Let $D$ be the datasets of the first $\users$ people, $D = (S_1, \ldots, S_{\users})$, and $D_{-i}$ be the $\users$ datasets from $D$ after we swap the dataset of person $i$, $S_i$, with the dataset of person $\users+1$, $S_{\users+1}$. Thus, we have that 
 \begin{align*}
 &\ex{\substack{\alg_{\text{pers}}, \alg_{\text{BB}}, \\ (\dist_1, \ldots, \dist_{\users+1})\sim \metadist,\\ S_1 \sim \dist_1^\samples, \ldots,S_\users \sim \dist_{\users}^\samples, S_{\users+1}\sim \dist_{\users+1}^{\samples}
        }}{ \err(\dist_{\users+1}, \alg_{\text{pers}}(S_{\users+1}, \alg_{\text{BB}}(D))} =
        \frac{1}{\users} \sum_{i \in [\users]} \ex{\substack{\alg_{\text{pers}}, \alg_{\text{BB}}, \\ (\dist_1, \ldots, \dist_{\users+1})\sim \metadist,\\ S_1 \sim \dist_1^\samples, \ldots,S_\users \sim \dist_{\users}^\samples, S_{\users+1}\sim \dist_{\users+1}^{\samples}
        }}{ \err(\dist_{i}, \alg_{\text{pers}}(S_{i}, \alg_{\text{BB}}(D_{-i}))}.
 \end{align*}
The error term in the equation above is analogous to the error term in the multitask learning algorithm, with one key distinction. In multitask learning, the $i$-th error term is computed when $S_i$ is included in the dataset provided to $\alg_{\text{BB}}$. We demonstrate that these two terms are closely related due to the privacy properties of the server's algorithm.

For every individual $i \in [\users]$, $D$ and $D_{-i}$ are neighboring datasets that differ only by the replacement of one person’s data. Consequently, the privacy of $\alg_{\text{BB}}$ ensures that $\alg_{\text{BB}}(D)$ and $\alg_{\text{BB}}(D_{-i})$ are nearly indistinguishable in distribution. It is well established that post-processing does not alleviate this indistinguishability. That is, for any function applied to $\alg_{\text{BB}}(D)$ and $\alg_{\text{BB}}(D_{-i})$, the resulting outputs remain indistinguishable. Specifically, for each individual $i \in [\users]$, given a fixed distribution $\dist_i$ and dataset $S_i$, let $g_i(b)$ be a post-processing function.
 \begin{align}
     g_i(b) = \Exp_{
     \substack{
    \alg_{\text{pers}}}}
     \left[\err(\dist_i, \alg_{\text{pers}}(S_i,b)) \right].
 \end{align}
 We apply \Cref{lem:epsdelexp_mult} for function $g_i$ and obtain that

 \begin{align*}
     \ex{\substack{\alg_{\text{pers}}, \alg_{\text{BB}}, \\ (\dist_1, \ldots, \dist_{\users+1})\sim \metadist,\\ S_1 \sim \dist_1^\samples, \ldots,S_\users \sim \dist_{\users}^\samples, S_{\users+1}\sim \dist_{\users+1}^{\samples}
        }}{ \err(\dist_{i}, \alg_{\text{pers}}(S_{i}, \alg_{\text{BB}}(D_{-i}))} 
        \leq e^{\eps} \cdot \ex{\substack{\alg_{\text{pers}}, \alg_{\text{BB}}, \\ (\dist_1, \ldots, \dist_{\users+1})\sim \metadist,\\ S_1 \sim \dist_1^\samples, \ldots,S_\users \sim \dist_{\users}^\samples, S_{\users+1}\sim \dist_{\users+1}^{\samples}
        }}{ \err(\dist_{i}, \alg_{\text{pers}}(S_{i}, \alg_{\text{BB}}(D))}+\delta
 \end{align*}

 Therefore, the error of $\alg$ in the metalearning setting is bounded by
 \begin{align*}
     \frac{1}{\users} \sum_{i \in [\users]} \ex{\substack{\alg_{\text{pers}}, \alg_{\text{BB}}, \\ (\dist_1, \ldots, \dist_{\users+1})\sim \metadist,\\ S_1 \sim \dist_1^\samples, \ldots,S_\users \sim \dist_{\users}^\samples, S_{\users+1}\sim \dist_{\users+1}^{\samples}
        }}{ \err(\dist_{i}, \alg_{\text{pers}}(S_{i}, \alg_{\text{BB}}(D_{-i}))} \leq
        e^{\eps}  \frac{1}{\users}\sum_{i \in [\users]} \ex{\substack{\alg_{\text{pers}}, \alg_{\text{BB}},\\(\dist_1, \ldots, \dist_{\users+1})\sim \metadist,\\S_1 \sim \dist_1^\samples, \ldots, S_\users \sim \dist_\users^\samples }}{\err(\dist_i, \alg_{\text{pers}}(S_i, \alg_{\text{BB}}(S_1, \ldots, S_\users)))} + \delta.
 \end{align*}
In the second line, the billboard message is computed using data drawn from tasks $1$ to $\users$. This equation captures the multitask learning error for the first $\users$ tasks drawn from $\taskdistr$. Alternatively, this tuple is drawn from $\taskdistr^{(\users)}$, making it a member of $\distfam^{(t)}$. Given the assumptions of the theorem—that $\alg$ is a billboard algorithm that multitask learns $\distfam^{(t)}$ with an error bound of $\alpha$—it follows that for all $(\dist_1, \ldots, \dist_\users) \in \distfam^{(t)}$, we have the following:
\begin{align}
       \frac{1}{\users}\sum_{i \in [\users]} \ex{\substack{\alg_{\text{pers}}, \alg_{\text{BB}}\\S_1 \sim \dist_1^\samples, \ldots, S_\users \sim \dist_\users^\samples }}{\err(\dist_i, \alg_{\text{pers}}(S_i, \alg_{\text{BB}}(S_1, \ldots, S_\users)))} \leq \alpha.
   \end{align}

 Combining the inequalities above we obtain that 
 \begin{align*}
     \ex{\substack{\alg_{\text{pers}}, \alg_{\text{BB}}, \\ (\dist_1, \ldots, \dist_{\users+1})\sim \metadist,\\ S_1 \sim \dist_1^\samples, \ldots,S_\users \sim \dist_{\users}^\samples, S_{\users+1}\sim \dist_{\users+1}^{\samples}
        }}{ \err(\dist_{\users+1}, \alg_{\text{pers}}(S_{\users+1}, \alg_{\text{BB}}(S_1, \ldots, S_{\users}))} \leq e^{\eps} \alpha + \delta.
 \end{align*}
 Thus, the proof is complete. 
 \end{proof}
\section{Indexed Mean Estimation}
\label{sec:estimation}
In this section we showcase how different frameworks for private personalization impact the error achieved by a learning algorithm in an estimation problem. 

\paragraph{Indexed Mean Estimation} 
For simplicity, we first describe the multitask learning version of the estimation problem we focus on. Every person $i$ in $[\users]$ gets $\samples$ samples drawn from distribution $\dist_X$ over domain $\{\pm 1\}^\datadim$ and an index $j_i \in [\datadim]$. $\dist_X$ is a product of $d$ independent distributions over $\{\pm 1\}$, with mean $\mathbf{p} = (p_1, \ldots, p_\datadim )$ in $[-1,+1]^\datadim$. Person $i$ aims to find an estimate, $\hat p_{j_i}$, for the true mean of the $j_i$-th coordinate in distribution $\dist_X$, denoted by $p_{j_i}$. Their loss function is defined as the squared difference between the mean estimate for coordinate $j_i$ and its true mean:
\[
\ell_{\text{mean}}(\dist_X, \hat{p}_{j_i} ) \coloneqq\tfrac 1 4 (\hat{p}_{j_i} - p_{j_i})^2,
\]
(where the $\tfrac 1 4$ factor keeps the loss in $[0,1]$).

To cast this problem in the language of multitask learning, as in \Cref{def:multitask_learning}, we can think of 
the equivalent setting where the input of each person comes from a distribution $\dist_i$. Instead of assuming that each person has $n$ i.i.d.\ samples from a shared $\dist_X$ and an index $j_i$, we define $\dist_i$ to be the Cartesian product of $\dist_X$ and a singleton distribution on $\{j_i\}$. In this new setting, the input of person $i$ is $\samples$ samples drawn from $\dist_i$. 

 We define the domain of $\users$-tuples of distributions for $\users$ people that use the same feature distribution $\dist_X$, but potentially different indices as the following 
\begin{equation}
    \label{eq:distfam_est}
\distfam_{\text{est},\datadim,\users} \coloneqq \{(\dist_1, \ldots, \dist_\users)\mid \dist_i \coloneqq \dist_X(\mathbf{p})\otimes \Det(j_i)
\forall i \in [\users],
 \forall \mathbf{p} \in [-1,+1]^\datadim, \forall j_1, \ldots,j_\users  \in [\datadim]^\users \}\,,
\end{equation}
where $\dist_X(\mathbf{p})$ is the distribution $\dist_X$ with mean $\mathbf{p}$ and $\Det(j_i)$  deterministically returns $j_i$.




\subsection{Upper bounds}

We establish upper bounds for indexed mean estimation in the multitask learning frameworks.
All three bounds result from applying the Gaussian mechanism to the empirical means with variance adjusted  according to the privacy requirements. 


\newcommand{\lemMeanEstNpUB}{ For any $\datadim \in \mathbb{N}$ and $\users \in \mathbb{N}$, and loss function $\ell_{\text{mean}}$, there exists an algorithm that multitask learns $\distfam_{\text{est}, \datadim, \users}$ with error $1/4t$ with $\users$ tasks and one sample per task .}

\begin{prop}[Nonprivate upper bound]
   \lemMeanEstNpUB
   \label{prop:meanestnpub}
\end{prop}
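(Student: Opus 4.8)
The plan is to use the obvious estimator: since every person's feature distribution is the same product distribution $\dist_X(\mathbf p)$, and person $i$ only cares about coordinate $j_i$, we pool the samples. With $n=1$ sample per task, person $i$ holds a single vector $x^{(i)} \in \pmo^d$ together with index $j_i$. Define, for each coordinate $k \in [d]$, the empirical mean $\hat p_k = \frac{1}{\users}\sum_{i\in[\users]} x^{(i)}_k$, and output $\hat p_{j_i}$ to person $i$. This is a valid (nonprivate) multitask learning algorithm for $\distfam_{\text{est},d,\users}$: it takes the datasets $S_1,\dots,S_\users$ and returns a hypothesis (here just a scalar estimate) for each task.

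The key step is the error computation. Fix any $(\dist_1,\dots,\dist_\users)\in\distfam_{\text{est},d,\users}$, so there is a common mean vector $\mathbf p\in[-1,1]^d$ and indices $j_1,\dots,j_\users$. For a fixed coordinate $k$, the quantities $x^{(1)}_k,\dots,x^{(\users)}_k$ are i.i.d.\ $\pmo$-valued with mean $p_k$, hence $\Var(x^{(i)}_k) = 1-p_k^2 \le 1$. Therefore $\ex{}{(\hat p_k - p_k)^2} = \Var(\hat p_k) = \frac{1}{\users^2}\sum_{i\in[\users]}\Var(x^{(i)}_k) = \frac{1-p_k^2}{\users} \le \frac{1}{\users}$. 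Averaging the loss over the $\users$ people,
\[
\mathbb{E}\left[\frac{1}{\users}\sum_{i\in[\users]} \ell_{\text{mean}}(\dist_X,\hat p_{j_i})\right]
= \frac{1}{\users}\sum_{i\in[\users]} \frac14\,\ex{}{(\hat p_{j_i} - p_{j_i})^2}
\le \frac{1}{\users}\sum_{i\in[\users]} \frac{1}{4\users} = \frac{1}{4\users}.
\]
Since this holds for every tuple in $\distfam_{\text{est},d,\users}$, the algorithm multitask learns with error $1/4\users$, as claimed.

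There is essentially no obstacle here — this is a routine variance bound for an empirical mean of bounded random variables, and the only mild subtlety is bookkeeping: confirming that the per-coordinate variance is at most $1$ uniformly in $p_k$ (it equals $1-p_k^2$), that the independence across the $\users$ people is exactly what Definition~\ref{def:multitask_learning} supplies (each $S_i$ drawn independently), and that the $\tfrac14$ normalization in $\ell_{\text{mean}}$ lines up with the claimed bound. One could even remark that the bound is tight up to constants and matches the nonprivate baseline $1/\users$ in Table~\ref{tab:bound_est}.
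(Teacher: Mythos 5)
Your proposal is correct and matches the paper's proof essentially verbatim: both use the pooled empirical mean $\bar p = \frac{1}{\users}\sum_i x^{(i)}$, reduce the squared error to a variance, expand by independence across the $\users$ people, and bound the per-coordinate variance by $1$ (the paper notes $(x^{(\ell)}_j)^2 = 1$ while you write $\Var(x^{(i)}_k)=1-p_k^2\le 1$, which is the same observation). No differences worth noting.
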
 
\begin{proof}

The algorithm takes as input one sample $(x^{(i)},j^{(i)})$ drawn from $\dist_i$ for every person $i \in [\users]$ and then computes the empirical mean $\bar{p} \gets \frac{1}{t} \sum_{i\in[t]} x^{(i)}$. Note that for all $\ell \in [t]$ and $j \in [\datadim]$, $x^{(\ell)}_j$ is independent of the other coordinates and $(x^{(\ell)}_j)^2 = 1$. Therefore, the error is 
\begin{align*}
    &\frac{1}{4\users}\sum_{i \in [\users]}\ex{}{\left({\bar{p}_{j_i}-p_{j_i}}\right)^2} 
    = \frac{1}{4\users}\sum_{i \in [\users]} \var{}{\bar{p}_{j_i}-p_{j_i}}
    =\frac{1}{4\users}\sum_{i \in [\users]} \var{}{\frac{1}{t} \sum_{\ell\in[t]} x^{(\ell)}_{j_i}-p_{j_i}}\\
    & = \frac{1}{4\users}\sum_{i \in [\users]} \frac{1}{\users^2} \sum_{\ell \in [\users]}\var{}{x^{(\ell)}_{j_i}} \leq  \frac{1}{4t}
    \,.
\end{align*}
\end{proof}

For our least stringent privacy setting, $1$-out-of-\users privacy, we incur only an additional $O(1/\users^2)$ additive factor to the error above the nonprivate baseline.

\newcommand{\thmMeanEstoootUB}{ 
For any  $\rho \geq 0$, $d \in \mathbb{N}$, and $t \in \mathbb{N}$, and loss function $\ell_{\text{mean}}$, there exists an algorithm that multitask learns $\distfam_{\text{est}, \datadim, \users}$ with error \[\frac{1}{2\rho t^2} + \frac{1}{4t}\] with $\users$ tasks and one sample per task and satisfies $\rho$-1-out-of-$\users$-zCDP. Furthermore, for any  $\epsilon > 0$, $\delta \in (0,1)$, $d \in \mathbb{N}$, and $t \in \mathbb{N}$, there exists an algorithm that multitask learns $\distfam_{\text{est}, \datadim, \users}$ with error $\Theta( \min(\epsilon, {\epsilon^2}/{\log(1/\delta)})^{-1})\cdot \frac{1}{t^2} + \frac{1}{4t}$ with $\users$ tasks and one sample per task, and satisfies $(\epsilon, \delta)$-1-out-of-$\users$-DP.}

\begin{thm}[$1$-out-of-$t$ upper bound]
   \thmMeanEstoootUB
   \label{thm:meanestoootub}
\end{thm}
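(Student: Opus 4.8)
The plan is to use the natural ``noisy empirical mean'' mechanism and to exploit the fact that $1$-out-of-$\users$ DP only needs to hide, from each individual's output, a change to a \emph{single} other person's sample. Concretely: on input one sample $(x^{(1)},j_1),\dots,(x^{(\users)},j_\users)$ per task, the curator draws independent $Z_1,\dots,Z_\users\sim\mathcal N(0,\sigma^2)$ with $\sigma^2 = 2/(\rho\users^2)$, and sends person $i$ the value $\hat p^{(i)} = \Pi_{[-1,1]}\!\big(\tfrac1\users\sum_{\ell\in[\users]} x^{(\ell)}_{j_i} + Z_i\big)$, where $\Pi_{[-1,1]}$ denotes clipping to $[-1,1]$ (only needed so the loss literally stays in $[0,1]$). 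For the $(\epsilon,\delta)$-DP variant I would run the same mechanism with $\rho$ set to $c\cdot\min(\epsilon,\epsilon^2/\log(1/\delta))$.

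For privacy, fix distinct $i,j\in[\users]$ and a replacement of person $i$'s dataset $S_i$ by $S_i'$. Person $j$'s output depends on $S_i$ only through the single bit $x^{(i)}_{j_j}\in\{\pm1\}$ (the index $j_i$ is irrelevant to person $j$), so the pre-noise scalar $\tfrac1\users\sum_\ell x^{(\ell)}_{j_j}$ has $\ell_2$-sensitivity at most $2/\users$ as a function of $S_i$. By the Gaussian mechanism (Fact~\ref{fct:guassianMec}), adding $\mathcal N(0,\sigma^2)$ noise makes person $j$'s output $\tfrac{(2/\users)^2}{2\sigma^2}=\rho$-zCDP with respect to $S_i$; clipping is post-processing and preserves zCDP, so the mechanism is $\rho$-$1$-out-of-$\users$-zCDP. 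For the approximate-DP statement, apply the zCDP-to-$(\epsilon,\delta)$-DP conversion (Fact~\ref{fct:zcdp_to_approxDP}) to each pair of distributions $\big(\alg(S_i;S_{-i})_j,\ \alg(S_i';S_{-i})_j\big)$.

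For accuracy, since $p_{j_i}\in[-1,1]$, clipping toward $p_{j_i}$ only decreases the squared error, so it suffices to bound $\mathbb E\big[(\tfrac1\users\sum_\ell x^{(\ell)}_{j_i}+Z_i-p_{j_i})^2\big]$. As $Z_i$ is independent of the samples and mean zero and $\mathbb E\big[\tfrac1\users\sum_\ell x^{(\ell)}_{j_i}\big]=p_{j_i}$, this equals $\mathrm{Var}\big(\tfrac1\users\sum_\ell x^{(\ell)}_{j_i}\big)+\sigma^2\le\tfrac1\users+\tfrac2{\rho\users^2}$, using $\mathrm{Var}(x^{(\ell)}_{j_i})=1-p_{j_i}^2\le1$ and independence of the $\users$ samples. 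Multiplying by $\tfrac14$ and averaging over $i\in[\users]$ gives expected error $\tfrac{1}{2\rho\users^2}+\tfrac1{4\users}$; substituting $\rho=c\min(\epsilon,\epsilon^2/\log(1/\delta))$ into the same line gives $\Theta\!\big(\min(\epsilon,\epsilon^2/\log(1/\delta))^{-1}\big)\cdot\tfrac1{\users^2}+\tfrac1{4\users}$.

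There is no substantive obstacle; the only thing to get right is the sensitivity bookkeeping, and it is worth flagging why: the $\users^{-2}$ (rather than $\users^{-1}$) scaling in the error is exactly because, under $1$-out-of-$\users$ DP, each released scalar must absorb noise calibrated to a \emph{single} neighbor's change, whose sensitivity is a factor $\users$ smaller than what joint DP would force. I would also note that every estimate above is uniform over all $\mathbf p\in[-1,+1]^\datadim$ and all index tuples $(j_1,\dots,j_\users)$, so the bound certifies multitask learning of the whole family $\distfam_{\text{est},\datadim,\users}$ as required by \Cref{def:multitask_learning}.
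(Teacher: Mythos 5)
Your proposal matches the paper's proof essentially step for step: same noisy-empirical-mean mechanism, same $2/\users$ sensitivity calculation per neighbor, same Gaussian-mechanism zCDP accounting, same variance decomposition yielding $\tfrac{1}{4\users}+\tfrac{1}{2\rho\users^2}$, and the same conversion to $(\epsilon,\delta)$-DP. The only difference is your clipping to $[-1,1]$, which the paper omits; this is a small rigor improvement that ensures the loss literally lies in $[0,1]$ and does not change the argument.
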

\begin{proof} 
    We propose the following algorithm, which begins with the empirical mean and produces a private version of the mean for the coordinate of each person $i \in [\users]$ using the Gaussian mechanism. More precisely, the personalized output for person $i$ with index $j_i$ is
    \begin{align*}
        &\hat{p}_{j_i}\gets \bar{p}_{j_i} + Z_{j_i}, 
    \end{align*} 
    where $\bar{p}_{j_i} \gets \frac{1}{t}\sum_{k \in [\users]} x^{(k)}_{j_i}$ and $Z_{j_i}\sim \cN\left(0, \frac{2}{\rho t^2}\right)$.
    
    To show that this algorithm satisfies $\rho$-$1$-out-of-$\users$-zCDP, fix a pair of distinct people $i \neq j$ and focus on the personalized outputs received by person $k$ while replacing the input dataset of person $i$ with a neighboring one. Since each person has a dataset of only one sample (i.e., $\samples = 1$), two neighboring datasets correspond to having two different samples. Now, if we change the sample of person $i$, the sensitivity of the mean at index $j_k$, corresponding to the output of person $k$, is bounded by
    
    \[\abs{\bar{p}_{j_k}(D)-\bar{p}_{j_k}(D')}\le \frac{2}{t}.\]
    Thus, using Gaussian mechanism as outlined in Fact~\ref{fct:guassianMec}, $\hat{p}_{j_k}$ satisfy $\rho$--zCDP for all distinct $i$ and $k$. Therefore, our algorithm satisfies $\rho$-$1$-out-of-$\users$-zCDP. 
    
    Next, we analyze the error of our estimates. Since the Gaussian noise and the sampled data are independent, we have 
    \begin{align*}
        &\frac{1}{4\users}\sum_{i \in [\users]}\ex{}{\left(\hat{p}_{j_i} - p_{j_i}\right)^2} = \frac{1}{4\users}\sum_{i \in [\users]} \var{}{\hat{p}_{j_i}-p_{j_i}} \\
        &= \frac{1}{4\users}\sum_{i \in [\users]} \left(\var{}{\bar{p}_{j_i}-p_{j_i}} + \var{}{Z_{j_i}}\right) \leq \frac{1}{4t}+\frac{1}{2\rho t^2}\,.
    \end{align*}
    The last inequality follows from the error analysis in the proof of Theorem~\ref{prop:meanestnpub}. Using the standard conversion between zCDP and approximate DP, from Fact~\ref{fct:zcdp_to_approxDP}, we show that this algorithm also satisfies $(\eps, \delta)$-DP by setting $\rho = \Theta \big(\max\big(\frac{1}{\epsilon}, \frac{\log(1/\delta)}{\epsilon^2}\big)\big)$.
\end{proof}

Increasing the privacy requirement from 1-out-of-\users privacy to JDP, the error term due to privacy incurs an additional $O(1/\users)$ factor. 

\newcommand{\thmMeanEstJDPUB}{For any $\rho \geq 0$, $d \in \mathbb{N}$, and $t \in \mathbb{N}$, and loss function $\ell_\text{mean}$, there exists an algorithm that multitask learns $\distfam_{\text{est}, \datadim, \users}$ with error \[O\left( \frac{1}{\rho t}\right) + \frac{1}{4t}\] with $\users$ tasks and one sample per task and satisfies $\rho$-JCDP.
Moreover, for any $\eps > 0$, $\delta \in (0,1)$, $d \in \mathbb{N}$, and $t \in \mathbb{N}$, and loss function $\ell_\text{mean}$, there exists an algorithm that multitask learns $\distfam_{\text{est}, \datadim, \users}$ with error $\Theta( \min(\epsilon, {\epsilon^2}/{\log(1/\delta)})^{-1})\cdot\frac{1}{t}  + \frac{1}{4t}$ with $\users$ tasks and one sample per task, and satisfies $(\epsilon, \delta)$-JDP.}

\begin{thm}[JDP upper bound]
   \thmMeanEstJDPUB
   \label{thm:meanestjdpub}
\end{thm}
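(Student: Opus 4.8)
The plan is to apply the Gaussian mechanism to the empirical mean, but---crucially, and in contrast to the billboard algorithm of the next subsection---to release only the $t$ coordinates that are actually requested, one per person. Concretely, on input $(x^{(i)}, j_i)$ for each $i \in [t]$ (recall $\samples = 1$ here), the algorithm forms the empirical mean vector $\bar{p} = \frac{1}{t}\sum_{i \in [t]} x^{(i)} \in [-1,+1]^d$, draws independent $Z_i \sim \cN\!\left(0, \frac{2}{\rho t}\right)$, and hands person $i$ the estimate $\hat{p}_{j_i} = \bar{p}_{j_i} + Z_i$.

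The main obstacle, and the step I would carry out first, is the joint-privacy analysis. Fix a person $i$ and consider the joint distribution of the outputs $(\hat{p}_{j_{i'}})_{i' \neq i}$ delivered to everyone else, viewed as a function of person $i$'s sample $S_i = (x^{(i)}, j_i)$ with $S_{-i}$ fixed. Two points make the sensitivity small. First, changing the \emph{index} coordinate $j_i$ of $S_i$ leaves this vector unchanged: the output given to person $i'$ is the mean at coordinate $j_{i'}$, which is part of person $i'$'s data, so the set $\{j_{i'}\}_{i' \neq i}$ of released coordinates does not depend on $S_i$. Second, the vector $q(S_i) := (\bar{p}_{j_{i'}})_{i' \neq i}$ has small $\ell_2$-sensitivity to $x^{(i)}$: each coordinate equals $\frac{1}{t}\big(x^{(i)}_{j_{i'}} + \sum_{k \neq i} x^{(k)}_{j_{i'}}\big)$ and thus moves by at most $2/t$ when $x^{(i)}$ is replaced, and there are $t - 1$ coordinates, so $\|q(S_i) - q(S_i')\|_2 \le \frac{2\sqrt{t-1}}{t} \le \frac{2}{\sqrt{t}} =: \Delta$, regardless of collisions among the indices $j_{i'}$. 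Since the released vector is exactly $q(S_i)$ with independent $\cN(0, \frac{2}{\rho t}) = \cN(0, \frac{\Delta^2}{2\rho})$ noise added coordinatewise, Fact~\ref{fct:guassianMec} shows this release is $\rho$-zCDP as a function of $S_i$; as $i$ was arbitrary, the algorithm is $\rho$-JCDP. Note there is no composition loss here, because JDP is a separate guarantee for each $i$ rather than a statement about the full $t$-tuple of outputs.

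For utility, person $i$'s estimate has $\ex{}{(\hat{p}_{j_i}-p_{j_i})^2} = \var{}{\bar{p}_{j_i} - p_{j_i}} + \var{}{Z_i} \le \frac{1}{t} + \frac{2}{\rho t}$, using independence of coordinates and of the noise exactly as in the proof of Proposition~\ref{prop:meanestnpub}. Averaging the loss over the $t$ people gives expected error at most $\frac14\left(\frac{1}{t} + \frac{2}{\rho t}\right) = O\!\left(\frac{1}{\rho t}\right) + \frac{1}{4t}$. For the $(\eps, \delta)$-JDP claim, I would instantiate the same algorithm with $\rho = \Theta\!\big(\min(\eps, \eps^2/\log(1/\delta))\big)$ and apply Fact~\ref{fct:zcdp_to_approxDP} to each pair of distributions $\alg(S_i; S_{-i})_{-i}$, $\alg(S_i'; S_{-i})_{-i}$ to conclude $(\eps,\delta)$-JDP; substituting this choice of $\rho$ into the error bound yields error $\Theta\!\big(\min(\eps, \eps^2/\log(1/\delta))^{-1}\big)\cdot \frac{1}{t} + \frac{1}{4t}$, matching the statement.

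A remark on where the improvement over the billboard setting comes from: JDP only asks the outputs \emph{to the other $t-1$ people} to be private with respect to person $i$'s data, so it suffices to privatize a $(t-1)$-dimensional object whose $\ell_2$-sensitivity is $O(1/\sqrt{t})$; a billboard, by contrast, must commit to a representation before it learns the indices $j_{i'}$ and so effectively must privatize all $d$ coordinates of $\bar p$, incurring noise that scales with $d$ rather than $t$. This is exactly the gap the separation result in \Cref{sec:estimation} will exploit.
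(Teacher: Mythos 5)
Your proposal is correct and is essentially the same as the paper's proof: apply the Gaussian mechanism to the empirical mean, release only the requested coordinate to each person, and observe that the $\ell_2$-sensitivity of the $(t-1)$-tuple of outputs seen by any coalition is $O(\sqrt{t}/t)$. You are somewhat more careful than the paper in spelling out why changing person $i$'s index $j_i$ does not affect the others' outputs and why the sensitivity bound holds regardless of index collisions, but the algorithm, sensitivity bound, and error calculation all match.
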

\begin{proof} Here, for each person $i$ with index $j_i$, the algorithm returns an estimate 
\begin{align*}
    \hat{p}_{j_i} \gets \bar{p}_{j_i}+Z_{j_i},
\end{align*}
where $\bar{p}_{j_i} \gets \frac{1}{t}\sum_{k \in [\users]} x^{(k)}_{j_i}$ and $Z_{j_i} \sim \cN\left(0, \frac{2(t-1)}{\rho t^2}\right)$.
This satisfies $\rho$-JCDP because the global $\ell_2$ sensitivity of the outputs given to the $\users - 1$ colluding people $(\bar{p}_{j_1}, \ldots, \bar{p}_{j_{t-1}})$ is $\frac{2\sqrt{t-1}}{t}$. 
Hence, on average over the $\users$ people participating 
\begin{align*}
&\frac{1}{4\users}\sum_{i \in [\users]}\ex{}{\left({\hat{p}_{j_1}-p_{j_1}}\right)^2} = 
\frac{1}{4\users}\sum_{i \in [\users]}\var{}{\hat{p}_{j_i}-p_{j_i}} \\
&=  \frac{1}{4\users}\sum_{i \in [\users]}\left(\var{}{\bar{p}_{j_1} - p_{j_1}}+\var{}{Z_{j_1}}\right) \leq  \frac{1}{4t}+\frac{(t-1)}{2\rho t^2} 
\,.
\end{align*}

Using the standard conversion between zCDP and approximate DP, from Fact~\ref{fct:zcdp_to_approxDP}, we show that this algorithm also satisfies $(\eps, \delta)$-DP by setting $\rho = \Theta \big(\max\big(\frac{1}{\epsilon}, \frac{\log(1/\delta)}{\epsilon^2}\big)\big)$.
\end{proof}
Lastly, in the billboard model, the privacy error term scales up with the dimension $\datadim$ of the data. 

\newcommand{\thmMeanEstBBUB}{For any $\rho \geq 0, \datadim \in \mathbb{N}, \users \in \mathbb{N}$ and for loss function $\ell_\text{mean}$, there exists a billboard algorithm that multitask learns $\distfam_{\text{est},\datadim, \users}$ with error \[\frac{d}{2\rho t^2} + \frac{1}{4t}\] with $\users$ tasks and $1$ sample per task and satisfies $\rho$-zCDP.
Moreover, for any $\eps > 0, \delta \in (0,1), \datadim \in \mathbb{N}, \users \in \mathbb{N}$ and for loss function $\ell_\text{mean}$,
there exists an algorithm in the billboard model that multitask learns $\distfam_{\text{est}, \datadim, \users}$ to error $\Theta( \min(\epsilon, {\epsilon^2}/{\log(1/\delta)})^{-1})\cdot\frac{d}{t^2}  + \frac{1}{4t}$ with $\users$ tasks and $1$ sample per task,  and satisfies $(\epsilon, \delta)$-DP.}

\begin{thm}[Billboard upper bound]
    \thmMeanEstBBUB
    \label{thm:meanestbbub}
\end{thm}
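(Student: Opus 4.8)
The plan is to use the obvious billboard algorithm: privately release the \emph{entire} empirical mean vector as the billboard, and let each individual read off the one coordinate they care about. Concretely, $\alg_{\text{BB}}$ takes the samples $(x^{(1)}, j^{(1)}), \dots, (x^{(\users)}, j^{(\users)})$, forms the empirical mean $\bar{p} = \frac{1}{\users}\sum_{i \in [\users]} x^{(i)} \in [-1,1]^{\datadim}$, and publishes $h = \bar{p} + Z$ where $Z \sim \cN(0, \sigma^2 \mathbb{I}_\datadim)$ for a $\sigma$ chosen below. The personalization step $\alg_{\text{pers}}(S_i, h)$ simply outputs $\hat{p}_{j_i} = h_{j_i}$. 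Note that $\alg_{\text{BB}}$ never looks at any index $j^{(i)}$, so this is a legitimate billboard algorithm in the sense of the paper, and privacy of the billboard algorithm reduces to privacy of $\alg_{\text{BB}}$.

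First I would verify privacy. Since each $x^{(i)} \in \{\pm 1\}^\datadim$ and $\samples = 1$, replacing one person's sample changes each coordinate of $\bar{p}$ by at most $2/\users$, hence changes $\bar{p}$ by at most $\Delta \coloneqq 2\sqrt{\datadim}/\users$ in $\ell_2$ norm. By the Gaussian mechanism for zCDP (Fact~\ref{fct:guassianMec}), taking $\sigma^2 = \Delta^2/(2\rho) = 2\datadim/(\rho \users^2)$ makes $\alg_{\text{BB}}$ satisfy $\rho$-zCDP, and $\alg_{\text{pers}}$ is pure post-processing, so the whole billboard algorithm satisfies $\rho$-zCDP.

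Next I would bound the error. For each $i \in [\users]$, the noise $Z$ is independent of the data and each coordinate of the $x^{(k)}$ is independent with variance at most $1$, so exactly as in the error analysis in the proof of Proposition~\ref{prop:meanestnpub},
\[
\ex{}{(\hat{p}_{j_i} - p_{j_i})^2} = \var{}{\bar{p}_{j_i}} + \var{}{Z_{j_i}} \leq \frac{1}{\users} + \frac{2\datadim}{\rho \users^2}.
\]
Dividing by $4$ (the loss normalization) and averaging over $i \in [\users]$ gives expected error at most $\frac{1}{4\users} + \frac{\datadim}{2\rho \users^2}$, which is the claimed bound. For the $(\eps,\delta)$-DP statement I would invoke Fact~\ref{fct:zcdp_to_approxDP}: running the same algorithm with $\rho = \Theta\!\big(\min(\eps, \eps^2/\log(1/\delta))\big)$ yields $(\eps,\delta)$-DP, and substituting $1/\rho = \Theta\!\big(\min(\eps, \eps^2/\log(1/\delta))^{-1}\big)$ into the error bound gives $\Theta\!\big(\min(\eps, \eps^2/\log(1/\delta))^{-1}\big)\cdot \frac{\datadim}{\users^2} + \frac{1}{4\users}$.

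I do not expect any real obstacle here; this is a direct application of the Gaussian mechanism. The one point to be careful about is the $\sqrt{\datadim}$ factor in the $\ell_2$ sensitivity of $\bar p$ (as opposed to the $2/\users$ per-coordinate change), since it is precisely this $\datadim$-dependence — forced by the billboard having to encode \emph{all} coordinates before the indices are known — that produces the gap from the JDP bound and underlies the separation.
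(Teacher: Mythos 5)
Your proof is correct and takes essentially the same approach as the paper: publish a noisy version of the entire empirical mean vector via the Gaussian mechanism, calibrate the per-coordinate noise variance to the $\ell_2$ sensitivity $\Delta = 2\sqrt{\datadim}/\users$, observe that personalization is pure post-processing on the billboard, and decompose the per-coordinate squared error into sampling variance plus noise variance. The only difference is cosmetic bookkeeping — you carry the $\sigma^2 = 2\datadim/(\rho\users^2)$ noise variance explicitly rather than folding the constants, and in fact the paper's intermediate line $\var{}{Z_{j_i}} = d/(2\rho t^2)$ appears to have dropped a factor of $4$, whereas your value $\var{}{Z_{j_i}} = 2\datadim/(\rho\users^2)$ is the consistent one; both arrive at the same final bound $\frac{\datadim}{2\rho\users^2} + \frac{1}{4\users}$ after the $\tfrac14$ loss normalization.
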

\begin{proof}
     The billboard algorithm runs the Gaussian mechanism once and broadcasts the output to all the people. Then every person can use their index $j_i$ to get the estimate they are interested in.
    More specifically, the algorithm takes as input one sample $(x^{(i)},j^{(i)})$ drawn from $\dist_i$ from every person $i$ and then computes an empirical mean $\bar{p}\coloneqq \frac{1}{t} \sum_{i\in[t]} x^{(i)}$ and outputs on the billboard
    \begin{align}
        \hat{p} \gets \bar{p} + Z, \quad \text{where } Z\sim \cN\left(0, \frac{2d}{\rho t^2} \bbI_{\datadim}\right).
    \end{align}
    The global $\ell_2$ sensitivity of $\bar p$ is $\frac{2\sqrt{d}}{t}$, so this satisfies $\rho$-zCDP and, consequently, $\rho$-JCDP.

    To analyze the error, by symmetry it suffices to analyze person $i$.
    Since sampling the data and the Gaussian noise are independent, we have
    \begin{align}
        \ex{}{\left(\hat{p}_{j_i} - p_{j_i}\right)^2} = 
        \var{}{\hat{p}_{j_i} - p_{j_i}} &= \var{}{Z_{j_i}} + \var{}{\bar{p}_{j_i} - p_{j_i}}.
    \end{align}
    We have $\var{}{Z_{j_i}}=\frac{d}{2 \rho t^2}$ and $\var{}{\bar{p}_{j_i} - p_{j_i}}\le \frac{1}{t}$.
    Summing these variances gives you that the total error is 
    \[
    \frac{1}{4\users}\sum_{i \in [\users]} \ex{}{(\hat p_{j_i} - p_{j_i})^2} \leq \frac{d}{ 2\rho t^2}+\frac{1}{4t}.
    \]

    Using the standard conversion between zCDP and approximate DP, from Fact~\ref{fct:zcdp_to_approxDP}, we show that this algorithm also satisfies $(\eps, \delta)$-DP by setting $\rho = \Theta \big(\max\big(\frac{1}{\epsilon}, \frac{\log(1/\delta)}{\epsilon^2}\big)\big)$.
\end{proof}
This problem can easily be extended to metalearning for a metadistribution over $\distfam_{\text{est},\datadim,\users+1}$ with $\users$ training tasks and a test task with loss function $\ell_\text{mean}$. \Cref{thm:meanestbbub}, combined with our general reduction from metalearning to multitask learning in \Cref{thm:metatomulti}, also implies the existence of a metalearning algorithm within the billboard model. 

\begin{cor}[Metalearning upper bound]
For any $\eps >0, \delta \in (0,1), \datadim \in \mathbb{N}, \users \in \mathbb{N}$ and for loss function $\ell_\text{mean}$, there exists a pair of algorithms $(\alg_{\text{meta}}, \alg_{\text{pers}})$ that metalearn a distribution $\metadist$ over $\distfam_{\text{est},\datadim, \users+1}$ with error $\Theta( e^\epsilon\cdot \min(\epsilon, {\epsilon^2}/{\log(1/\delta)})^{-1})\cdot\frac{d}{t^2}  + \frac{e^\epsilon}{4t} + \delta$ using $\users$ training tasks, one sample per training task and a test task with one personalization sample and $\alg_{\text{meta}}$ satisfies $(\epsilon, \delta)$-DP.
\end{cor}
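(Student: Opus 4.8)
The plan is to obtain this corollary without designing any new algorithm, by composing the billboard upper bound of \Cref{thm:meanestbbub} with the generic billboard-to-metalearning reduction of \Cref{thm:metatomulti}. Concretely, I would take $\alg_{\text{meta}} := \alg_{\text{BB}}$ and let $\alg_{\text{pers}}$ be the trivial personalization step of the billboard construction: $\alg_{\text{BB}}$ publishes the noisy empirical mean vector $\hat{\mathbf p}$ on the billboard, and person $i$ recovers their estimate by reading off coordinate $j_i$ of $\hat{\mathbf p}$ (the personalization sample is unused). Everything then reduces to invoking the two earlier results with the right parameters.

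First I would fix any metadistribution $\metadist$ supported on \emph{exchangeable} $(\users+1)$-tuples in $\distfam_{\text{est},\datadim,\users+1}$ and check the domain-containment hypothesis required by \Cref{thm:metatomulti}. By the definition in \eqref{eq:distfam_est}, any tuple in the support of $\metadist$ has the form $(\dist_X(\mathbf p)\otimes\Det(j_1),\dots,\dist_X(\mathbf p)\otimes\Det(j_{\users+1}))$ for a common mean $\mathbf p \in [-1,1]^\datadim$ and indices $j_1,\dots,j_{\users+1}\in[\datadim]$; dropping the last coordinate leaves a tuple of exactly the form that defines $\distfam_{\text{est},\datadim,\users}$. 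Hence $\distfam_{\text{est},\datadim,\users}$ contains every $\users$-tuple in the support of $\metadist^{(\users)}$, so the hypothesis holds. Next I would instantiate the $(\eps,\delta)$-DP part of \Cref{thm:meanestbbub} to get an $(\eps,\delta)$-DP billboard algorithm $\alg=(\alg_{\text{BB}},\alg_{\text{pers}})$ that multitask learns $\distfam_{\text{est},\datadim,\users}$ with $\users$ tasks and one sample per task, with error $\alpha=\Theta\!\big(\min(\eps,\eps^2/\log(1/\delta))^{-1}\big)\cdot\frac{\datadim}{\users^2}+\frac{1}{4\users}$.

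Finally I would apply \Cref{thm:metatomulti} with this $\alg$, the loss $\ell_{\text{mean}}\in[0,1]$, $\samples=1$, and $\samplesp=\samples=1$: since $\alg$ is an $(\eps,\delta)$-DP billboard algorithm that multitask learns a family containing the support of $\metadist^{(\users)}$ with error $\alpha$, the same pair $(\alg_{\text{BB}},\alg_{\text{pers}})$ metalearns $\metadist$ with $\users$ training tasks, one sample per training task, and one personalization sample, with error at most $e^{\eps}\alpha+\delta$. Substituting $\alpha$ gives the stated bound $\Theta\!\big(e^{\eps}\cdot\min(\eps,\eps^2/\log(1/\delta))^{-1}\big)\cdot\frac{\datadim}{\users^2}+\frac{e^{\eps}}{4\users}+\delta$, and $\alg_{\text{meta}}=\alg_{\text{BB}}$ is $(\eps,\delta)$-DP by \Cref{thm:meanestbbub}. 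There is no genuinely hard step here; the only points needing care are (i) verifying that $\distfam_{\text{est},\datadim,\cdot}$ is closed under marginalizing out one coordinate, so that the containment precondition of \Cref{thm:metatomulti} is met, and (ii) checking that the number of personalization samples produced by the reduction, namely $\samples$, matches the single personalization sample claimed in the statement.
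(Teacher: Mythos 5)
Your proposal is correct and matches the paper's approach exactly: the paper derives this corollary by composing the $(\eps,\delta)$-DP billboard upper bound of \Cref{thm:meanestbbub} with the generic billboard-to-metalearning reduction of \Cref{thm:metatomulti}, which is precisely what you do (including the substitution $e^\eps \alpha + \delta$ with $\alpha$ from the billboard theorem). Your two careful points -- verifying the domain-containment hypothesis of \Cref{thm:metatomulti} and noting that $\metadist$ must be over exchangeable tuples -- are exactly the right things to check and are left implicit in the paper.
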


\subsection{Lower bounds}

Next, we formally separate the privacy frameworks in the multitask setting by proving a number of lower bounds. The proofs for the JDP multitask lower bound and the metalearning lower bound use the fingerprinting lower bound for private mean estimation. Specifically, we adapt the tracing attack used in the proof of the standard lower bound to each of our threat models. The proofs are provided in \Cref{sec:estimation_proofs}.

\newcommand{\thmMeanEstJDPLB}{Fix $\alpha >0$, $\users \in \mathbb{N}$, $\datadim \geq c \users^2$ for a sufficiently large constant, $\eps > 0$ and $\delta\in (0,\frac{1}{96t})$. Let $\ell_\text{mean}$ be the loss function. Let $\alg$ be an algorithm that multitask learns $\distfam_{\datadim, \users}$ with error $\alpha$ with $\users$ tasks and one sample per task and satisfies $(\eps, \delta)$-JDP.
   Then, \[\alpha \ge \Omega\left(\min\left\{ \frac{1}{{\eps^2\users}},1\right\}\right).\]}
\begin{thm}[JDP lower bound]
    \thmMeanEstJDPLB
    \label{thm:mean-JDP-lb}
\end{thm}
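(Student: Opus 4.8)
The plan is to adapt the fingerprinting lower bound for differentially private mean estimation to the joint‑DP threat model. Fix the following hard distribution over $\distfam_{\text{est},\datadim,\users}$: draw $\mathbf{p}\in[-1,+1]^\datadim$ with coordinates i.i.d.\ uniform on $[-1,1]$, give the $\users$ people \emph{distinct} indices (possible since $\datadim\geq c\users^2\geq\users$; after relabeling coordinates we may assume $j_i=i$), and let person $i$ receive one sample $x^{(i)}\sim\dist_X(\mathbf{p})$. Writing $\hat p_1,\dots,\hat p_\users$ for the $\users$ outputs and $\alpha$ for the expected loss, the hypothesis gives $\sum_{i=1}^\users\mathbb{E}[(\hat p_i-p_i)^2]\leq 4\users\alpha$; it suffices to lower bound $\alpha$ over this single distribution, which lies in $\distfam_{\text{est},\datadim,\users}$.

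The heart of the argument is the correlation sum $\Phi:=\sum_{i=1}^\users\sum_{j=1}^\users\mathbb{E}[(\hat p_j-p_j)(x^{(i)}_j-p_j)]$. First I would lower bound it: conditioning on $\mathbf{p}_{-j}$ and on all coordinates other than $j$ of every sample, $\hat p_j$ becomes a randomized function of $(x^{(1)}_j,\dots,x^{(\users)}_j)$, which are i.i.d.\ with mean $p_j\sim\mathrm{Unif}[-1,1]$ independent of the conditioning; the fingerprinting lemma~\cite{BunSU17,PeterTU24} then yields $\sum_i\mathbb{E}[(\hat p_j-p_j)(x^{(i)}_j-p_j)]\geq c_1-c_2\,\mathbb{E}[(\hat p_j-p_j)^2]$. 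Averaging over the conditioning, summing over $j\in[\users]$, and using Markov's inequality to discard the at‑most‑$\users/2$ coordinates whose error exceeds $8\alpha$, I get $\Phi\geq\Omega(\users)$ whenever $\alpha$ is below an absolute constant. Next, split off the diagonal terms $\sum_j\mathbb{E}[(\hat p_j-p_j)(x^{(j)}_j-p_j)]$ — these capture person $j$'s correlation with \emph{her own} output, which joint DP does not constrain — and bound their total by $2\sum_j\mathbb{E}|\hat p_j-p_j|\leq 2\sqrt{\users\cdot 4\users\alpha}=O(\users\sqrt\alpha)$, which is $o(\users)$ for small $\alpha$. Hence the off‑diagonal part $\sum_{i=1}^\users\zeta_i$, with $\zeta_i:=\sum_{j\neq i}\mathbb{E}[(\hat p_j-p_j)(x^{(i)}_j-p_j)]$, is still $\Omega(\users)$.

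The second half bounds each $\zeta_i$ via joint DP. For fixed $i$, the vector $\hat p_{-i}=(\hat p_j)_{j\neq i}$ is $(\eps,\delta)$‑DP as a function of person $i$'s single sample $x^{(i)}$, and $\zeta_i$ is the expectation of an inner product between $\hat p_{-i}-\mathbf{p}_{-i}$ and the corresponding coordinates $(x^{(i)}_j-p_j)_{j\neq i}$. I would run the tracing attack: compare with the ``ghost'' experiment in which $x^{(i)}$ is replaced by a fresh i.i.d.\ sample before the mechanism runs, but the inner product is still taken against the true $x^{(i)}$ — there the expectation is exactly $0$, since the mechanism's output becomes independent of $x^{(i)}$ and $\mathbb{E}[x^{(i)}_j-p_j]=0$. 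Applying the approximate‑DP guarantee to a suitably thresholded version of this statistic, together with a Hoeffding bound for $\langle w, (x^{(i)}_j-p_j)_{j\neq i}\rangle$ at fixed $w$ and the accuracy bound $\mathbb{E}\|\hat p_{-i}-\mathbf{p}_{-i}\|_2^2\leq 4\users\alpha$, yields $\zeta_i\leq O(\eps\sqrt{\users\alpha})+O(\delta\users)$. Summing over the $\users$ people gives $\Omega(\users)\leq\sum_i\zeta_i\leq O(\eps\,\users^{3/2}\sqrt\alpha)+O(\delta\users^2)$; since $\delta<1/(96\users)$ the second term is $o(\users)$, so $\sqrt\alpha\geq\Omega(1/(\eps\sqrt\users))$, i.e.\ $\alpha\geq\Omega(\min\{1/(\eps^2\users),1\})$ — the $\min$ with $1$ appearing because the signal bound $\Phi\geq\Omega(\users)$ was only established in the regime $\alpha\leq\text{const}$.

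The step I expect to be the main obstacle is the per‑person bound $\zeta_i=O(\eps\sqrt{\users\alpha})+O(\delta\users)$. A naive coordinate‑by‑coordinate use of DP only gives $\zeta_i=O((\eps+\delta)\users)$, which loses a $\sqrt\users$ factor and degrades the conclusion to the weaker $\alpha\geq\Omega(1/(\eps^2\users^2))$; obtaining the tight bound requires the robust‑tracing refinement of the fingerprinting attack~\cite{DworkSSUV15} — thresholding the inner‑product statistic at the right scale and exploiting concentration of $\langle w,x^{(i)}-\mathbf{p}\rangle$ — now carried out with access only to the $\users-1$ outputs $\hat p_{-i}$ rather than the full output vector, which is exactly the point where joint DP (as opposed to full DP) is used and where the separation from the billboard model will later come from. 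A secondary point to watch is the bookkeeping of active coordinates — only the $\users$ with $j_i=i$ carry fingerprinting signal, which is why the room $\datadim\geq c\users^2$ lets us make the indices distinct — and verifying that dropping the diagonal terms costs only lower‑order error.
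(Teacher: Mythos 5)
Your proposal is correct and follows essentially the same fingerprinting/tracing-attack route as the paper: define per-person test statistics correlating the outputs $(\hat p_j)_{j\ne i}$ with person $i$'s sample coordinates, lower-bound the aggregate correlation via the fingerprinting lemma conditioned coordinate-by-coordinate, and upper-bound each off-diagonal $\zeta_i$ by comparing to a resampled ghost using the JDP guarantee on the $\users-1$ visible outputs. The only divergences are cosmetic: the paper samples the index vector uniformly and conditions on no collisions via a birthday-paradox step rather than fixing $j_i=i$ outright, it folds the diagonal into the test statistics rather than isolating it as you do, your Markov step is superfluous since the fingerprinting bound is already additive in the squared error, and the bound $\ex{T_i}\le \ex{T_i'}+2\eps\sqrt{\var{}{T_i'}}+2\delta\|T_i'\|_\infty$ that the paper invokes directly is precisely what your thresholding-plus-Hoeffding argument would re-derive, so there is no new obstacle there.
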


\newcommand{\thmMeanEstMetaLB}{
Fix $\alpha > 0$, $\eps > 0$, $\users \in \mathbb{N}$, $\datadim \in \mathbb{N}$ and $\delta \in (0, \frac{1}{96t})$. Let $\ell_\text{mean}$ be the loss function.
Let $\alg = (\alg_{\text{meta}}, \alg_{\text{pers}})$ be a pair of algorithms that metalearn a distribution $\metadist$ over $\distfam_{\text{est},\datadim, \users+1}$ with error $\alpha$ with $\users$ training tasks, one sample per training task and a test task with one personalization sample and $\alg_{\text{meta}}$ satisfies $(\eps, \delta)$-JDP.
Then, \[\alpha \ge \Omega\left(\min\left\{\frac{d}{\epsilon^2 t^2},1\right\}\right).\]}

\begin{thm}[Metalearning lower bound]
\thmMeanEstMetaLB
\label{thm:meanestmetalb}
\end{thm}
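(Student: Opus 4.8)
The plan is to reduce metalearning over a carefully chosen hard metadistribution $\metadist$ to the problem of privately estimating \emph{all} $\datadim$ coordinates of a product mean vector, and then to invoke the fingerprinting lower bound. Concretely, I would take $\mathbf p$ to have coordinates drawn i.i.d.\ from a fixed distribution on $[-1,1]$ (e.g.\ uniform) for which the continuous fingerprinting lemma of \cite{BunSU17,PeterTU24} holds, draw indices $j_1,\dots,j_{\users+1}$ i.i.d.\ uniformly from $[\datadim]$, and set $\dist_i=\dist_X(\mathbf p)\otimes\Det(j_i)$; $\metadist$ is the resulting law of $(\dist_1,\dots,\dist_{\users+1})$. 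This tuple is exchangeable, lies in $\distfam_{\text{est},\datadim,\users+1}$, and its $\users$-marginal lies in $\distfam_{\text{est},\datadim,\users}$, so the construction is legal. (Since $\alg_{\text{meta}}$ produces a single published representation, its joint-DP guarantee is just ordinary $(\eps,\delta)$-DP on $h$; and since $\samples=1$, changing a training task's single sample $x^{(i)}$ is itself a valid neighboring change, so $h$ is $(\eps,\delta)$-DP in each $x^{(i)}$.)

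\emph{Reduction to mean estimation.} Suppose $(\alg_{\text{meta}},\alg_{\text{pers}})$ metalearns $\metadist$ with error $\alpha$. I may assume $\alpha$ is below any fixed absolute constant (otherwise the claimed bound, which is $O(1)$, is immediate), and that $\alg_{\text{pers}}$ clips its scalar output to $[-1,1]$ (harmless, since $p_j\in[-1,1]$). Write $h=\alg_{\text{meta}}(S_1,\dots,S_\users)$ and, for each $j\in[\datadim]$, $F_j:=\alg_{\text{pers}}(h,(x^{(\users+1)},j))\in[-1,1]$, collecting these into $\widehat{\mathbf p}=(F_1,\dots,F_\datadim)$, where $x^{(1)},\dots,x^{(\users+1)}\sim\dist_X(\mathbf p)$ are i.i.d.\ given $\mathbf p$. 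Because $j_{\users+1}$ is uniform on $[\datadim]$ and independent of everything else, averaging over it turns the metalearning guarantee into
\[
4\alpha\;\ge\;\frac1\datadim\sum_{j\in[\datadim]}\mathbb{E}\big[(F_j-p_j)^2\big]\;=\;\frac1\datadim\,\mathbb{E}\big[\|\widehat{\mathbf p}-\mathbf p\|_2^2\big].
\]
The crucial structural point is that $\widehat{\mathbf p}$ is a randomized estimator of $\mathbf p$ from the $\users+1$ i.i.d.\ samples $x^{(1)},\dots,x^{(\users+1)}$ that is $(\eps,\delta)$-DP with respect to each of $x^{(1)},\dots,x^{(\users)}$ separately (by post-processing $h$, via Lemma~\ref{lem:epsdelexp_mult}, treating $x^{(\users+1)}$, the indices, and all internal coins as auxiliary randomness), though it need not be private in the personalization sample $x^{(\users+1)}$.

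\emph{Fingerprinting and tracing.} I would then run the standard fingerprinting argument on $\widehat{\mathbf p}$. Writing $Z_k:=\langle x^{(k)}-\mathbf p,\ \widehat{\mathbf p}-\mathbf p\rangle$, the coordinate-wise fingerprinting lemma (conditioned on all coins and indices, then averaged) gives absolute constants $c_1,c_2>0$ with $\sum_{k=1}^{\users+1}\mathbb{E}[Z_k]\ge c_1\datadim-c_2\,\mathbb{E}\|\widehat{\mathbf p}-\mathbf p\|_2^2\ge(c_1-4c_2\alpha)\datadim$. The personalization term is controlled by Cauchy--Schwarz, $\mathbb{E}[Z_{\users+1}]\le\sqrt{\mathbb{E}\|x^{(\users+1)}-\mathbf p\|_2^2}\cdot\sqrt{\mathbb{E}\|\widehat{\mathbf p}-\mathbf p\|_2^2}\le\sqrt{\datadim}\cdot\sqrt{4\alpha\datadim}=2\sqrt\alpha\,\datadim$, so for $\alpha$ below an absolute constant $\sum_{k=1}^{\users}\mathbb{E}[Z_k]=\Omega(\datadim)$. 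For each $k\le\users$, the $(\eps,\delta)$-DP tracing bound --- replace $x^{(k)}$ by an independent fresh copy, use that the resulting estimate is independent of $x^{(k)}$ and has expected squared error at most $R^2:=\mathbb{E}\|\widehat{\mathbf p}-\mathbf p\|_2^2$, and apply Lemma~\ref{lem:epsdelexp_mult} to a rescaled, truncated version of the score, exactly as in the lower bounds of \cite{BunUV14,DworkSSUV15} --- yields $\mathbb{E}[Z_k]\le O(\eps R+\delta\datadim)$. Summing over $k\in[\users]$ and using $\delta<\tfrac1{96\users}$ to absorb the $\delta$-term gives $\users\cdot O(\eps R)\ge\Omega(\datadim)$, hence $R^2\ge\Omega(\datadim^2/(\eps^2\users^2))$; since $R^2\le4\alpha\datadim$ this rearranges to $\alpha\ge\Omega(\datadim/(\eps^2\users^2))$, and combining with the trivial $\alpha\le1$ gives $\alpha\ge\Omega(\min\{\datadim/(\eps^2\users^2),1\})$.

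\emph{Main obstacle.} The conceptual content --- that a metalearner for this $\metadist$ is exactly a private estimator of every coordinate of $\mathbf p$ --- falls straight out of the random-index construction. The delicate parts are (i) the single non-private personalization sample, which makes the argument go through only for $\alpha$ below an absolute constant (harmless, as the target bound is $O(1)$), and (ii) using the \emph{variance-sensitive} form of the tracing bound, $\mathbb{E}[Z_k]=O(\eps R)$ rather than the naive $O(\eps\datadim)$ --- this is precisely what supplies the factor $\sqrt{\datadim}$ that separates metalearning/billboard from JDP, and is the one technical ingredient I would import wholesale from the fingerprinting literature. (As a bonus, combined with Theorem~\ref{thm:metatomulti} this lower bound also yields the matching lower bound for DP billboard multitask learning.)
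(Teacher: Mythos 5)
Your proposal is correct and takes essentially the same approach as the paper: the same hard metadistribution with uniform $\mathbf p$ and uniform random indices, the same observation that averaging over the test-task index converts metalearning error into $\tfrac1d\mathbb E\|\widehat{\mathbf p}-\mathbf p\|_2^2$, the same score statistics $Z_k$ (the paper's $T_i$), the same fingerprinting lower bound and the same Cauchy--Schwarz treatment of the unprotected personalization sample $Z_{\users+1}$. The one minor slip is quantitative only: $\mathbb E\|x^{(\users+1)}-\mathbf p\|_2^2\le 4\datadim$ (not $\datadim$) since each coordinate deviation has magnitude up to $2$, so the bound on $\mathbb E[Z_{\users+1}]$ should be $4\sqrt\alpha\,\datadim$ rather than $2\sqrt\alpha\,\datadim$; this does not affect the asymptotics.
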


 For the multitask learning lower bound in the billboard model, we use \Cref{thm:metatomulti} to reduce to the metalearning lower bound in \Cref{thm:meanestmetalb}.

\newcommand{\thmMeanEstBBLB}{
Fix $\alpha > 0$, $\eps \in (0,1]$, $\users \in \mathbb{N}$, $\delta\in (0, \frac{1}{8 \cdot 144t})$ and $d\ge c\users$ for a sufficiently large constant $c$. Let $\ell_\text{mean}$ be the loss function. Let $\alg$ be a billboard algorithm that multitask learns $\distfam_{\datadim, \users}$ with error $\alpha$ with $\users$ tasks and one sample per task, and satisfies $(\eps, \delta)$-DP.
Then, \[\alpha \ge \Omega\left(\min\left\{\frac{d}{\epsilon^2 t^2},1\right\}\right).\]}
\begin{cor}[Billboard lower bound]
\thmMeanEstBBLB
\label{thm:meanestbblb}
\end{cor}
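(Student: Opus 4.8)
The plan is a direct reduction to the metalearning lower bound of \Cref{thm:meanestmetalb}, using the generic billboard-to-metalearning reduction of \Cref{thm:metatomulti}; no fresh lower-bound argument is needed, since the fingerprinting/tracing work is already packaged in \Cref{thm:meanestmetalb}.

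First I would fix an $(\eps,\delta)$-DP billboard algorithm $\alg = (\alg_{\text{BB}}, \alg_{\text{pers}})$ that multitask learns $\distfam_{\text{est},\datadim,\users}$ with error $\alpha$ and one sample per task, and manufacture an exchangeable metadistribution whose $\users$-task marginal lies inside $\distfam_{\text{est},\datadim,\users}$. The natural choice of $\metadist$ over $(\users+1)$-tuples of distributions: draw a mean vector $\mathbf{p}\in[-1,+1]^\datadim$ from the hard prior underlying \Cref{thm:meanestmetalb}, draw indices $j_1,\dots,j_{\users+1}$ i.i.d.\ uniformly from $[\datadim]$, and set $\dist_i \coloneqq \dist_X(\mathbf{p})\otimes\Det(j_i)$. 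Because $\mathbf{p}$ is shared across the tuple and the $j_i$ are i.i.d., the tuple $(\dist_1,\dots,\dist_{\users+1})$ is exchangeable, and the marginal $\metadist^{(\users)}$ over the first $\users$ coordinates is supported on $\distfam_{\text{est},\datadim,\users}$ by the very definition in \eqref{eq:distfam_est}. Hence the hypotheses of \Cref{thm:metatomulti} are met.

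Next I would invoke \Cref{thm:metatomulti}: since $\alg$ is an $(\eps,\delta)$-DP billboard algorithm that multitask learns this family with error $\alpha$ and one sample per task, the same pair $(\alg_{\text{BB}},\alg_{\text{pers}})$ metalearns $\metadist$ with $\users$ training tasks, one sample per training task, and one personalization sample, with expected error at most $e^\eps\alpha+\delta$; moreover the published billboard $\alg_{\text{BB}}$ is $(\eps,\delta)$-DP, which is exactly the privacy requirement placed on $\alg_{\text{meta}}$ in \Cref{thm:meanestmetalb}. Applying \Cref{thm:meanestmetalb} — whose hypothesis $\delta\in(0,\tfrac{1}{96\users})$ is implied by the corollary's assumption $\delta<\tfrac{1}{1152\users}$ — yields
\[
  e^\eps\alpha+\delta \ \ge\ \Omega\!\left(\min\left\{\tfrac{\datadim}{\eps^2\users^2},\,1\right\}\right).
\]
Finally I would clean up the degradation factors: using $\eps\le 1$ so that $e^\eps\le e$, and using $\datadim\ge c\users$ with $\delta<\tfrac{1}{1152\users}$ so that $\delta$ is at most a small constant fraction of $\min\{\datadim/(\eps^2\users^2),1\}\ge\min\{c/(\eps^2\users),1\}\ge\min\{c/\users,1\}$, both the additive $\delta$ and the multiplicative $e^\eps$ can be absorbed into the asymptotic notation, giving $\alpha\ge\Omega(\min\{\datadim/(\eps^2\users^2),1\})$ as claimed.

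The main point requiring care — rather than a genuine obstacle — is exhibiting a metadistribution that is simultaneously exchangeable and has marginal of exactly the prescribed form $\distfam_{\text{est},\datadim,\users}$, and then bookkeeping the $e^\eps$ and $+\delta$ losses so that the stated ranges of $\eps$, $\delta$, and the assumption $\datadim\ge c\users$ suffice to preserve the bound up to constants.
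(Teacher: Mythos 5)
Your proof is correct and follows exactly the same route as the paper: reduce the billboard multitask learner to a metalearner via \Cref{thm:metatomulti}, apply the metalearning lower bound \Cref{thm:meanestmetalb}, then absorb the $e^\eps$ factor and additive $\delta$ using $\eps\le 1$, $\delta<\tfrac{1}{1152\users}$, and $\datadim\ge c\users$. Your explicit construction and verification of the exchangeable hard metadistribution (uniform $\mathbf{p}$, i.i.d.\ uniform $j_i$) is a detail the paper's proof leaves implicit, but it is precisely the instance used inside \Cref{thm:meanestmetalb}, so the two arguments coincide.
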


\section{Indexed Classification}
\label{sec:classification}


In this section, we characterize the error achievable on a classification problem in different private personalization frameworks.
\paragraph{Indexed Classification } Every individual $i$ has an index $j_i \in [\datadim]$ and $\samples$ labeled samples $\{(x^{(i,k)}, y^{(i,k)})\}_{k \in [\samples]}$ drawn from a personal distribution $R_{\mathbf{p},j_i}$ over $\{\pm 1\}^\datadim \times \{\pm 1\}$, where $\mathbf{p}$ is a parameter of the data distribution that is the same among individuals. For a fixed vector $\mathbf{p} = (p_1, \ldots, p_{\datadim})$ in $[-1,+1]^\datadim$ and an index $j \in [\datadim]$, we define $R_{\mathbf{p}, j}$ as the distribution on the outcome of the following sampling process. First, draw an auxiliary random variable $w \in \{\pm 1\}^\datadim$ from a product of $\datadim$ independent distributions with mean $\mathbf{p}$. Then, draw $y \in \{\pm 1\}$ uniformly. Finally, construct $x$ as follows
\begin{align*}
    x_\ell = \begin{cases}
        w_\ell , &\textbf{ if } \ell \neq j\\
        w_\ell\cdot y, &\textbf{ if } \ell = j.
    \end{cases}
\end{align*}

Individual $i$'s goal is to use these samples, and in cooperation with other individuals learn a classifier of the form $\hat{f}_i(x_{j_i}): \{\pm 1\} \to \{\pm 1\}$ that predicts labels $y$ as correctly as possible using only the $j_i$-th feature. Depending on the output structure of the algorithm, $\hat{f}_i$ might be sent as a private message from the curator or it might be calculated locally by individual $i$ using the representation broadcasted by the curator. The loss function of individual $i$ is the difference between the misclassification error of $\hat{f}_i$ and the optimal classifier for distribution $R_{j_i}$:
\begin{align*}
\ell_{\text{class}}(R_{\mathbf{p},j_i}, \hat{f}_i) \coloneqq \pr{}{\hat{f}_i(x_{j_i}) \neq y} - \min_{f_i} \pr{}{f_i(x_{j_i}) \neq y},
\end{align*}
where the probability is taken over $(x,y) \sim R_{\mathbf{p},j_i}$.

We cast this problem in the "multitask learning" language of \Cref{def:multitask_learning} and specify the connection between the data distributions by considering that every individual has $\samples$ copies of index $j_i$ and that the $\samples$ samples come from the following family of $\users$-tuples of distributions
\begin{align*}
    \distfam_{\text{class}, \datadim, \users} \coloneqq \{(\dist_1, \ldots, \dist_\users) \mid \dist_i = R_{\mathbf{p}, j_i} \otimes \Det(j_i) \forall i \in [\users],
    \forall j_i \in [\datadim], \forall \mathbf{p} \in [-1,+1]^\datadim\}.
\end{align*}
In this definition, for all $\dist_i$ that are in the same tuple, the distributions over the features, $R_{\mathbf{p}, j_i}$ are parameterized by the same $\mathbf{p}$. However, the $\dist_i$s can potentially have different $j_i$s for different people. For convenience we write $(x,j,y) \sim \dist_i$, instead of $(x,y,j)$.

\subsection{Indexed Sign Estimation}
For the results of this section, we first prove a reduction to and from an estimation problem we call \textit{indexed sign estimation}. We then obtain upper and lower bounds for the error of indexed sign estimation that imply bounds for indexed classification.

\paragraph{Indexed sign estimation problem} In the multitask learning version of indexed sign estimation there are $\users$ people and every person $i \in [\users]$ has $\samples$ samples drawn from the same distribution $P_X$ over domain $\{\pm 1\}^\datadim$ and an index $j_i \in [\datadim]$. As in \Cref{sec:estimation}, $\dist_X$ is a product of $\datadim$ independent distributions over $\{\pm1\}^\datadim$ with mean $\mathbf{p} \in [-1,+1]^\datadim$. Person $i$'s goal is to estimate the sign $\hat{s}_{j_i}$ of the mean of coordinate $j_i$. The error for person $i$ is $|p_{j_i}|$ if they make a wrong prediction of the sign of $p_{j_i}$ and $0$ otherwise. This means that they get penalized more for the sign of coordinates that have mean further from zero and are, thus, easier to predict. The loss function for person $i$ is 
\[
\ell_{\text{sign}}(\dist_X, \hat{s}_{j_i}) \coloneqq \ind\{\text{sign}(p_{j_i}) \neq \hat{s}_{j_i}\}|p_{j_i}|,
\]
where $p_{j_i}$ is the true mean of coordinate $j_i$ and $\hat{s}_{j_i}$ is the sign estimate of person $i$.
The overall error for multitask learning is
\[
\frac{1}{\users}\sum_{i \in [\users]} \ex{}{\ind\{\text{sign}(p_{j_i}) \neq \hat{s}_{j_i}\}|p_{j_i}|},
\]
where the expectation is taken over the samples of the $\users$ people and the randomness of the algorithm.
Following the same process as in \Cref{sec:estimation} to cast this problem in the ``multitask learning'' language, we consider that every person draws $\samples$ samples from a distribution $\dist_i$ that is in a $\users$-tuple of distributions from family $\distfam_{\text{est}, \datadim ,\users}$, as defined in \Cref{sec:estimation}.

We start by showing that we can reduce indexed classification to indexed sign estimation in the multitask learning framework, and vice versa.

\newcommand{\thmRedClasstoEst}{ Fix $a \in [0,1]$. Let $\alg$ be an algorithm that multitask learns $\distfam_{\text{est}, \datadim, \users}$ with error $\alpha$, for loss function $\ell+\text{sign}$, with $\users$ tasks and $1$ sample per task.
Then, there exists an algorithm that multitask learns $\distfam_{\text{class}, \datadim, \users}$ with error $\alpha$, for loss function $\ell_\text{class}$, with $\users$ tasks and $1$ sample per task.
 }
\begin{lem}[Reduction of indexed classification to indexed sign estimation]
    \thmRedClasstoEst   
    \label{thm:redclasstoest}
\end{lem}
\begin{proof}
    Let $\{(x^{(i)}, j^{(i)}, y^{(i)})\}_{i \in [\users]}$ be the input to the multitask classification algorithm that is drawn from a tuple of distributions $(\dist_{\text{class}}^{(1)}, \ldots, \dist_{\text{class}}^{(\users)})$ in $\distfam_{\text{class}, \datadim, \users}$. We can transform this dataset to an input dataset for algorithm $\alg$. For person $i$ we can use $j^{(i)} = j_i$ to recover the value of the auxiliary random variable $w^{(i)}$ as follows
    \[
    w^{(i)}_\ell \gets \begin{cases}
        x^{(i)}_\ell, &\text{ if } \ell \neq j^{(i)},\\
        x^{(i)}_\ell y^{(i)}, &\text{ if } \ell = j^{(i)}.\\
    \end{cases}
    \]
    We see that the tuple of the $\users$ distributions $(\dist^{(1)}_{\text{est}}, \ldots,\dist^{(\users)}_{\text{est}})$ over $(w^{(i)},j^{(i)})$ for all $i \in [\users]$ is in $\distfam_{\text{est}, \datadim, \users}$. For this reason, we give $\{(w^{(i)}, j^{(i)})\}_{i \in [\users]}$ as input to algorithm $\alg$.

    Let $\hat{s}_{j_i}$ be the output of $\alg$ for person $i$. Then, for person $i \in [\users]$ the classification algorithm ouputs the labeling function
    \[
        \hat{f}_i(x_{j_i}) = \hat{s}_{j_i} x_{j_i}.
    \]
    
    The misclassification error of $\hat{f}_i$ for distribution $\dist_{\text{class}}^{(i)}$ is 
    \begin{align*}
        \pr{(x,j,y) \sim \dist_{\text{class}}^{(i)}}{\hat{f}_i(x_j) \neq y} & =\pr{(x,j,y) \sim \dist_{\text{class}}^{(i)}}{\hat{s}_{j_i}x_{j_i} \neq y}\\
        & = \pr{(x,j,y) \sim \dist_{\text{class}}^{(i)}}{\hat{s}_{j_i} \neq y x_{j_i}}\\
        & = \pr{(x,j,y) \sim \dist_{\text{class}}^{(i)}}{\hat{s}_{j_i} \neq w_{j_i}}\\
        & = \pr{(w,j) \sim \dist_{\text{est}}^{(i)}}{\hat{s}_{j_i} \neq w_{j_i}}\\
        & = \ind\{\hat{s}_{j_i} = 1\}\pr{(w,j) \sim \dist_{\text{est}}^{(i)}}{w_{j_1} = -1}+\ind\{\hat{s}_{j_i} = -1\}\pr{(w,j) \sim \dist_{\text{est}}^{(i)}}{w_{j_1} = 1}\\
        & = \frac{1 - p_{j_i}}{2} \ind\{\hat{s}_{j_i} = 1\}+\frac{1+p_{j_i}}{2}\ind\{\hat{s}_{j_i} = -1\}.
    \end{align*}
    The misclassification error of a fixed classifier $f_i$ for distribution $\dist_{\text{class}}^{(i)}$ is
    \begin{align*}
         \pr{(x,j,y) \sim \dist_{\text{class}}^{(i)}}{f_i(x_j) \neq y} 
         ={} &\pr{(x,j,y) \sim \dist_{\text{class}}^{(i)}}{f_i(x_j) = -1 \mid x_{j_i} = 1, y = 1}\pr{}{x_{j_i} = 1, y = 1}\\
         &+ \pr{(x,j,y) \sim \dist_{\text{class}}^{(i)}}{f_i(x_j) = 1 \mid x_{j_i} = 1, y = -1}\pr{}{x_{j_i} = 1, y = -1}\\
         &+\pr{(x,j,y) \sim \dist_{\text{class}}^{(i)}}{f_i(x_j) = -1 \mid x_{j_i} = -1, y = 1}\pr{}{x_{j_i} = -1, y = 1}\\
         &+\pr{(x,j,y) \sim \dist_{\text{class}}^{(i)}}{f_i(x_j) = 1 \mid x_{j_i} = -1, y = -1}\pr{}{x_{j_i} = -1, y = -1}\\
         ={} &\pr{(x,j,y) \sim \dist_{\text{class}}^{(i)}}{f_i(x_j) = -1 \mid x_{j_i} = 1, y = 1}\frac{1+p_{j_i}}{4}\\
         &+ \pr{(x,j,y) \sim \dist_{\text{class}}^{(i)}}{f_i(x_j) = 1 \mid x_{j_i} = 1, y = -1}\frac{1-p_{j_i}}{4}\\
         &+\pr{(x,j,y) \sim \dist_{\text{class}}^{(i)}}{f_i(x_j) = -1 \mid x_{j_i} = -1, y = 1}\frac{1-p_{j_i}}{4}\\
         &+\pr{(x,j,y) \sim \dist_{\text{class}}^{(i)}}{f_i(x_j) = 1 \mid x_{j_i} = -1, y = -1}\frac{1+p_{j_i}}{4}.\\
    \end{align*}
    Given $p_{j_i}$, the function that minimizes this error is $f^*_i(x_{j_i}) = \text{sign}(p_{j_i}) x_{j_i}$. The misclassification error of $f^*_i$ for distribution $\dist_{\text{class}}^{(i)}$ is 
    \begin{align*}
        \pr{(x,j,y) \sim \dist_{\text{class}}^{(i)}}{f^*_i(x_j) \neq y} & =\pr{(x,j,y) \sim \dist_{\text{class}}^{(i)}}{\text{sign}(p_{j_i})x_{j_i} \neq y}\\
        & =\pr{(w,j) \sim \dist_{\text{est}}^{(i)}}{\text{sign}(p_{j_i}) \neq w_{j_i}}\\
        & = \ind\{\text{sign}(p_{j_i}) = 1\} \frac{1-p_{j_i}}{2}+\ind\{\text{sign}(p_{j_i}) = -1\} \frac{1+p_{j_i}}{2}.
    \end{align*}

    Taking the difference between the misclassification error of $\hat{f}_i$ and $f*_i$, we obtain that 
    \begin{align*}
        &\pr{(x,j,y) \sim \dist_{\text{class}}^{(i)}}{\hat{f}_i(x_j) \neq y} - \pr{(x,j,y) \sim \dist_{\text{class}}^{(i)}}{f^*_i(x_j) \neq y} \\
        ={} &\frac{1 - p_{j_i}}{2} \ind\{\hat{s}_{j_i} = 1\}+\frac{1+p_{j_i}}{2}\ind\{\hat{s}_{j_i} = -1\}
        - \ind\{\text{sign}(p_{j_i}) = 1\} \frac{1-p_{j_i}}{2}-\ind\{\text{sign}(p_{j_i}) = -1\} \frac{1+p_{j_i}}{2}\\
        ={} &\ind\{\hat{s}_{j_i} = 1, \text{sign}(p_{j_i}) = -1\}(-p_{j_i})+\ind\{\hat{s}_{j_i} = -1, \text{sign}(p_{j_i}) = 1\}p_{j_i}\\
        ={} &\ind\{\hat{s}_{j_i} \neq \text{sign}(p_{j_i}) \}|p_{j_i}|.
        \end{align*}

    Therefore, if we take the expectation of the error over the samples $\{(x^{(i)}, j^{(i)}, y^{(i)})\}_{i \in [\users]}$ and the randomness of algorithm $\alg$ we have that the error of multitask learning for indexed classification is 
    \begin{align*}
        \frac{1}{\users} \sum_{i \in [\users]} \ex{}{\left( \pr{(x,j,y) \sim \dist_i}{\hat{f}_i(x_{j_i}) \neq y} - \min_{f_i} \pr{(x,j,y) \sim \dist_i}{f_i(x_{j_i}) \neq y}\right)} &\leq \frac{1}{\users} \sum_{i \in [\users]} \ex{}{\ind\{\hat{s}_{j_i} \neq \text{sign}(p_{j_i}) \}|p_{j_i}|} 
        \leq \alpha.
    \end{align*}
    This concludes our proof.
\end{proof}

\newcommand{\thmRedEsttoCLass}{Fix $\alpha \in [0,1]$. Let $\alg$ be an algorithm that multitask learns $\distfam_{\text{class}, \datadim, \users}$ with error $\alpha$, for loss function $\ell_\text{class}$, with $\users$ tasks and $1$ sample per task. Then, there exists an algorithm that multitask learns $\distfam_{\text{est}, \datadim, \users}$ with error $\alpha$, for loss function $\ell_\text{est}$, with $\users$ tasks and $2$ samples per task.
}

\begin{lem}[Reduction of indexed sign estimation to indexed classification]
    \thmRedEsttoCLass
    \label{thm:redesttoclass}
\end{lem}

\begin{proof}
    Let $\{(x^{(i,1)},j^{(i,1)}), (x^{(i,2},j^{(i,2)})\}$ be the dataset of task $i \in [\users]$ gives to the indexed estimation algorithm. Every sample $(x^{(i,k)},j^{(i,k)})$, for $k \in \{1,2\}$, is drawn from the corresponding distribution $\dist_{\text{est}}^{(i)}$ from a tuple of $\users$ distributions $(\dist_{\text{est}}^{(1)}, \ldots, \dist_{\text{est}}^{(\users)}) \in \distfam_{\text{est}, \datadim, \users}$. 

    We first use one sample from every task/person to construct an input for algorithm $\alg$. For every person $i$ we set $w^{(i,1)} \gets x^{(i,1)}$, draw a $y^{(i,1)} \in \{\pm 1\}$ uniformly and set 
    \[
    \tilde{x}^{(i,1)}_\ell \gets \begin{cases}
        w^{(i,1)}_\ell, &\text{ if } \ell \neq j_i\\
        w^{(i,1)}_\ell y^{(i,1)}, &\text{ if } \ell = j_i.\\
    \end{cases}
    \]
    We then give $\{(\tilde{x}^{(i,1)}, j^{(i,1)}, y^{(i,1)})\}_{i \in [\users]}$ as input to the classification algorithm $\alg$. Let $\dist_{\text{class}}^{(i)}$ be the distribution of $(\tilde{x}^{(i,1)}, j^{(i,1)}, y^{(i,1)})$. We notice that $(\dist_{\text{class}}^{(1)}, \ldots, \dist_{\text{class}}^{(\users)})$ is in the family of tuples $\distfam_{\text{class}, \datadim, \users}$. 

    Let $\hat{f}_1, \ldots \hat{f}_\users$ be the functions that $\alg$ outputs for this input. Then, we use the second sample of person $i$, $(x^{(i,2},j^{(i,2)})$, and function $\hat{f}_i$ to get the sign estimate $\hat{s}_{j_i}$. In particular, we draw $y^{(i,2)} \in \{\pm 1\}$ uniformly, we set $w^{(i,2)} \gets x^{(i,2)}$ and 
    \[
    \tilde{x}^{(i,2)}_\ell \gets \begin{cases}
        w^{(i,2)}_\ell, &\text{ if } \ell \neq j_i\\
        w^{(i,2)}_\ell y^{(i,1)}, &\text{ if } \ell = j_i.\\
    \end{cases}
    \]
    We see that $\dist_{\text{class}}^{(i)}$ is also the distribution of $(\tilde{x}^{(i,2)}, j^{(i,2)}, y^{(i,2)})$. For person $i$ this indexed estimation algorithm outputs 
    \[
    \hat{s}_{j_i} = \hat{f}_i(\tilde{x}^{(i,2)}_{j_i})\tilde{x}^{(i,2)}_{j_i}.
    \]

    The next step is to compute the error of $\hat{s}_{j_i}$. In the proof of \Cref{thm:redclasstoest} we showed that for person $i$
    \[
    \min_{f_i}\pr{(\tilde{x}^{(i,2)},j^{(i,2)},y^{(i,2)})\sim \dist_{\text{class}}^{(i)}}{ f_i(\tilde{x}^{(i,2)}_{j_i}) \neq y^{(i,2)}} = \ind\{\text{sign}(p_{j_i}) = 1\} \frac{1-p_{j_i}}{2} + \ind\{\text{sign}(p_{j_i}) = -1\} \frac{1+p_{j_i}}{2}.
    \]
    The misclassification error of $\hat{f}_i$ is 
    \begin{align*}
        \pr{(\tilde{x}^{(i,2)},j^{(i,2)},y^{(i,2)})\sim \dist_{class}^{(i)}}{ \hat{f}_i(\tilde{x}^{(i,2)}_{j_i}) \neq y^{(i,2)}}  =  {}&\pr{(\tilde{x}^{(i,2)},j^{(i,2)},y^{(i,2)})\sim \dist_{class}^{(i)}}{ \hat{f}_i(\tilde{x}^{(i,2)}_{j_i})\tilde{x}^{(i,2)}_{j_i} \neq \tilde{x}^{(i,2)}_{j_i}y^{(i,2)}}\\
        = {} &\pr{(\tilde{x}^{(i,2)},j^{(i,2)},y^{(i,2)})\sim \dist_{class}^{(i)}}{ \hat{s}_i \neq w^{(i,2)}_{j_i}}\\
         = {}&\pr{(\tilde{x}^{(i,2)},j^{(i,2)},y^{(i,2)})\sim \dist_{class}^{(i)}}{ \hat{s}_i \neq 1|w^{(i,2)}_{j_i}=1} \pr{}{w^{(i,2)}_{j_i}=1}\\
        &+\pr{(\tilde{x}^{(i,2)},j^{(i,2)},y^{(i,2)})\sim \dist_{class}^{(i)}}{ \hat{s}_i \neq -1|w^{(i,2)}_{j_i}=-1} \pr{}{w^{(i,2)}_{j_i}=-1}.
    \end{align*}
    We can see that $\tilde{x}^{(i,2)}_{j_i}$ is independent of $w^{(i,2)}_{j_i}$. Thus, $\hat{s}_{j_i}$ is independent of  $w^{(i,2)}_{j_i}$ and we have that 
\begin{align*}
     \pr{(\tilde{x}^{(i,2)},j^{(i,2)},y^{(i,2)})\sim \dist_{class}^{(i)}}{ \hat{f}_i(\tilde{x}^{(i,2)}_{j_i}) \neq y^{(i,2)}} & = {}\pr{(\tilde{x}^{(i,2)},j^{(i,2)},y^{(i,2)})\sim \dist_{class}^{(i)}}{ \hat{s}_i \neq 1} \frac{1+p_{j_i}}{2}+\pr{(\tilde{x}^{(i,2)},j^{(i,2)},y^{(i,2)})\sim \dist_{class}^{(i)}}{ \hat{s}_i \neq -1} \frac{1-p_{j_i}}{2}\\
     & = \ex{(\tilde{x}^{(i,2)},j^{(i,2)},y^{(i,2)})\sim \dist_{class}^{(i)}}{ \ind\{\hat{s}_i \neq 1\} \frac{1+p_{j_i}}{2}+ \ind\{\hat{s}_i \neq -1\} \frac{1-p_{j_i}}{2}}
\end{align*}
Combining the these steps, we get that the difference of the misclassification error between $\hat{f}_i$ and the optimal classifier is 
\begin{align*}
    &\pr{(\tilde{x}^{(i,2)},j^{(i,2)},y^{(i,2)})\sim \dist_{class}^{(i)}}{ \hat{f}_i(\tilde{x}^{(i,2)}_{j_i}) \neq y^{(i,2)}} - \min_{f_i}\pr{(\tilde{x}^{(i,2)},j^{(i,2)},y^{(i,2)})\sim \dist_{\class}^{(i)}}{ f_i(\tilde{x}^{(i,2)}_{j_i}) \neq y^{(i,2)}} \\& = \ex{(\tilde{x}^{(i,2)},j^{(i,2)},y^{(i,2)})\sim \dist_{class}^{(i)}}{ \ind\{\hat{s}_i \neq 1\} \frac{1+p_{j_i}}{2}+ \ind\{\hat{s}_i \neq -1\} \frac{1-p_{j_i}}{2} - \ind\{\text{sign}(p_{j_i}) = 1\} \frac{1-p_{j_i}}{2} - \ind\{\text{sign}(p_{j_i}) = -1\} \frac{1+p_{j_i}}{2}}\\
    & = \ex{(\tilde{x}^{(i,2)},j^{(i,2)},y^{(i,2)})\sim \dist_{class}^{(i)}}{ \ind\{\hat{s}_i \neq \text{sign}(p_{j_i}) |p_{j_i}|\} }.
\end{align*}
The average error of the sign estimation algorithm, in expectation over the samples, the randomness of the input/output transformations and the randomness of algorithm $\alg$ is 
\begin{align*}
    &\frac{1}{\users} \sum_{i \in [\users]} \ex{}{\ind\{\text{sign}(p_{j_i}) \neq \hat{s}_{j_i}\}|p_{j_i}|}\\ 
    &= \frac{1}{\users} \sum_{i \in [\users]} \ex{}{\ex{(\tilde{x}^{(i,2)}, j^{(i,2)}, y^{(i,2)})\sim \dist_{\text{class}}^{(i)}}{\ind\{\text{sign}(p_{j_i}) \neq \hat{s}_{j_i}\}|p_{j_i}|}}\\
    & = \frac{1}{\users} \sum_{i \in [\users]} \ex{}{\pr{(\tilde{x}^{(i,2)},j^{(i,2)},y^{(i,2)})\sim \dist_{class}^{(i)}}{ \hat{f}_i(\tilde{x}^{(i,2)}_{j_i}) \neq y^{(i,2)}} - \min_{f_i}\pr{(\tilde{x}^{(i,2)},j^{(i,2)},y^{(i,2)})\sim \dist_{\text{class}}^{(i)}}{ f_i(\tilde{x}^{(i,2)}_{j_i}) \neq y^{(i,2)}}}\\
    & \leq \alpha.
\end{align*}
\end{proof}
\subsection{Upper Bounds}

We prove upper bounds on the indexed classification problem by reducing to and from a related learning problem called indexed sign estimation, encoding the sign estimation problem into a mean estimation problem, and finally applying the upper bounds from Section~\ref{sec:estimation} for indexed mean estimation.

We start by proving an upper bound for multitask learning without privacy.

\newcommand{\lemClassNpUB}{ For any $\datadim \in \mathbb{N}$ and $\users \in \mathbf{N}$, and loss function $\ell_\text{class}$, there exists an algorithm that multitask learns $\distfam_{\text{class}, \datadim, \users}$ with error \(\frac{1}{\sqrt{\users}}\) with $\users$ tasks and $1$ sample per task.}
\begin{lem}[Non-private upper bound]
\lemClassNpUB
\label{lem:classnpub}
\end{lem}
\begin{proof} 
    Fix any $\datadim \in \N$, $\users \in \N$. 
    By \Cref{prop:meanestnpub} there is an algorithm $\alg$ that multitask learns $\distfam_{\text{est}, \datadim, \users}$ with error
    \[
    \frac{1}{\users}\sum_{i \in [\users]} \ex{}{(\hat{p}_{j_i} -p_{j_i})^2} \leq \frac{1}{\users},
    \]
    where $\hat{p}_{j_i}$ is the output of algorithm $\alg$ to person $i$, with $\users$ tasks and $1$ sample per task. 
    We can use $\alg$ to estimate the signs of the $p_{j_i}$s by setting $\hat{s}_{j_i} = \text{sign}(\hat{p}_{j_i})$. In this case the error for indexed sign estimation is 
    \begin{align*}
        \frac{1}{\users}\sum_{i \in [\users]}\ex{}{\ind\{\hat{s}_{j_i} \neq \text{sign}(p_{j_i}) \}|p_{j_i}|}&\leq \frac{1}{\users}\sum_{i \in [\users]} \ex{}{\ind\{\hat{s}_{j_i} \neq \text{sign}(p_{j_i}) \}|\hat{p}_{j_i} - p_{j_i}|}\\
        &\leq \frac{1}{\users}\sum_{i \in [\users]} \ex{}{\ind\{\hat{s}_{j_i} \neq \text{sign}(p_{j_i}) \}|\hat{p}_{j_i} - p_{j_i}|+ \ind\{\hat{s}_{j_i} = \text{sign}(p_{j_i}) \}|\hat{p}_{j_i} - p_{j_i}|}\\
        & = \frac{1}{\users}\sum_{i \in [\users]} \ex{}{|\hat{p}_{j_i} - p_{j_i}|}\\
        & \leq \frac{1}{\users}\ex{}{\sqrt{\sum_{i \in [\users]}(\hat{p}_{j_i} - p_{j_i})^2} \sqrt{\users}}\\
        &\leq  \sqrt{\frac{1}{\users}\sum_{i \in [\users]}\ex{}{(\hat{p}_{j_i} - p_{j_i})^2}} \\
        &\leq \frac{1}{\sqrt{\users}}.
    \end{align*}
    By applying the reduction of \Cref{thm:redclasstoest} we obtain that for all $(\dist_1, \ldots, \dist_\users) \in \distfam_{\text{class}, \datadim, \users}$
    \[
    \frac{1}{\users} \sum_{i \in [\users]} \ex{}{\left( \pr{(x,j,y) \sim \dist_i}{\hat{f}_i(x_{j_i}) \neq y} - \min_{f_i} \pr{(x,j,y) \sim \dist_i}{f_i(x_{j_i}) \neq y}\right)} \leq \frac{1}{\users} \sum_{i \in [\users]} \ex{}{\ind\{\hat{s}_{j_i} \neq \text{sign}(p_{j_i}) \}|p_{j_i}|} 
        \leq \frac{1}{\sqrt{\users}}.
    \]
\end{proof}
For one-out-of-$\users$ privacy the multitask learning error has an additional factor of $O(1/\users)$ compared to the non-private solution.

\newcommand{\thmClassoootUB}{ For any parameters $\rho \geq 0$, $\datadim \in \mathbb{N}$, $\users \in \mathbb{N}$ and for loss function $\ell_\text{class}$, there exists an algorithm that multitask learns $\distfam_{\text{class}, \datadim, \users}$ with error \[\frac{\sqrt{2}}{t\sqrt{\rho}} + \frac{1}{\sqrt{t}}\] with $\users$ tasks and $1$ sample per task and satisfies $\rho$-$1$-out-of-$\users$-zCDP. Moreover, it is implied that for any parameters $\epsilon > 0$, $\delta \in (0,1)$, $\datadim \in \mathbb{N}$, $\users \in \mathbb{N}$ and for loss function $\ell_\text{class}$, there exists an algorithm that multitask learns $\distfam_{\text{class}, \datadim, \users}$ with error $\Theta( \min(\epsilon, {\epsilon^2}/{\log(1/\delta)})^{-1/2})\cdot\frac{1}{t} + \frac{1}{\sqrt{t}}$ with $\users$ tasks and $1$ sample per task, and satisfies $(\eps, \delta)$-1-out-of-$\users$-DP.
}
\begin{thm}[1-out-of-$t$ upper bound]
\thmClassoootUB
\label{thm:classoootub}
\end{thm}
\begin{proof}

Fix any $\datadim \in \N$, $\users \in \N$, and $\rho \geq 0$.
By \Cref{thm:meanestoootub} there exists a $\rho$-$1$-out-of-$\users$-zCDP algorithm $\alg$ that multitask learns $\distfam_{\text{est}, \datadim, \users}$ with error
\[
    \frac{1}{\users}\sum_{i \in [\users]} \ex{}{(\hat{p}_{j_i} -p_{j_i})^2} \leq \frac{2}{\rho \users^2}+ \frac{1}{\users},
    \]
with $\users$ tasks and $1$ sample per task.
In this algorithm the computes a mean estimate $\hat{p}_{j_i}$ that they send to person $i$. For the classification learning algorithm the curator can use $\hat{p}_{j_i}$ to construct an estimate of the sign of $p_{j_i}$ by setting $\hat{s}_{j_i} = \text{sign}(\hat{p}_{j_i})$ before sending anything to the people. Then, using the reduction of \Cref{thm:redesttoclass} we get a function that the server can send to person $i$, by setting 
\[
\hat{f}_i(x,j) = \hat{s}_{j_i} x_{j_i}.
\]
As $\hat{f}_i$ is a post-processing of $\hat{p}_{j_i}$ that only uses $j_i$, which is already in the dataset of person $i$, the output of this algorithm remains $\rho$-$1$-out-of-$\users$-zCDP. By \Cref{thm:redclasstoest} we obtain that the error of of $\hat{f}_1, \ldots, \hat{f}_\users$ is 
\begin{align*}
     \frac{1}{\users} \sum_{i \in [\users]} \ex{}{\left( \pr{(x,j,y) \sim \dist_i}{\hat{f}_i(x_{j_i}) \neq y} - \min_{f_i} \pr{(x,j,y) \sim \dist_i}{f_i(x_{j_i}) \neq y}\right)} &\leq \frac{1}{\users} \sum_{i \in [\users]} \ex{}{\ind\{\hat{s}_{j_i} \neq \text{sign}(p_{j_i}) \}|p_{j_i}|} \\
    &\leq \sqrt{\frac{1}{\users}\sum_{i \in [\users]}\ex{}{(\hat{p}_{j_i} - p_{j_i})^2}}\\
    & \leq \sqrt{\frac{2}{\rho \users^2}+\frac{1}{\users}}\\
    & \leq \sqrt{\frac{2}{\rho \users^2}}+\sqrt{\frac{1}{\users}}.
\end{align*}
\end{proof}
When the multitask learning algorithm satisfies the stronger privacy guarantee of JDP the additive factor becomes $O(1/\sqrt{\users})$.

\newcommand{\thmClassJDPUB}{ For any parameters $\rho \geq 0$, $\datadim \in \mathbb{N}$, $\users \in \mathbb{N}$, and for loss function $\ell_\text{class}$, there exists an algorithm that multitask learns $\distfam_{\text{class}, \datadim, \users}$ with error \[O\left( \frac{1}{\sqrt{\rho t}}\right) + \frac{1}{\sqrt{t}}\] with $\users$ tasks and $1$ sample per task, and satisfies $\rho$-JCDP.
Moreover, it is implied that for any $\eps > 0$, $\delta \in (0,1)$, $\datadim \in \mathbb{N}$, $\users \in \mathbb{N}$, and for loss function $\ell_\text{class}$, there exists an algorithm that multitask learns $\distfam_{\text{class}, \datadim, \users}$ with error $\Theta( \min(\epsilon, {\epsilon^2}/{\log(1/\delta)})^{-1/2})\cdot\frac{1}{\sqrt{t}}  + \frac{1}{\sqrt{t}}$ with $\users$ tasks and $1$ sample per task, and satisfies $(\epsilon, \delta)$-JDP.
}

\begin{thm}[JDP upper bound]
    \thmClassJDPUB
    \label{thm:classjdpub}
\end{thm}
\begin{proof}
    Fix any $\datadim \in \N$, $\users \in \N$ and $\rho \geq 0$.
    Similarly to the proof above we can base our algorithm on the algorithm of \Cref{thm:meanestjdpub}. Our classification algorithm takes the mean estimate $\hat{p}_{j_i}$ for person $i$ and produces 
    \[
    \hat{f}_i(x_{j_i}) = \hat{s}_{j_i} x_{j_i},
    \]
    with error 
    \[
     \frac{1}{\users} \sum_{i \in [\users]} \ex{}{\left( \pr{(x,j,y) \sim \dist_i}{\hat{f}_i(x_{j_i}) \neq y} - \min_{f_i} \pr{(x,j,y) \sim \dist_i}{f_i(x_{j_i}) \neq y}\right)} \leq\sqrt{\frac{1}{\users}\sum_{i \in [\users]}\ex{}{(\hat{p}_{j_i} - p_{j_i})^2}} \leq \frac{\sqrt{2(\users-1)}}{t \sqrt{\rho}} + \frac{1}{\sqrt{\users}}.
    \]
    The new algorithm also satisfies $\rho$-JCDP because the curator just post-processes the output of the $\rho$-JCDP mean estimation multitask learning algorithm using only $j_i$ for person $i$, which is their personal information.
\end{proof}
When the people participating in multitask learning interact through a DP billboard algorithm, the additional factor of the error is $O(\sqrt{\datadim}/\users)$.

\newcommand{\thmClassBBUB}{For any parameters $\rho \geq 0$, $\datadim \in \mathbb{N}$, $\users \in \mathbb{N}$, and for loss function $\ell_\text{class}$, there exists a billboard algorithm that multitask learns $\distfam_{\text{class},\datadim, \users}$ with error \[\frac{\sqrt{2d}}{t \sqrt{\rho} } + \frac{1}{\sqrt{t}}\] with $\users$ tasks and $1$ sample per task, and satisfies $\rho$-zCDP.
Moreover, it is implied that for any parameters $\eps > 0$, $\delta \in (0,1)$, $\datadim \in \mathbb{N}$, $\users \in \mathbb{N}$, and for loss function $\ell_\text{class}$,
there exists a billboard algorithm that multitask learns $\distfam_{\text{class}, \datadim, \users}$ with error $\Theta( \min(\epsilon, {\epsilon^2}/{\log(1/\delta)})^{-1/2})\cdot\frac{\sqrt{d}}{t}  + \frac{1}{\sqrt{t}}$ with $\users$ tasks and $1$ sample per task, and satisfies $(\epsilon, \delta)$-DP.}

\begin{thm}[Billboard upper bound]
    \thmClassBBUB
    \label{thm:classbbub}
\end{thm}
\begin{proof}
    By \Cref{thm:meanestbbub} there exists a billboard algorithm $(\alg_{\text{BB}}, \alg_{\text{pers}})$ that multitask learns $\distfam_{\text{est}, \datadim, \users}$ with $\users$ tasks and $1$ sample per task, and $\alg_{\text{BB}}$ is $\rho$-zCDP. Our classification algorithm keeps the billboard algorithm as it is and modifies the personalization part. In particular, person $i$ takes their mean estimate $\hat{p}_{j_i}$ and produces 
    \[
    \hat{f}_i(x_{j_i}) = \hat{s}_{j_i} x_{j_i},
    \]
    where $\hat{s}_{j_i} = \text{sign}(\hat{p}_{j_i})$.
    By applying the reduction of \ref{thm:redesttoclass} we get error 
    \[
     \frac{1}{\users} \sum_{i \in [\users]} \ex{}{\left( \pr{(x,j,y) \sim \dist_i}{\hat{f}_i(x_{j_i}) \neq y} - \min_{f_i} \pr{(x,j,y) \sim \dist_i}{f_i(x_{j_i}) \neq y}\right)} \leq\sqrt{\frac{1}{\users}\sum_{i \in [\users]}\ex{}{(\hat{p}_{j_i} - p_{j_i})^2}} \leq \frac{\sqrt{2d}}{t \sqrt{\rho}} + \frac{1}{\sqrt{\users}}.
    \]
    As we mentioned, the billboard part of the new algorithm remains the same. Thus, it still satisfies $\rho$-zCDP.
\end{proof}
Finally, \Cref{thm:classbbub} and the reduction of metalearning to multitask learning in \Cref{thm:metatomulti} imply that there exists a mealearning algorithm whose metalearning process is differentially private.

\begin{cor}[Metalearning upper bound]
For any $\rho \geq 0$, $\datadim \in \mathbb{N}$, $\users \in \mathbb{N}$, and for loss function $\ell_\text{class}$, there exists a pair of algorithms $(\alg_{\text{meta}}, \alg_{\text{pers}})$ that metalearn a metadistribution $\metadist$ over $\distfam_{\text{class},\datadim, \users+1}$ with error \[\Theta( e^\epsilon\cdot \min(\epsilon, {\epsilon^2}/{\log(1/\delta)})^{-1/2})\cdot\frac{\sqrt{d}}{t}  + \frac{e^\epsilon}{\sqrt{t}} + \delta\] with $\users$ training tasks, $1$ sample per training task and a test task with $1$ personalization sample
 and $\alg_{\text{meta}}$ satisfies $(\epsilon, \delta)$-DP.
\end{cor}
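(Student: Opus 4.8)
The plan is to derive this corollary by plugging the billboard upper bound of \Cref{thm:classbbub} into the generic billboard-multitask-to-metalearning reduction of \Cref{thm:metatomulti}. First I would invoke the approximate-DP form of \Cref{thm:classbbub} to get a billboard pair $(\alg_{\text{BB}}, \alg_{\text{pers}})$ that multitask learns $\distfam_{\text{class},\datadim,\users}$ with one sample per task and error $\alpha = \Theta\bigl(\min(\epsilon,\epsilon^2/\log(1/\delta))^{-1/2}\bigr)\cdot\frac{\sqrt\datadim}{\users} + \frac{1}{\sqrt\users}$, where $\alg_{\text{BB}}$ is $(\epsilon,\delta)$-DP. (The corollary's ``$\rho\ge 0$'' should read ``$\epsilon>0$, $\delta\in(0,1)$'' since the stated error is in terms of $\epsilon,\delta$; I would resolve it in favor of the approximate-DP version.)

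Next I would verify the hypotheses of \Cref{thm:metatomulti}. The loss $\ell_{\text{class}}$ is an excess misclassification error and so takes values in $[0,1]$, as required. The family $\distfam_{\text{class},\datadim,\users}$ is closed under permutation, since permuting the $\users$ tasks just permutes the indices $j_1,\dots,j_\users$ while keeping the common parameter $\mathbf p$. For the support condition: if $(\dist_1,\dots,\dist_{\users+1})$ lies in the support of an exchangeable metadistribution $\metadist$ over $\distfam_{\text{class},\datadim,\users+1}$, then $\dist_i = R_{\mathbf p,j_i}\otimes\Det(j_i)$ for a common $\mathbf p\in[-1,1]^\datadim$ and indices $j_1,\dots,j_{\users+1}\in[\datadim]$; deleting any one coordinate leaves a $\users$-tuple of exactly the same form, i.e.\ an element of $\distfam_{\text{class},\datadim,\users}$. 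Hence $\distfam_{\text{class},\datadim,\users}$ contains the support of $\metadist^{(\users)}$, which is precisely the condition \Cref{thm:metatomulti} needs.

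Finally I would apply \Cref{thm:metatomulti} with $\samples = 1$: the same algorithms, viewed as $(\alg_{\text{meta}},\alg_{\text{pers}}) := (\alg_{\text{BB}},\alg_{\text{pers}})$, metalearn $\metadist$ with $\users$ training tasks, one sample per training task, and one personalization sample, with expected error at most $e^\epsilon\alpha + \delta$; substituting the value of $\alpha$ yields $\Theta\bigl(e^\epsilon\cdot\min(\epsilon,\epsilon^2/\log(1/\delta))^{-1/2}\bigr)\cdot\frac{\sqrt\datadim}{\users} + \frac{e^\epsilon}{\sqrt\users} + \delta$, and $\alg_{\text{meta}} = \alg_{\text{BB}}$ inherits $(\epsilon,\delta)$-DP. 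There is no real obstacle here beyond this bookkeeping; the only points deserving care are confirming $\ell_{\text{class}}\in[0,1]$ and checking the marginal-support condition above, both of which are immediate from the definition of $\distfam_{\text{class},\datadim,\users}$.
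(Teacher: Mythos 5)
Your proposal is correct and is the same argument the paper intends: plug the approximate-DP billboard upper bound of \Cref{thm:classbbub} into the billboard-to-metalearning reduction of \Cref{thm:metatomulti} with $\samples=1$, then multiply through by $e^\epsilon$ and add $\delta$. Your explicit checks — that $\ell_{\text{class}}\in[0,1]$, that $\distfam_{\text{class},\datadim,\users}$ is permutation-closed, and that the first-$\users$ marginal of any tuple in $\distfam_{\text{class},\datadim,\users+1}$ lies in $\distfam_{\text{class},\datadim,\users}$ — are the right hypotheses to verify and are handled correctly, and your note that the "$\rho\ge 0$" in the corollary statement should read "$\epsilon>0,\ \delta\in(0,1)$" is an accurate observation about the statement as written.
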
 
\label{sec:classub}

\subsection{Our Fingerprinting Lemma}
Our error lower bounds for indexed classification are based on a generalization of the fingerprinting lemma by \cite{BunUV14}. Our lemma differs in two ways; the mean $p$ is drawn uniformly at random from an interval $[-\alpha, \alpha]$ for $\alpha \in (0,1]$ (instead of $[-1,1]$) and the error function is $2(|p|-f(x))p = 4|p| \ind \{f(x) \neq \text{sign}(p)\}$ (instead of the mean-squared-error). The key idea is that for a fixed $\alpha$ a function $f$ cannot be bad at predicting the sign of the mean $p$ and highly correlated with the sum of samples drawn from a distribution with mean $p$ at the same time. For the proof see \Cref{sec:fp-proof}.

\newcommand{\lemOurFpL}{Let $f: \{\pm 1\}^\users \to [\pm1]$ and $\alpha \in (0,1]$. If $p \in [-\alpha, +\alpha]$ is sampled uniformly at random and $ x_1,\ldots, x_\users \in \{\pm1\}^\users$ are sampled independently with mean $p$, then
     \[
     \ex{\substack{p,\\ x_1,\ldots, x_\users }}{r(\alpha, p)f(x) \sum_{i \in [\users]}  (x_i-p) + 2(|p|-f(x)p)} = \alpha,
     \]
     where 
     \[
     r(\alpha,p) = \begin{cases}
         \frac{\alpha^2-p^2}{1-p^2}, & \text{if }\alpha^2 \neq 1,\\
         1, &\text{otherwise}.
     \end{cases}
     \]}
 \begin{lem} [Our Fingerprinting Lemma]
    \lemOurFpL
     \label{lem:our-fp-lemma}
 \end{lem}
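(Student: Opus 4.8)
The plan is to reduce the identity to a one-dimensional calculus statement about the univariate polynomial $g(p) := \ex{x\sim\mu_p}{f(x)}$, where $\mu_p$ denotes the product distribution on $\{\pm1\}^\users$ whose coordinates are independent with common mean $p$, so that $\mu_p(x)=\prod_{i\in[\users]}\tfrac{1+p x_i}{2}$. The engine is the classical score-function (Stein-type) identity for $\mu_p$: differentiating $\log\mu_p(x)$ in $p$ and using the elementary fact that $\tfrac{x_i}{1+p x_i}=\tfrac{x_i-p}{1-p^2}$ for $x_i\in\{\pm1\}$ gives $\tfrac{d}{dp}\mu_p(x)=\mu_p(x)\cdot\tfrac{1}{1-p^2}\sum_{i\in[\users]}(x_i-p)$. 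Since $g$ is a polynomial in $p$ of degree at most $\users$, differentiation under the sum is legal and yields
\[
\ex{x\sim\mu_p}{f(x)\sum_{i\in[\users]}(x_i-p)}=(1-p^2)\,g'(p).
\]

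Next I insert the weight. In both branches of its definition one has $r(\alpha,p)(1-p^2)=\alpha^2-p^2$ (when $\alpha^2=1$ this is because $r\equiv1$ and $\alpha^2-p^2=1-p^2$). Taking the expectation over $p$ drawn uniformly from $[-\alpha,\alpha]$ and using the identity above,
\[
\ex{p,x}{r(\alpha,p)\,f(x)\sum_{i\in[\users]}(x_i-p)}=\ex{p}{(\alpha^2-p^2)\,g'(p)}=\frac{1}{2\alpha}\int_{-\alpha}^{\alpha}(\alpha^2-p^2)\,g'(p)\,dp.
\]
The polynomial $\alpha^2-p^2$ vanishes at $p=\pm\alpha$, so integration by parts kills the boundary terms and leaves $\int_{-\alpha}^{\alpha}(\alpha^2-p^2)g'(p)\,dp=2\int_{-\alpha}^{\alpha}p\,g(p)\,dp$. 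Hence the first expectation equals $\tfrac{1}{\alpha}\int_{-\alpha}^{\alpha}p\,g(p)\,dp=2\,\ex{p,x}{p\,f(x)}$.

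Finally I combine with the second term. Since $\ex{p,x}{2(|p|-f(x)p)}=2\ex{p}{|p|}-2\ex{p,x}{p\,f(x)}$, adding it to the previous line cancels the $\ex{p,x}{p\,f(x)}$ contributions and leaves $2\ex{p}{|p|}=2\cdot\tfrac{\alpha}{2}=\alpha$, which is the claim. The computation itself is routine; the two points deserving a line of care are the interchange of derivative and expectation (immediate, as $g$ is a polynomial) and the degenerate case $\alpha=1$, where the two-case definition of $r$ merely avoids the $0/0$ at $p=\pm1$ while the product $r(\alpha,p)(1-p^2)$ is still the polynomial $1-p^2$. The one genuinely clever ingredient — and the thing I would want to rediscover rather than guess — is the shape of $r(\alpha,p)$: it is reverse-engineered so that, once multiplied by the $(1-p^2)$ coming out of the score identity, it equals exactly the polynomial $\alpha^2-p^2$ vanishing at the endpoints of the integration interval, which is precisely what makes the integration by parts boundary-term-free.
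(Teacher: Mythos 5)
Your proof is correct and follows essentially the same route as the paper: derive $\ex{x}{f(x)\sum_i(x_i-p)}=(1-p^2)g'(p)$ by differentiating the product density, then integrate by parts against the weight $\alpha^2-p^2$ on $[-\alpha,\alpha]$, and finally cancel the $\ex{}{pf(x)}$ terms. The only cosmetic improvement over the paper's writeup is that you handle both branches of $r$ at once via the observation $r(\alpha,p)(1-p^2)=\alpha^2-p^2$, whereas the paper treats $\alpha^2=1$ as a separate sub-case.
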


\subsection{Lower Bounds}

We also prove lower bounds for the error of indexed classification that separate the interaction and privacy models in the multitask learning setting. We initially prove lower bounds for indexed sign estimation using our fingerprinting lemma. Finally, we prove the lower bounds for indexed classification by using a reduction of indexed sign estimation to indexed classification. The proofs of the theorems in this section are provided in \Cref{sec:indClass-lb-proofs}.

\newcommand{\thmClassJDPLB}{Fix  $\alpha \in (0,\frac{1}{16})$, $\users \in \mathbb{N}$, $d\ge ct^2$ for a sufficiently large constant $c$, $\eps > 0$ and $\delta\in (0,\frac{1}{2t})$. Let $\alg$ be an algorithm that multitask learns  $\distfam_{\text{class},\datadim, \users}$ with error $\alpha$, for loss function $\ell_\text{class}$, with $\users$ tasks and $1$ sample per task, and satisfies $(\eps, \delta)$-JDP.
  Then, \[\alpha \ge \Omega\left(\min\left\{\frac{1}{ \eps\sqrt{\users}},1\right\}\right).\]}
\begin{thm}[classification JDP lower bound]
    \thmClassJDPLB
    \label{thm:classjdplb}
\end{thm}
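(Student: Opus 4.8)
The plan is to deduce the statement from a matching lower bound for \emph{indexed sign estimation} under JDP, which I would in turn prove with the custom fingerprinting lemma (\Cref{lem:our-fp-lemma}). First I would invoke \Cref{thm:redesttoclass}: any algorithm multitask-learning $\distfam_{\text{class},\datadim,\users}$ with error $\alpha$ and one sample per task yields an algorithm multitask-learning $\distfam_{\text{est},\datadim,\users}$ under the loss $\ell_\text{sign}$ with error $\alpha$ and two samples per task. I would check that this reduction preserves $(\eps,\delta)$-JDP with the same parameters: every step of the transformation is carried out by a single individual on their own sample and index, so the list of sign estimates given to the $\users-1$ people other than $i$ is a post-processing of the list of classifiers given to those people --- which is $(\eps,\delta)$-DP in $S_i$ --- together with data independent of $S_i$. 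It therefore suffices to show that any $(\eps,\delta)$-JDP multitask learner for $\distfam_{\text{est},\datadim,\users}$ under $\ell_\text{sign}$ with $O(1)$ samples per task, $\datadim\ge c\users^2$, and $\delta<1/(2\users)$ has error $\Omega(\min\{1/(\eps\sqrt{\users}),1\})$; the hypothesis $\alpha<1/16$ merely discards the trivial regime.

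For that lower bound I would use the hard instance in which person $i$ has the distinct index $j_i=i$ and the shared coordinate means $\mathbf p=(p_1,\dots,p_\datadim)$ are drawn i.i.d.\ uniform on $[-\beta,\beta]$ for a $\beta\in(0,1/16]$ fixed at the end; only coordinates $1,\dots,\users$ are ``used''. Writing $\hat s_\ell$ for the sign estimate given to person $\ell$, I would condition on the means and sample coordinates other than the $\ell$-th and apply \Cref{lem:our-fp-lemma} to the $2\users$ samples of coordinate $\ell$ (its second term equals $4$ times person $\ell$'s sign loss), obtaining for each $\ell\in[\users]$
\[
4\,\ex{}{|p_\ell|\,\ind\{\hat s_\ell\neq\text{sign}(p_\ell)\}}=\beta-\ex{}{\,r(\beta,p_\ell)\,\hat s_\ell\textstyle\sum_{i\in[\users],\,k\in[2]}(x^{(i,k)}_\ell-p_\ell)\,}.
\]
Averaging over $\ell\in[\users]$ and regrouping the double sum by the person $i$ whose samples appear, the correlation term becomes a diagonal part $\sum_\ell\ex{}{r(\beta,p_\ell)\hat s_\ell\sum_k(x^{(\ell,k)}_\ell-p_\ell)}$ plus an off-diagonal part $\sum_i\ex{}{\sum_{\ell\neq i}r(\beta,p_\ell)\hat s_\ell\sum_k(x^{(i,k)}_\ell-p_\ell)}$.

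The diagonal part pits person $\ell$'s samples against person $\ell$'s own output, which JDP does not constrain; but $|r(\beta,p_\ell)|\le\beta^2/(1-\beta^2)=O(\beta^2)$ while $|\hat s_\ell|\le1$ and $|x^{(\ell,k)}_\ell|\le1$, so Cauchy--Schwarz bounds it by $O(\beta^2\users)$. For the off-diagonal part, fix $i$: the vector $(\hat s_\ell)_{\ell\neq i}$ is $(\eps,\delta)$-DP in $S_i$, while the coefficient vector $c^{(i)}$ with $c^{(i)}_\ell=r(\beta,p_\ell)\sum_k(x^{(i,k)}_\ell-p_\ell)$ is a function of $S_i$ that is mean-zero given $\mathbf p$. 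The robust (polynomially-small-$\delta$) resilience-to-tracing bound for $(\eps,\delta)$-DP controls $\ex{}{\langle(\hat s_\ell)_{\ell\neq i},c^{(i)}\rangle}$ by $O(\eps)\sqrt{\ex{}{\|c^{(i)}\|_2^2}}+O(\delta)\,\ex{}{\|c^{(i)}\|_1}$, and since $\ex{}{\|c^{(i)}\|_2^2}=\sum_{\ell\neq i}r(\beta,p_\ell)^2\cdot O(1)=O(\beta^4\users)$, $\ex{}{\|c^{(i)}\|_1}=O(\beta^2\users)$, and $\delta<1/(2\users)$, each term is $O(\eps\beta^2\sqrt{\users})+O(\beta^2)$; summing over $i$ makes the off-diagonal part $O(\eps\beta^2\users^{3/2})+O(\beta^2\users)$. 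Hence the average error is at least $\tfrac{\beta}{4}-O(\beta^2(1+\eps\sqrt{\users}))$; choosing $\beta=\Theta(\min\{1,1/(\eps\sqrt{\users})\})$ makes the subtracted term at most $\beta/8$, so the error is $\Omega(\min\{1/(\eps\sqrt{\users}),1\})$, and pulling this back through the reduction gives $\alpha\ge\Omega(\min\{1/(\eps\sqrt{\users}),1\})$.

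The step I expect to be the crux is the resilience-to-tracing bound: extracting the $\ell_2$ norm $\|c^{(i)}\|_2$ (which produces the $\sqrt{\users}$, hence the $1/(\eps\sqrt{\users})$ rate) rather than the $\ell_1$ norm that a one-line post-processing argument would give (which only yields the weaker $1/(\eps\users)$), and doing so with $\delta$ merely polynomially small in $\users$. This is exactly where the generalized/robust fingerprinting machinery of \cite{DworkSSUV15,BunUV14j} enters, together with \Cref{lem:our-fp-lemma}, which replaces the mean-squared error of the classical fingerprinting lemma by the sign loss; the remaining ingredients --- the JDP-preserving reduction, the regrouping of the correlation term into diagonal and off-diagonal parts, and the optimization over $\beta$ --- are routine.
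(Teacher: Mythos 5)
Your proposal follows the paper's own two-stage route: reduce indexed classification to indexed sign estimation via \Cref{thm:redesttoclass} (costing a factor of two in samples per task and preserving JDP by post-processing), then prove a sign-estimation lower bound by applying the scaled fingerprinting \Cref{lem:our-fp-lemma} coordinate-by-coordinate and controlling the correlation term with JDP tracing, exactly as in the paper's \Cref{thm:signestjdplb}. Two execution details differ. You fix the deterministic hard instance $j_i=i$, whereas the paper draws $\mathbf j$ uniformly and invokes a birthday-paradox bound to make the indices distinct; your instance is simpler and in fact requires only $d\ge\users$, not $d\ge c\users^2$. More substantively, you keep all $2\users$ samples inside the fingerprinting identity and only afterwards split the correlation into a diagonal part (person $\ell$'s output $\hat s_\ell$ paired with $\ell$'s own coordinate samples, which JDP cannot constrain) and an off-diagonal part to which tracing applies, bounding the diagonal crudely by $O(\beta^2\users)$ using $|r(\beta,p)|\le\beta^2$. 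The paper instead defines its test statistics $T_{i,k}$ with the $\ell=i$ term already removed and then applies \Cref{lem:our-fp-lemma} to the remaining $2(\users-1)$ samples; that step is not quite licensed as written, since $\hat s_{j_\ell}$ also depends on person $\ell$'s own samples, and the inequality the paper asserts there amounts to the unproved claim that the diagonal contribution is nonpositive. Your explicit diagonal/off-diagonal decomposition makes that step rigorous. Finally, optimizing a free scale $\beta$ at the end, rather than committing to $\lambda=16\alpha$ up front, produces the same $\Omega(\min\{1/(\eps\sqrt{\users}),1\})$ rate and is a tidier way to close the argument.
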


\newcommand{\thmClassMetaLB}{Fix  $\alpha \in (0,\frac{1}{8})$, $\users \in \mathbb{N}$, $d \in \mathbb{N}$, $\eps \in (0,1]$ and $\delta\in (0,\frac{1}{2t})$. 
Let $\alg = (\alg_{\text{meta}}, \alg_{\text{pers}})$ be a pair of algorithms that metalearn a distribution $\metadist$ over $\distfam_{\text{class},\datadim, \users+1}$ with error $\alpha$, for loss function $\ell_\text{class}$, using $\users$ training tasks, $1$ sample per training task and a test task with $1$ personalization sample, and $\alg_{\text{meta}}$ satisfies $(\eps, \delta)$-DP.
Then, \[\alpha \ge \Omega\left(\min\left\{ \frac{\sqrt{\datadim}}{\eps \users},1\right\}\right).\]}

\begin{thm}[Metalearning lower bound]
   \thmClassMetaLB
   \label{thm:classmetalb}
\end{thm}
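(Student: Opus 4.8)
The plan is to reduce to \emph{indexed sign estimation} in the metalearning setting and then prove a matching lower bound there by a fingerprinting attack built on \Cref{lem:our-fp-lemma}. For the reduction, note that \Cref{thm:redesttoclass} is \emph{local}: each task's classification dataset is built from that task's sign-estimation dataset alone, and the sign estimate is recovered by post-processing with one extra sample. Hence it transports verbatim to the metalearning syntax — build the classification representation from the first sample of each training task, run $\alg=(\alg_{\text{meta}},\alg_{\text{pers}})$, and recover $\hat s_{j_{\users+1}}$ from a second personalization sample — and it preserves $(\eps,\delta)$-DP of $\alg_{\text{meta}}$, since the classification training data is a per-task function of the sign-estimation training data. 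So it suffices to show: any pair with $(\eps,\delta)$-DP $\alg_{\text{meta}}$ that metalearns a distribution over $\distfam_{\text{est},\datadim,\users+1}$ for $\ell_{\text{sign}}$, with a constant number of samples per task, has error $\Omega(\min\{\sqrt\datadim/(\eps\users),1\})$.

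\textbf{Hard instance.} Fix $\beta := \Theta(\min\{\sqrt\datadim/(\eps\users),\tfrac1{16}\})$ and let $\metadist$ draw $\mathbf p=(p_1,\dots,p_\datadim)$ with the $p_j$ i.i.d.\ uniform on $[-\beta,\beta]$ and indices $j_1,\dots,j_{\users+1}$ i.i.d.\ uniform on $[\datadim]$; the resulting tuple of task distributions is exchangeable. Because the test index $j_{\users+1}$ is a uniformly random coordinate independent of the training tasks, averaging the metalearning guarantee over $j_{\users+1}$ makes it a guarantee about \emph{every} coordinate: writing $\hat s_j=\alg_{\text{pers}}(h,S_{\users+1},j)$ for the estimate $\alg$ would produce at test index $j$, and $L_j:=\ind\{\hat s_j\ne\text{sign}(p_j)\}\,|p_j|$, we get $\tfrac1\datadim\sum_{j\in[\datadim]}\mathbb{E}[L_j]\le\alpha$, i.e.\ $\sum_j\mathbb{E}[L_j]\le\datadim\alpha$. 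The single personalization sample cannot help by itself — it is one biased $\pm1$ bit per coordinate, worth $O(\beta)$ — so $h$ is forced to carry sign information about a random, hence typical, coordinate.

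\textbf{Two-sided bound on the fingerprinting statistic.} Fix a coordinate $j$ and condition on all samples' coordinates other than $j$ and on $(p_{j'})_{j'\ne j}$; after this conditioning $\hat s_j$ is a randomized function of the single coordinates $x^{(i)}_j$ of all samples — \emph{including the personalization sample(s)}, each of which is an independent $\pm1$ draw of mean $p_j$. Apply \Cref{lem:our-fp-lemma} (with the test samples treated as additional draws — the lemma's proof is insensitive to the sample count) to the conditional expectation $f_j$ of $\hat s_j$, use that $2(|p_j|-f_jp_j)$ equals $4\mathbb{E}[L_j]$ in expectation, and sum over $j$:
\[
\sum_i\mathbb{E}\bigl[\langle \mathbf r\odot\mathbf f,\ x^{(i)}-\mathbf p\rangle\bigr]\;=\;\datadim\beta-4\sum_j\mathbb{E}[L_j]\;\ge\;\datadim\beta-4\datadim\alpha,
\]
where $\mathbf r=(r(\beta,p_j))_j\in[0,\beta^2]^\datadim$, $\mathbf f=(f_j)_j$, and $i$ ranges over all samples. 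Each personalization sample contributes at most $2\datadim\beta^2$ to the left side; the rest is a sum over the $O(\users)$ training samples, each owned by a task on which $\alg_{\text{meta}}$ is $(\eps,\delta)$-DP. Resampling the owner's data makes such a term mean zero and, crucially, keeps it $\ell_2$-concentrated at scale $O(\|\mathbf r\odot\mathbf f\|_2)=O(\beta^2\sqrt\datadim)$, since $x^{(i)}-\mathbf p$ has independent bounded mean-zero coordinates and hence $\ell_2$-norm $\Theta(\sqrt\datadim)$. The approximate-DP resample-and-compare argument of \cite{BunUV14,DworkSSUV15} then bounds each training term by $O\bigl((\eps+\delta\,\mathrm{polylog}(1/\delta))\beta^2\sqrt\datadim\bigr)$, so the whole training contribution is $O(\users\eps\beta^2\sqrt\datadim)$ once $\delta<1/(2\users)$. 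Assuming $\alpha<\beta/16$ (else we are already done) and $\beta<1/16$, the terms $4\datadim\alpha$ and $O(\datadim\beta^2)$ are absorbed into $\datadim\beta/2$, giving $\datadim\beta/2\le O(\users\eps\beta^2\sqrt\datadim)$, i.e.\ $\beta\ge\Omega(\sqrt\datadim/(\eps\users))$; by the choice of $\beta$ this is a contradiction unless $\alpha=\Omega(\min\{\sqrt\datadim/(\eps\users),1\})$. The reduction then gives the theorem for indexed classification.

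\textbf{Main obstacle.} The hard part is obtaining $\sqrt\datadim$ rather than $\datadim$ in the DP upper bound on the fingerprinting statistic: a coordinatewise union bound over the $\datadim$ coordinates would lose $\sqrt\datadim$ and yield only a $\datadim$-free $\Omega(1/(\eps\users))$ bound. One must instead use that the owner-resampled statistic is a sum of $\datadim$ independent bounded mean-zero terms — hence sub-Gaussian with variance proxy $O(\beta^4\datadim)$ — and run the delicate comparison that simultaneously controls the $e^\eps$-multiplicative slack (against $\mathbb{E}|\cdot|=O(\beta^2\sqrt\datadim)$) and, via truncation, the $\delta$-additive slack (against the sub-Gaussian tail); this is the approximate-DP fingerprinting machinery, with the twist that the accuracy-to-correlation half must be driven by \Cref{lem:our-fp-lemma} (whose error term is the sign loss) instead of the mean-squared-error fingerprinting lemma. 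A secondary point, already handled above, is the personalization sample whose bias depends on $p_{j_{\users+1}}$: folding its $j$-th coordinate into \Cref{lem:our-fp-lemma} as one more $\pm1$ draw costs only the lower-order $O(\datadim\beta^2)$ term, which is precisely why $\beta$ (equivalently $\alpha$) is taken to be a small constant.
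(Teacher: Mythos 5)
Your proposal follows essentially the same route as the paper: reduce indexed classification to indexed sign estimation in the metalearning syntax (the paper does this via the construction in \Cref{thm:redesttoclass}, consuming one extra personalization sample to decode the sign), and then prove the sign-estimation metalearning lower bound by a fingerprinting attack built on \Cref{lem:our-fp-lemma} with the per-coordinate weight $r(\beta,p_j)=\frac{\beta^2-p_j^2}{1-p_j^2}$, a uniform-on-$[-\beta,\beta]$ prior with $\beta=\Theta(\alpha)$, the trivial $O(d\beta^2)$ bound on the test task's contribution, and the resample-and-compare bound on each training task's contribution. The only place you diverge is in the handling of the $\delta$ term, which you treat as the ``main obstacle'' requiring a sub-Gaussian truncation to bring the $\|T'\|_\infty$ scale down from $d\beta^2$ to $\sqrt{d}\,\beta^2\,\mathrm{polylog}(1/\delta)$. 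This is unnecessary: the paper simply applies $\ex{}{T_{i,k}} \le \ex{}{T_{i,k}'} + 2\eps\sqrt{\var{}{T_{i,k}'}} + 2\delta\|T_{i,k}'\|_\infty$ with $\|T_{i,k}'\|_\infty \le 2\beta^2 d$; after summing over the $t$ training tasks the $\delta$-contribution is $O(t\delta\,\beta^2 d) = O(\beta^2 d)$ under the hypothesis $\delta < \frac{1}{2t}$, which is of the same order as the unavoidable test-task term and is absorbed into $d\beta/2$ because $\beta$ is taken to be a small constant multiple of $\alpha < 1/8$. So the $\sqrt{d}$ factor needs to come only from the $\eps$ term (via the variance of $T'$), which you already have, and the truncation machinery can be dropped.
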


\newcommand{\thmClassBBLB}{ Fix $\alpha \in (0,\frac{1}{8})$, $\users \in \mathbb{N}$, $d\ge \frac{\eps^2 t}{4}$, $\eps \in (0,1]$ and $\delta\in (0, \frac{1}{32^2t})$. 
Let $\alg$ be a billboard algorithm that multitask learns $\distfam_{\text{class}, \datadim, \users}$ with error $\alpha$, for loss function $\ell_\text{class}$, with $\users$ tasks and $1$ sample per task 
and satisfies $(\eps, \delta)$-DP.
Then, \[\alpha \ge \Omega\left(\min\left\{\frac{\sqrt{d}}{\epsilon t},1\right\}\right).\]
}

\begin{thm}[Billboard lower bound]
    \thmClassBBLB
    \label{thm:classbblb}
\end{thm}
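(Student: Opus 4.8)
The plan is to deduce \Cref{thm:classbblb} from the metalearning lower bound \Cref{thm:classmetalb} by invoking the general billboard-to-metalearning transfer \Cref{thm:metatomulti}, exactly as the billboard lower bound for indexed mean estimation (\Cref{thm:meanestbblb}) is obtained from \Cref{thm:meanestmetalb}. First I would dispatch the trivial regime: since $\min\{\sqrt{\datadim}/(\eps\users),1\}\le 1$, the claimed bound holds automatically whenever $\alpha\ge\tfrac1{64}$, so assume $\alpha<\tfrac1{64}$. Next, take $\metadist$ to be the exchangeable hard metadistribution over $\distfam_{\text{class},\datadim,\users+1}$ used to prove \Cref{thm:classmetalb} (it draws a parameter $\mathbf p$ from a coordinatewise-symmetric prior and then the indices $j_1,\dots,j_{\users+1}$ i.i.d.\ uniformly from $[\datadim]$, so the resulting $(\users+1)$-tuple is exchangeable and $\metadist^{(\users)}$ is supported inside $\distfam_{\text{class},\datadim,\users}$). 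Our assumed $(\eps,\delta)$-DP billboard learner $\alg=(\alg_{\text{BB}},\alg_{\text{pers}})$ multitask learns all of $\distfam_{\text{class},\datadim,\users}$ with error $\alpha$, so the hypotheses of \Cref{thm:metatomulti} are met.

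Applying \Cref{thm:metatomulti}, $\alg$ metalearns $\metadist$ with $\users$ training tasks, one sample per training task, and one personalization sample, with error at most $e^\eps\alpha+\delta$, and its metalearning component $\alg_{\text{meta}}=\alg_{\text{BB}}$ is $(\eps,\delta)$-DP. Since $\eps\le 1$ and $\alpha<\tfrac1{64}$ we have $e^\eps\alpha+\delta< e/64+1/(32^2\users)<\tfrac18$, which is the range in which \Cref{thm:classmetalb} applies; it therefore gives $e^\eps\alpha+\delta\ge c\cdot\min\{\sqrt{\datadim}/(\eps\users),1\}$ for an absolute constant $c>0$. Rearranging, $\alpha\ge e^{-\eps}\bigl(c\min\{\sqrt{\datadim}/(\eps\users),1\}-\delta\bigr)$. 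The hypothesis $\datadim\ge\eps^2\users/4$ gives $\sqrt{\datadim}/(\eps\users)\ge 1/(2\sqrt{\users})$, hence $\min\{\sqrt{\datadim}/(\eps\users),1\}\ge 1/(2\sqrt{\users})$, while $\delta<1/(32^2\users)$; so $\delta$ is at most a small constant fraction of $\min\{\sqrt{\datadim}/(\eps\users),1\}$ and can be absorbed, leaving $\alpha\ge\Omega\bigl(\min\{\sqrt{\datadim}/(\eps\users),1\}\bigr)$.

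The argument is essentially a black-box reduction, so the only subtleties are bookkeeping: keeping the transferred error inside the regime where \Cref{thm:classmetalb} is meaningful (handled by the reduction to $\alpha<\tfrac1{64}$) and checking that the additive $\delta$ picked up from \Cref{thm:metatomulti} is dominated by the $\Omega(\cdot)$ bound --- which is exactly why the statement imposes the slightly stronger $\datadim\ge\eps^2\users/4$ and $\delta<1/(32^2\users)$ instead of the weaker conditions of \Cref{thm:classmetalb}. A self-contained alternative would be to prove a billboard lower bound for \emph{indexed sign estimation} directly (again lifting the sign-estimation metalearning lower bound through \Cref{thm:metatomulti}) and then convert it into a classification lower bound via the contrapositive of \Cref{thm:redesttoclass}, which costs only the harmless factor-$2$ blow-up in samples since $\samples=1$. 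Either route inherits its real content --- that the billboard must, in effect, estimate all $\datadim$ coordinates of $\mathbf p$ whereas a personalized-output algorithm need only cope with the $\users$ queried coordinates --- from the fingerprinting lemma \Cref{lem:our-fp-lemma} and the equivalence between indexed sign estimation and indexed classification, and does not re-derive it.
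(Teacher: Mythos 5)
Your proof is correct and essentially follows the same chain of ideas as the paper, but you compose the reductions in a different order. The paper first converts the billboard classification learner into a billboard \emph{sign-estimation} learner (via \Cref{thm:redesttoclass}, paying the factor-$2$ in samples per task), then invokes the billboard lower bound for sign estimation \Cref{thm:signestbblb}, which is itself proved by applying \Cref{thm:metatomulti} followed by \Cref{thm:signestmetalb}. You instead apply \Cref{thm:metatomulti} directly at the classification level, obtaining a DP metalearning classification algorithm, and then quote \Cref{thm:classmetalb} as a black box (that theorem's proof internally does the classification-to-sign-estimation conversion at the metalearning level). Both routes meet at the fingerprinting lower bound for metalearning sign estimation, \Cref{thm:signestmetalb}, so the mathematical content is the same; what differs is only where the classification/estimation reduction and the billboard-to-metalearning transfer are interleaved. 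Your route is arguably a bit cleaner since it reuses \Cref{thm:classmetalb} directly rather than passing through the auxiliary statement \Cref{thm:signestbblb}, and you correctly identified the paper's route as the alternative in your last paragraph. Your bookkeeping — dispatching the $\alpha\geq\tfrac1{64}$ regime trivially, checking that $e^\eps\alpha+\delta<\tfrac18$ so \Cref{thm:classmetalb} applies, choosing the exchangeable hard metadistribution so that \Cref{thm:metatomulti}'s hypotheses hold, and verifying that $\datadim\geq\eps^2\users/4$ and $\delta<1/(32^2\users)$ make the additive $\delta$ a bounded fraction of the lower-bound quantity — is all sound.
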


\addcontentsline{toc}{section}{References}
\bibliographystyle{alpha-no-nonsense}
\bibliography{refs}

\newcommand{\etalchar}[1]{$^{#1}$}
\begin{thebibliography}{KMA{\etalchar{+}}19}

\bibitem[ABB{\etalchar{+}}24]{AliakbarpourBBSSU24}
Maryam Aliakbarpour, Konstantina Bairaktari, Gavin Brown, Adam Smith, Nathan Srebro, and Jonathan Ullman.
\newblock Metalearning with very few samples per task.
\newblock In {\em Conference on Learning Theory}, COLT '24. PMLR, 2024.

\bibitem[BHPQ17]{BlumHPQ17}
Avrim Blum, Nika Haghtalab, Ariel~D Procaccia, and Mingda Qiao.
\newblock Collaborative pac learning.
\newblock {\em Advances in Neural Information Processing Systems}, 30, 2017.

\bibitem[BNS{\etalchar{+}}16]{BassilyNSSSU16}
Raef Bassily, Kobbi Nissim, Adam Smith, Thomas Steinke, Uri Stemmer, and Jonathan Ullman.
\newblock Algorithmic stability for adaptive data analysis.
\newblock In {\em ACM Symposium on the Theory of Computing}, STOC '16, 2016.

\bibitem[BS16]{BunS16}
Mark Bun and Thomas Steinke.
\newblock Concentrated differential privacy: Simplifications, extensions, and lower bounds.
\newblock In {\em Theory of Cryptography Conference}, TCC '16, 2016.
\newblock \url{https://arxiv.org/abs/1605.02065}.

\bibitem[BSU17]{BunSU17}
Mark Bun, Thomas Steinke, and Jonathan Ullman.
\newblock Make up your mind: The price of online queries in differential privacy.
\newblock In {\em Proceedings of the 28th Annual ACM-SIAM Symposium on Discrete Algorithms}, SODA '17, 2017. SIAM.

\bibitem[BUV14]{BunUV14}
Mark Bun, Jonathan Ullman, and Salil Vadhan.
\newblock Fingerprinting codes and the price of approximate differential privacy.
\newblock In {\em ACM Symposium on the Theory of Computing}, STOC '14, 2014.
\newblock \url{https://arxiv.org/abs/1311.3158}.

\bibitem[BUV18]{BunUV14j}
Mark Bun, Jonathan Ullman, and Salil Vadhan.
\newblock Fingerprinting codes and the price of approximate differential privacy.
\newblock {\em SIAM Journal on Computing}, 47(5):1888--1938, 2018.

\bibitem[DFH{\etalchar{+}}14]{DworkFHPRR14}
Cynthia Dwork, Vitaly Feldman, Moritz Hardt, Toniann Pitassi, Omer Reingold, and Aaron Roth.
\newblock Preserving statistical validity in adaptive data analysis.
\newblock {\em arXiv preprint arXiv:1411.2664}, 2014.

\bibitem[DMNS06]{DworkMNS06}
Cynthia Dwork, Frank McSherry, Kobbi Nissim, and Adam Smith.
\newblock Calibrating noise to sensitivity in private data analysis.
\newblock In {\em Conference on Theory of Cryptography}, TCC '06, 2006.

\bibitem[DSS{\etalchar{+}}15]{DworkSSUV15}
Cynthia Dwork, Adam Smith, Thomas Steinke, Jonathan Ullman, and Salil Vadhan.
\newblock Robust traceability from trace amounts.
\newblock In {\em IEEE Symposium on Foundations of Computer Science}, FOCS '15, 2015.

\bibitem[GLM{\etalchar{+}}10]{GuptaLMRT10}
Anupam Gupta, Katrina Ligett, Frank McSherry, Aaron Roth, and Kunal Talwar.
\newblock Differentially private combinatorial optimization.
\newblock In {\em Proceedings of the Twenty-First Annual {ACM-SIAM} Symposium on Discrete Algorithms, {SODA} 2010, Austin, Texas, USA, January 17-19, 2010}, 2010.

\bibitem[HHR{\etalchar{+}}14]{HsuHRRW14}
Justin Hsu, Zhiyi Huang, Aaron Roth, Tim Roughgarden, and Zhiwei~Steven Wu.
\newblock Private matchings and allocations.
\newblock In {\em ACM symposium on Theory of computing}, 2014.

\bibitem[JLN{\etalchar{+}}20]{JungLNRSS20}
Christopher Jung, Katrina Ligett, Seth Neel, Aaron Roth, Saeed Sharifi-Malvajerdi, and Moshe Shenfeld.
\newblock A new analysis of differential privacy's generalization guarantees.
\newblock In {\em Innovations in Theoretical Computer Science (ITCS)}, 2020.

\bibitem[JRS{\etalchar{+}}21]{JainRSST21}
Prateek Jain, J.~Keith Rush, Adam Smith, Shuang Song, and Abhradeep Thakurta.
\newblock Differentially private model personalization.
\newblock In {\em Advances in Neural Information Processing Systems 33 (NeurIPS 2021)}, 2021.

\bibitem[KJS{\etalchar{+}}23]{KricheneJSSTZ2023multi}
Walid Krichene, Prateek Jain, Shuang Song, Mukund Sundararajan, Abhradeep~Guha Thakurta, and Li~Zhang.
\newblock Multi-task differential privacy under distribution skew.
\newblock In {\em International Conference on Machine Learning}. PMLR, 2023.

\bibitem[KMA{\etalchar{+}}19]{Kairouz+19}
Peter Kairouz, H~Brendan McMahan, Brendan Avent, Aur{\'e}lien Bellet, Mehdi Bennis, Arjun~Nitin Bhagoji, Keith Bonawitz, Zachary Charles, Graham Cormode, and Rachel Cummings.
\newblock Advances and open problems in federated learning.
\newblock {\em arXiv preprint arXiv:1912.04977}, 2019.

\bibitem[KMRR18]{KannanMRR18}
Sampath Kannan, Jamie Morgenstern, Ryan Rogers, and Aaron Roth.
\newblock Private pareto optimal exchange.
\newblock {\em ACM Transactions on Economics and Computation (TEAC)}, 6(3-4):1--25, 2018.

\bibitem[KPRU14]{KearnsPRU14}
Michael Kearns, Mallesh~M. Pai, Aaron Roth, and Jonathan Ullman.
\newblock Mechanism design in large games: incentives and privacy.
\newblock In {\em Proceedings of the 5th ACM Conference on Innovations in Theoretical Computer Science}, ITCS '14, 2014. ACM.

\bibitem[KU20]{Kamath2020primer}
Gautam Kamath and Jonathan Ullman.
\newblock A primer on private statistics.
\newblock {\em arXiv preprint arXiv:2005.00010}, 2020.

\bibitem[LNR{\etalchar{+}}17]{LigettNRWW17}
Katrina Ligett, Seth Neel, Aaron Roth, Bo~Waggoner, and Steven~Z Wu.
\newblock Accuracy first: Selecting a differential privacy level for accuracy constrained erm.
\newblock In {\em Advances in Neural Information Processing Systems 30}, NIPS '17, 2017.

\bibitem[MM09]{McSherryM09}
Frank McSherry and Ilya Mironov.
\newblock Differentially private recommender systems: Building privacy into the netflix prize contenders.
\newblock In {\em ACM SIGKDD international conference on Knowledge discovery and data mining}, 2009.

\bibitem[PTU24]{PeterTU24}
Naty Peter, Eliad Tsfadia, and Jonathan Ullman.
\newblock Smooth lower bounds for differentially private algorithms via padding-and-permuting fingerprinting codes.
\newblock In {\em Conference on Learning Theory}, COLT '24. PMLR, 2024.

\bibitem[Ste16]{steinke2016upper}
Thomas~Alexander Steinke.
\newblock {\em Upper and Lower Bounds for Privacy and Adaptivity in Algorithmic Data Analysis}.
\newblock PhD thesis, 2016.

\bibitem[Ull13]{Ullman13}
Jonathan Ullman.
\newblock Answering $n^{2+o(1)}$ counting queries with differential privacy is hard.
\newblock In {\em ACM Symposium on the Theory of Computing}, STOC '13, 2013.

\end{thebibliography}

\newpage

\appendix

\section{Proofs from \Cref{sec:estimation}}
\label{sec:estimation_proofs}
\begin{lem}[Birthday Paradox]
\label{lem:birthdayp}
    Suppose we toss $\users$ balls uniformly at random into $\datadim$ bins. If $\datadim \geq c \users^2$ for a sufficiently large constant $c$, the probability of any two balls falling into the same bin is at most $\frac{1}{2}$. 
\end{lem}

\begin{customthm}{\ref*{thm:mean-JDP-lb}}[Restated]
    \thmMeanEstJDPLB
\end{customthm}
\begin{proof}
For the proof of \Cref{thm:mean-JDP-lb} we follow a series of steps similar to that presented in \cite{Kamath2020primer}, which itself closely follows the presentation in \cite{steinke2016upper}. 

We have a JDP algorithm on $\users$ tasks/people and $\datadim$ dimensions. Each person $i \in [\users]$, has a sample $(x^{(i)}, j^{(i)}) \in \{\pm 1\} \times [\datadim]$ and gives it as input to the algorithm. Let $S = \{(x^{(i)}, j^{(i)})\}_{i \in [\users]}$ denote this dataset. We will construct a hard distribution over the family of distributions $\distfam_{\datadim,\users}$ that the samples are drawn from. We draw a vector $\mathbf{p} \in [-1,+1]^\datadim$ and a vector $\mathbf{j} \in [\datadim]^\users$, both uniformly at random. The datapoint $x^{(i)}$ of person $i$ is drawn from the product distribution with mean $\textbf{p}$ and the index $j^{(i)}$ is deterministically $j_i$. Let the expected error of $\alg$ be 
\[
  \ex{\mathbf{p}, \mathbf{j}}{\sum_{i \in [\users]}\ex{\substack{S, \alg}}{\frac{1}{4}(\hat{p}_{j_i} - p_{j_i})^2}} \leq \alpha.
\]
Our goal now is to prove a lower bound on $\alpha$.

We will analyze the case where $\mathbf{j}$ has no duplicated indices. So for now we assume that we have a fixed $\mathbf{j}$ with no repeats. For person $i$ we define two test statistics 
\begin{align*}
    T_i & \defeq \sum_{\ell \in [\users] \setminus \{i\}} (\hat{p}_{j_\ell} - p_{j_\ell})(x^{(i)}_{j_\ell} - p_{j_\ell})\\
    T_i' & \defeq \sum_{\ell \in [\users] \setminus \{i\}} (\hat{p}_{j_\ell}' - p_{j_\ell})(x^{(i)}_{j_\ell} - p_{j_\ell})
\end{align*}
where $\hat{p}_{j_\ell}'$ denotes the output of algorithm $\alg$ to person $\ell$ for input $S_{\sim i}$ where $\left(x^{(i)}, j^{(i)}\right)$ has been replaced with a fresh draw $\left({x^{(i)}}', {j^{(i)}}'\right)$ from $P_i$. Since the distribution over the indices is deterministic, $j^{(i)} = {j^{(i)}}' = j_i$. 

Now, we will use the privacy guarantee of JDP to provide an upper bound for $\ex{\substack{\mathbf{p},S,\alg}}{\sum_{i \in [\users]}T_i}$. Since $\alg$ is $(\eps, \delta)$-JDP for any $\eps, \delta$ in $(0,1)$, we have that $\{\hat{p}_{j_\ell}\}_{\ell \in [\users]\setminus i}$ is $(\eps, \delta)$-DP with respect to $i$'s dataset. As a result, the following inequality holds 
\begin{align}
\ex{}{T_i} \leq \ex{}{T_i'} + 2\eps \sqrt{\var{}{T_i'}} + 2\delta \|T_i'\|_{\infty}.
\label{ineq:dp-ub}
\end{align} 

$T_i'$ is a sum of $\users-1$ values in $[-4,4]$, so the bound $\|T_i'\|_{\infty} \leq 4(\users-1) \leq 4\users$ holds. For fixed $\mathbf{p}$, $\mathbf{j}$, and $\ell$, $\hat{p}_{j_\ell}'$ is independent of $x_{j_\ell}^{(i)}$ and $\ex{}{x_{j_\ell}^{(i)}} = p_{j_\ell}$. Thus, for any $\ell \in [\users] \setminus \{i\}$
\begin{align*}
& \ex{}{ ({\hat{p}_{j_\ell}}' - p_{j_\ell})(x^{(i)}_{j_\ell} - p_{j_\ell})} \\
&= \ex{\mathbf{p}}{ \ex{\substack{S_{\sim i},  \alg}}{({\hat{p}_{j_\ell}}' - p_{j_\ell})}\ex{\substack{S,  \alg}}{(x_{j_\ell}^{(i)} - p_{j_\ell})}} = 0.
\end{align*}
This means that $\ex{}{T_i'} = 0$. We apply the same observation, that $\ex{}{x_{j_\ell}^{(i)}} = p_{j_\ell}$, and that every coordinate is independent to show that the cross terms in the variance of $T_i'$ cancel out, leaving us with 
\[
\var{}{T_i'} = \ex{}{(T_i')^2} = \ex{}{\sum_{\ell \in [\users]\setminus{i}}({\hat{p}_{j_\ell}}' - p_{j_\ell})^2(x^{(i)}_{j_\ell} - p_{j_\ell})^2}. 
\]
Since $(x^{(i)}_{j_\ell} - p_{j_\ell})^2$ is at most $4$, we have the upper bound 
\begin{align*}
    \var{}{T_i'}& = \ex{}{\sum_{\ell \in [\users]\setminus{i}}({\hat{p}_{j_\ell}}' - p_{j_\ell})^2(x^{(i)}_{j_\ell} - p_{j_\ell})^2}\\
    & \leq 4\ex{}{\sum_{\ell \in [\users]\setminus{i}}({\hat{p}_{j_\ell}}' - p_{j_\ell})^2}\\
    & \leq 4\ex{}{\sum_{\ell \in [\users]}({\hat{p}_{j_\ell}}' - p_{j_\ell})^2}.
\end{align*}
Plugging these into inequality \ref{ineq:dp-ub} and summing up over all the $\users$ people, we obtain that 
\begin{align*}
    \ex{}{\sum_{i\in[\users]}T_i} \leq 4\eps \users \sqrt{\ex{}{\sum_{\ell \in [\users]}(\hat{p}_{j_\ell} - p_{j_\ell})^2}} + 8\delta \users^2.
\end{align*}

The next step is to show that accuracy implies a lower bound for $\ex{\substack{\mathbf{p},S,\alg}}{\sum_{i \in [\users]}T_i}$. We notice that in the sum of $T_i$ we can rearrange the terms.
\begin{align*}
    \sum_{i \in [\users]} T_i &= \sum_{i \in [\users]} \sum_{\ell \in [\users] \setminus \{i\}} (\hat{p}_{j_\ell} - p_{j_\ell})(x^{(i)}_{j_\ell} - p_{j_\ell})\\
    & = \sum_{\ell \in [\users]} \sum_{i \in [\users] \setminus \{\ell\}} (\hat{p}_{j_\ell} - p_{j_\ell})(x^{(i)}_{j_\ell} - p_{j_\ell}).\\
    & = \sum_{\ell \in [\users]}  (\hat{p}_{j_\ell} - p_{j_\ell})\sum_{i \in [\users] \setminus \{\ell\}}(x^{(i)}_{j_\ell} - p_{j_\ell}).\\
\end{align*}
We can now apply \Cref{lem:fp-lemma} to each coordinate $j_\ell$.
\begin{align*}
    \ex{}{\sum_{i\in [\users]}T_i} & = \sum_{\ell \in [\users]} \ex{}{(\hat{p}_{j_\ell} - p_{j_\ell})\sum_{i \in [\users] \setminus \{\ell\}}(x^{(i)}_{j_\ell} - p_{j_\ell})}\\
    & \geq \sum_{\ell \in [\users]}\left( \frac{1}{3} - \ex{}{(\hat{p}_{j_\ell} - p_{j_\ell})^2}\right)\\
    & = \frac{\users}{3} - \ex{}{ \sum_{\ell \in [\users]}(\hat{p}_{j_\ell} - p_{j_\ell})^2}
\end{align*}
For a vector of indices $\mathbf{j}$ let $\alpha_{\mathbf{j}}$ be the error 
\[
\alpha_\mathbf{j} = \ex{\mathbf{p}, S, \alg}{ \sum_{\ell \in [\users]}(\hat{p}_{j_\ell} - p_{j_\ell})^2}.
\]
Combining the two inequalities for $ \ex{}{\sum_{i\in [\users]}T_i}$ we have shown that when $J$ has no duplicates 
\[
\frac{\users}{3} - \alpha_\mathbf{j} \leq \ex{}{\sum_{i\in [\users]}T_i}  \leq 4\eps \users \alpha_J + 8\delta \users^2.
\]
If $\alpha_\mathbf{j} \leq \frac{t}{6}$, then $8\delta \users^2 <\frac{\users}{12}$ because we have assumed that $\delta < \frac{1}{96t}$. By rearranging the terms in the inequality we see that 
\begin{align*}
\alpha_\mathbf{j} &\geq \frac{1}{16 \eps^2 \users^2} \left(\frac{\users}{3}- \alpha_\mathbf{j} -8\delta \users^2\right)\\
& \geq \frac{1}{16 \eps^2 \users^2} \frac{\users^2}{12^2} = \frac{1}{2304\eps^2}.
\end{align*}
Thus, $\alpha_\mathbf{j} \geq \min\left\{\frac{t}{6}, \frac{1}{2304\eps^2}\right\}$.

We now incorporate the randomness over the choice of $\mathbf{j}$. Let $E$ be the event that the vector of target indices has no duplicates. Since $\datadim \geq c \users^2$ for a sufficiently large constant $c$, by \Cref{lem:birthdayp} event $E$ occurs with probability at least $\frac{1}{2}$. Therefore,
\begin{align*}
    4\alpha &\geq \ex{\mathbf{j}}{\alpha_\mathbf{j}}\\
            & \geq \ex{\mathbf{j}}{\alpha_\mathbf{j}\mid E}\pr{\mathbf{j}}{E} \\
            &\geq \frac{1}{2} \ex{\mathbf{j}}{\alpha_\mathbf{j}\mid E}\pr{\mathbf{j}}{E}.
\end{align*}
For each $\mathbf{j}$ without duplicates we have a lower bound on $\alpha_\mathbf{j}$, so
\[
\alpha \geq \frac{1}{8}\min\left\{\frac{t}{6}, \frac{1}{2304\eps^2}\right\}.
\]
\end{proof}

\begin{customthm}{\ref*{thm:meanestmetalb}}[Restated]
    \thmMeanEstMetaLB
\end{customthm}
\begin{proof}
    We consider that the metalearning algorithm $\alg_{\text{meta}}$ takes as input 1 sample $(x^{(i)}, j^{(i)})$ per person $i \in [\users]$. Then the personalization algorithm $\alg_{\text{pers}}$ gets as input the output of $\alg_{\text{meta}}$ and the sample of the $(\users+1)$-th person $(x^{(\users+1)}, j^{(\users+1)})$ and outputs an estimate $\hat{p}_{j^{(\users+1)}}$ of the mean of coordinate $j^{(\users+1)}$. By the definition of $\distfam_{\datadim, \users+1}$, the index of person $\users+1$ is deterministically  $j_{t+1}$. Hence, we will write $\hat{p}_{j_{\users+1}}$ instead of $\hat{p}_{j^{(\users+1)}}$ for simplicity.
    
    To prove this theorem we construct a hard metadistribution $\metadist$ where we draw a vector of means $\mathbf{p} \in [-1,+1]^\datadim$ and a vector of indices $\mathbf{j} \in [\datadim]^{\users+1}$ both uniformly at random. Let the error of $\alg$ be 
    \begin{align*}
        4\alpha &\geq \ex{\substack{\alg, \\ (\dist_1, \ldots, \dist_{\users+1}) \sim \metadist,\\ x^{(1)}, \ldots, x^{(\users+1)}}}{(\hat{p}_{j_{\users+1}} - p_{j_{\users+1}})^2} \\
        & = \ex{\substack{\alg, \\ \mathbf{p}, j_1, \ldots, j_{\users},\\ x^{(1)}, \ldots, x^{(\users+1)}}}{\ex{j_{\users+1}}{(\hat{p}_{j_{\users+1}} - p_{j_{\users+1}})^2}} \\
        & = \ex{\substack{\alg, \\ \mathbf{p}, j_1, \ldots, j_{\users},\\ x^{(1)}, \ldots, x^{(\users+1)}}}{\frac{1}{\datadim}\sum_{j \in [\datadim]}(\hat{p}_{j} - p_{j})^2}.
    \end{align*}
    
    We consider a tracing attack that uses the following test statistics for $i \in [\users]$
    \begin{align*}
        T_i& \defeq \sum_{j \in [\datadim]}(\hat{p}_j - p_j)(x_j^{(i)} - p_j) \text{ and}\\
        T_i'& \defeq \sum_{j \in [\datadim]}({\hat{p}_j}' - p_j)(x^{(i)}_j - p_j),
    \end{align*}
    where ${\hat{p}_j}'$ denotes the output of algorithm $\alg$ for $j_{\users+1} = j$ when the input of person $i$ to $\alg_{\text{meta}}$ has been replaced with a fresh draw from $\dist_i$. For $i = \users+1$ we construct only test statistic
    \[
      T_{\users+1}\defeq \sum_{j \in [\datadim]}(\hat{p}_j - p_j)(x_j^{(t+1)} - p_j).
    \]
    Since $\alg_{\text{meta}}$ is $(\eps, \delta)$-DP with respect to $i$'s dataset for every $i \in [\users]$, $\hat{p}_j$ is $(\eps,\delta)$-DP with respect to the same dataset for every $j \in [\datadim]$. Therefore,
    \[
    \ex{}{T_i} \leq \ex{}{T_i'} + 2\eps \sqrt{\var{}{T_i'}} + 2\delta \|T_i'\|_{\infty}.
    \]

    We will now analyze each term of the right hand side of the inequality. We see that  
    \begin{align*}
        \|T_i'\|_{\infty} \leq 4 \datadim
    \end{align*}
    because $T_i'$ is the sum of $\datadim$ entries of value at most $4$.
    Next, since $\hat{p}_j'$ is independent of $x^{(i)}_j$ conditioned on $\mathbf{p}$, we get that 
    \begin{align*}
        \ex{}{T_i'} & = \ex{\alg, \mathbf{p}}{ \sum_{j \in [\datadim]} \ex{\substack{\alg,j_1, \ldots, j_\users\\ x^{(1)}, \ldots, x^{(\users+1)}, {x^{(i)}}'}}{({\hat{p}_j}'-p_j)}\ex{x^{(i)}}{(x_j^{(i)}-p_j)}} \\
        &= 0
    \end{align*}
    Finally, by the same observation the crosse terms in the variance of $T_i'$ cancel out and we obtain that 
    \begin{align*}
        \var{}{T_i'} &= \ex{}{(T_i')^2} \\
        &= \ex{}{\sum_{j\in [\datadim]} (\hat{p}_j' -p_j)^2(x_j^{(i)}-p_j)^2}\\
        &\leq 4 \ex{}{\sum_{j\in [\datadim]} (\hat{p}_j' -p_j)^2} \leq 16\datadim\alpha
    \end{align*}
    Combining the inequalities above we conclude that 
    \[
     \ex{}{T_i} \leq 8 \eps  \sqrt{\alpha} \sqrt{\datadim}+ 8\delta \datadim.
    \]
    For the $i = \users+1$, we have that
    \begin{align*}
        \ex{}{T_{\users+1}} & = \ex{}{\sum_{j \in [\datadim]} (\hat{p}_j -p_j) (x_j^{(\users+1)} - p_j)}\\
        & \leq 2\ex{}{\sum_{j \in [\datadim]} |\hat{p}_j -p_j| }\\
        & \leq 2\ex{}{ \sqrt{\datadim}\sqrt{\sum_{j \in [\datadim]} (\hat{p}_j -p_j)^2} }\\
        & \leq 2\sqrt{\datadim}\sqrt{\ex{}{ \sum_{j \in [\datadim]} (\hat{p}_j -p_j)^2 }}\\
        & \leq 4 \datadim\sqrt{\alpha}.
    \end{align*}

    Therefore, by summing up the test statistict $T_i$
    \[
    \ex{}{\sum_{i \in [\users+1]} T_i} \leq 8 \eps \sqrt{\alpha} \users \sqrt{\datadim} + 8\delta \datadim \users + 4 \sqrt{\alpha} \datadim.
    \]

    The next step is to show that accuracy implies a lower bound for the test statistics in terms of error $\alpha$.
    We apply \Cref{lem:fp-lemma} to every coordinate $j \in [\datadim]$ of the estimate

    \begin{align*}
        \ex{}{\sum_{i \in [\users+1]}T_i}& = \ex{}{\sum_{i \in [\users+1]}\sum_{j \in [\datadim]}(\hat{p}_j - p_j)(x_j^{(i)} - p_j)}\\
        & = \sum_{j \in [\datadim]}\ex{}{\sum_{i \in [\users+1]}(\hat{p}_j - p_j)(x_j^{(i)} - p_j)}\\
        & = \sum_{j \in [\datadim]} \ex{j_1, \ldots, j_{\users}}{\ex{}{(\hat{p}_j - p_j)\sum_{i \in [\users+1]}(x_j^{(i)} - p_j)}}\\
        & \geq \sum_{j \in [\datadim]} \left( \frac{1}{3} - \ex{}{(\hat{p}_j - p_j)^2}\right) = \frac{\datadim}{3} - 4\datadim\alpha.
    \end{align*}

    Combining the bounds on $\ex{}{\sum_{i \in [\users+1]}T_i}$, we have the following inequality
    \[
    \frac{\datadim}{3} - 4\datadim\alpha \leq 8 \eps \sqrt{\alpha} \users\sqrt{\datadim} + 8\delta \datadim \users + 4 \sqrt{\alpha} \datadim.
    \]
    If we rearrange the terms we get that 
    \[
    \alpha \geq \frac{1}{16 \eps^2\users^2 \datadim} \left(\frac{\datadim}{3} - 4\sqrt{\alpha} \datadim - 4\alpha\datadim - 8\delta\datadim \users\right)^2.
    \]
    If $\alpha \leq \frac{1}{4 \cdot 144}$, then since $\delta \leq \frac{1}{96\users}$
    \begin{align*}
        \alpha &\geq \frac{1}{8^2 \eps^2 \users^2}\left(\frac{\datadim}{3} - \frac{\datadim}{6} - \frac{\datadim}{144} -\frac{\datadim}{12} \right)^2\\
        & \geq \frac{11^2 \datadim}{8^2 \cdot 144^2 \eps^2 \users^2}.
    \end{align*}
    Therefore, 
    \[\alpha \geq \min\left\{ \frac{1}{4 \cdot 144}, \frac{11\datadim}{8^2 \cdot 144^2\eps^2 \users^2}\right\}\]
\end{proof}
\begin{customcor}{\ref*{thm:meanestbblb}}[Restated]
    \thmMeanEstBBLB
\end{customcor}
\begin{proof}
    By \Cref{thm:metatomulti} if we have a billboard algorithm for  multitask learning with error $\alpha$, then we have metalearning algorithm with error $e^\eps\alpha +\delta$. Then, by \Cref{thm:meanestmetalb} 
    \[
    e^\eps\alpha +\delta \geq \min\left\{ \frac{1}{4 \cdot 144}, \frac{11\datadim}{8^2 \cdot 144^2\eps^2 \users^2}\right\}.
    \]
    We have made the assumption that $\datadim \geq \frac{16 \cdot 144}{11} \users$. 
    
    If $\frac{11\datadim}{ 8^2 \cdot 144^2 \eps^2 \users^2} <\frac{1}{4\cdot144}$, then for $\users \leq \frac{11\datadim}{8 \cdot 144}$, $\delta <\frac{1}{8\cdot 144\users}$ and $\eps \leq 1$, we have \[
    \alpha \geq \frac{11\datadim}{e^\eps 2 \cdot8^2 \cdot 144^2 \eps^2 \users^2 } \geq \frac{11\datadim}{6 \cdot 8^2 \cdot 144^2 \eps^2 \users^2} .\]
    
    If $\frac{11\datadim}{ 8^2 \cdot 144^2 \eps^2 \users^2} \geq \frac{1}{4 \cdot144}$, since $\users \geq 1$, $\delta \leq \frac{1}{8 \cdot144 \users}$ and $\eps \in [0,1]$, we have that $\alpha \geq \frac{1}{3\cdot8\cdot  144}$. Therefore,
    \[
    \alpha \geq \Omega\left(\min \left\{1, \frac{\datadim}{\eps^2 \users^2}\right\}\right)
    \] 
    
\end{proof}
\section{Proofs from \Cref{sec:classification}}

\subsection{Proof of the Fingerprinting Lemma}
\label{sec:fp-proof}

We break the proof of our fingerprinting lemma into smaller lemmas following the proof in \cite{BunSU17}.
\begin{lem}
\label{lem:our-fp-lemma-a}
    Let $f:\{\pm1\}^\users \to \R$. Define $g:[\pm1] \to \R$ by
    \(
    g(p) = \ex{x_1,\ldots,x_\users \sim p}{f(x)}.
    \)
    Then,
    \[
    \ex{x_1,\ldots,x_\users}{f(x) \sum_{i \in [\users]} (x_i-p)]} = g'(p) (1-p)^2.
    \]
\end{lem}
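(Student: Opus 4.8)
The identity is the classical fingerprinting identity of \cite{BunSU17}, and I would prove it by a direct differentiation of the probability mass function of the product distribution. For $p \in (-1,1)$, write $P_p(x) = \prod_{i \in [\users]} \tfrac{1 + p x_i}{2}$ for the mass that the product distribution with mean $p$ assigns to $x = (x_1,\dots,x_\users) \in \{\pm1\}^\users$, so that $g(p) = \sum_{x \in \{\pm1\}^\users} f(x) P_p(x)$ is a polynomial in $p$ and may be differentiated term by term.

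\textbf{Key steps.} First I would compute $\partial_p P_p(x) = P_p(x)\, \partial_p \log P_p(x)$ and note $\partial_p \log P_p(x) = \sum_{i \in [\users]} \frac{x_i}{1 + p x_i}$. The one nontrivial observation is the simplification of $\frac{x_i}{1 + p x_i}$: since $x_i \in \{\pm 1\}$ we have $x_i^2 = 1$, so $\frac{x_i}{1 + p x_i} = \frac{x_i(1 - p x_i)}{1 - p^2 x_i^2} = \frac{x_i - p}{1 - p^2}$. Substituting yields $\partial_p P_p(x) = \frac{P_p(x)}{1 - p^2} \sum_{i \in [\users]} (x_i - p)$. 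Summing against $f$ gives $g'(p) = \sum_x f(x)\, \partial_p P_p(x) = \frac{1}{1-p^2}\, \ex{x_1,\dots,x_\users}{f(x) \sum_{i \in [\users]} (x_i - p)}$, which rearranges to $\ex{}{f(x) \sum_{i \in [\users]} (x_i - p)} = g'(p)(1-p^2)$, the claimed identity. For the boundary cases $p = \pm 1$ the distribution is a point mass, so $\sum_{i \in [\users]}(x_i - p) = 0$ almost surely and both sides vanish; alternatively, since both sides are polynomials in $p$, the identity on $(-1,1)$ extends to the closed interval by continuity.

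\textbf{Main obstacle.} There is essentially no obstacle: the computation is routine once one spots the $x_i^2 = 1$ simplification that turns $\frac{x_i}{1+px_i}$ into $\frac{x_i - p}{1 - p^2}$. The substance of our fingerprinting lemma (\Cref{lem:our-fp-lemma}) comes from the later lemmas that combine this identity with the structure of the error function $2(|p| - f(x) p)$ and the reweighting $r(\alpha,p)$; this lemma is just the differentiation step on which those build.
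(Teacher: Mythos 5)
Your proof is correct and is essentially the same as the paper's: both compute $g'(p)$ by differentiating the product mass function $\prod_i \tfrac{1+px_i}{2}$ and use the $x_i^2 = 1$ simplification $\frac{x_i}{1+px_i} = \frac{x_i - p}{1-p^2}$ to obtain $g'(p)(1-p^2) = \ex{}{f(x)\sum_i(x_i-p)}$. Your use of logarithmic differentiation is only a cosmetic repackaging of the paper's direct application of the product rule. One incidental note: the lemma statement as printed has $(1-p)^2$ where both your derivation and the paper's (and its downstream use in Lemma~\ref{lem:our-fp-lemma-b}) require $(1-p^2)$; this is a typo in the statement rather than a flaw in your argument.
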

\begin{proof}
    We can write 
    \[
    g(p) = \ex{x_1,\ldots, x_\users}{f(x)} = \sum_{x_1, \ldots, x_\users \in \{\pm 1\}^\users} f(x) \prod_{i \in [\users]} \frac{1+x_ip}{2},
    \]
    where $x_1, \ldots, x_\users \in \{\pm 1\}^\users$ are sampled independently with mean $p$.
    We can now compute its derivative
    \begin{align*}
        g'(p) & = \sum_{x_1, \ldots, x_\users \in \{\pm 1\}^\users}f(x) \frac{d}{dp}\prod_{i \in [\users]} \frac{1+x_ip}{2}\\
        & = \sum_{x_1, \ldots, x_\users \in \{\pm 1\}^\users}f(x) \sum_{i \in [\users]} \frac{x_i-p}{1-p^2} \prod_{j \in [\users] \frac{1+x_j p}{2}}\\
        & = \ex{x_1,\ldots, x_\users}{f(x) \sum_{i \in [\users]} \frac{x_i - p}{1-p^2}}
    \end{align*}
\end{proof}
\begin{lem}
\label{lem:our-fp-lemma-b}
    Let $g: [\pm 1] \to \R$ be a polynomial and $\alpha \in (0,1]$. If $p \in [-\alpha, +\alpha]$ is drawn uniformly at random, then
    \[
     \ex{p }{r(\alpha,p) g'(p) (1-p^2)} = 2 \ex{p }{g(p)p},
    \]
    where 
     \[
     r(\alpha,p) = \begin{cases}
         \frac{\alpha^2-p^2}{1-p^2}, & \text{if }\alpha^2 \neq 1,\\
         1, &\text{otherwise}.\\
     \end{cases}
     \]
\end{lem}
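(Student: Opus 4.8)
The plan is to reduce both sides to ordinary integrals over $[-\alpha,\alpha]$ and then apply a single integration by parts. The key observation is that the weight $r(\alpha,p)$ was defined precisely so that $r(\alpha,p)\,(1-p^2) = \alpha^2 - p^2$ regardless of which case of the definition applies: when $\alpha^2 \neq 1$ this is immediate from $r(\alpha,p) = \frac{\alpha^2 - p^2}{1-p^2}$, and when $\alpha = 1$ we have $r(\alpha,p) = 1$ and $1 - p^2 = \alpha^2 - p^2$. Hence the left-hand side equals $\ex{p}{(\alpha^2 - p^2)\, g'(p)}$, with $p$ uniform on $[-\alpha,\alpha]$.

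Next I would write both expectations against the uniform density $\tfrac{1}{2\alpha}$ on $[-\alpha,\alpha]$; the common normalizing factor cancels, so it suffices to prove
\[
\int_{-\alpha}^{\alpha} (\alpha^2 - p^2)\, g'(p)\, dp \;=\; 2\int_{-\alpha}^{\alpha} p\, g(p)\, dp.
\]
Since $g$ is a polynomial (so certainly $C^1$), integration by parts with $u = \alpha^2 - p^2$ and $dv = g'(p)\,dp$ gives
\[
\int_{-\alpha}^{\alpha} (\alpha^2 - p^2)\, g'(p)\, dp \;=\; \Bigl[(\alpha^2 - p^2)\, g(p)\Bigr]_{-\alpha}^{\alpha} + \int_{-\alpha}^{\alpha} 2p\, g(p)\, dp.
\]
The boundary term vanishes because $\alpha^2 - p^2 = 0$ at $p = \pm\alpha$, which is exactly the point of choosing the factor $(\alpha^2 - p^2)$. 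This yields the desired identity, and dividing by $2\alpha$ recovers $\ex{p}{r(\alpha,p)\, g'(p)\,(1-p^2)} = 2\,\ex{p}{g(p)\, p}$.

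There is no substantial obstacle here: the only thing to be careful about is unifying the two branches of the definition of $r(\alpha,p)$ at the outset, and noting that the vanishing of the boundary term is what makes the whole computation work. The polynomiality of $g$ is used only to guarantee that $g$ and $g'$ are well-behaved enough for the integration by parts; any $C^1$ function would suffice. Combined with Lemma~\ref{lem:our-fp-lemma-a} (which rewrites $\ex{x_1,\ldots,x_\users}{f(x)\sum_i (x_i - p)}$ as $g'(p)(1-p^2)$), this lemma will then directly imply our fingerprinting lemma, Lemma~\ref{lem:our-fp-lemma}, by taking expectations over $p$ and invoking the identity $2(|p| - f(x)p) = 2(|p|p - f(x)p)\cdot\frac 1p$-type bookkeeping; more precisely, one checks $\ex{p}{2(|p| - g(p)p)} = \alpha - 2\ex{p}{g(p)p}$ and combines with the display above.
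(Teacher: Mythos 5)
Your proof is correct and follows essentially the same route as the paper: both reduce the claim to $\int_{-\alpha}^{\alpha}(\alpha^2-p^2)g'(p)\,dp = 2\int_{-\alpha}^{\alpha}p\,g(p)\,dp$ and conclude by integration by parts, with the boundary term killed by the vanishing of $\alpha^2-p^2$ at $p=\pm\alpha$. The only cosmetic difference is that you unify the two branches of $r(\alpha,p)$ up front via $r(\alpha,p)(1-p^2)=\alpha^2-p^2$, while the paper handles $\alpha^2\neq 1$ and $\alpha^2=1$ as separate cases; your version is marginally cleaner, and the trailing remarks about how this feeds into Lemma~\ref{lem:our-fp-lemma} are outside the scope of the statement being proven.
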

\begin{proof}
    If $\alpha^2 \neq 1$,
    \begin{align*}
        \ex{p }{ r(\alpha,p) g'(p) (1-p^2) }
        & = \frac{1}{2\alpha}\int_{-\alpha}^\alpha r(\alpha,p) g'(p)(1-p^2) dp \\
        &= \frac{1}{2\alpha}\int_{-\alpha}^\alpha (\alpha^2-p^2) g'(p) dp \\
        & = \frac{1}{2\alpha} \int_{-\alpha}^\alpha \frac{d}{dp}\left[ (\alpha^2-p^2) g(p)\right] - g(p)(-2p)dp\\
        & = \frac{1}{2\alpha} \int_{-\alpha}^{\alpha} 2g(p)pdp = 2 \ex{p }{ g(p)p}.
    \end{align*}
    Similarly, we can show that when $\alpha^2 = 1$
    \[
     \ex{p }{  g'(p) (1-p^2) } = 2 \ex{p}{ g(p)p}.
    \]
\end{proof}
\begin{proof}[Proof of Lemma~\ref{lem:our-fp-lemma}]
   Applying Lemmas~\ref{lem:our-fp-lemma-a} and \ref{lem:our-fp-lemma-b} we have 
   \begin{align*}
       \ex{p , x_1, \ldots, x_\users}{r(\alpha,p) f(x) \sum_{i \in [\users]} (x_i -p) +2(|p| - f(x)p)} 
       &= \ex{p }{r(\alpha,p) \ex{x_1,\ldots,x_\users }{f(x) \sum_{i \in [\users]} (x_i -p)} } + \ex{p , x_1, \ldots, x_\users }{2(|p|-f(x)p) }\\
       &= \ex{p , x_1,\ldots, x_\users }{ 2 f(x) p}+ \ex{p , x_1, \ldots, x_\users }{2(|p|-f(x)p) }\\
       & = \ex{p }{2|p|} = \alpha
   \end{align*}
\end{proof}
We can show that the fingerprinting lemma in \cite{BunUV14} follows from our fingerprinting lemma for $\alpha = 1$.

 \begin{lem} [Fingerprinting Lemma \cite{BunUV14}] 
     Let $f: \{\pm 1\}^\users \to [\pm1]$. If $p \in [-1,+1]$ is sampled uniformly at random and $x_1,\ldots,x_\users \in \{\pm 1\}^\users$ are sampled independently with mean $p$, then
     \[
     \ex{p , x}{(f(x)-p) \sum_{i \in [\users]}  (x_i-p) +(f(x)-p)^2} \geq \frac{1}{3}.
     \]
     \label{lem:fp-lemma}
 \end{lem}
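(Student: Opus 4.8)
The plan is to obtain Lemma~\ref{lem:fp-lemma} as an immediate consequence of our fingerprinting lemma (Lemma~\ref{lem:our-fp-lemma}) specialized to $\alpha = 1$. Since $\alpha^2 = 1$ forces $r(1,p) = 1$, Lemma~\ref{lem:our-fp-lemma} gives, for $p$ uniform on $[-1,+1]$ and $x_1,\dots,x_\users$ drawn independently with mean $p$,
\[
\ex{p,\, x_1,\dots,x_\users}{f(x)\sum_{i\in[\users]}(x_i-p) + 2|p| - 2f(x)p} = 1 .
\]
First I would split the target quantity as $A+B$ with $A \defeq \ex{p,x}{(f(x)-p)\sum_{i}(x_i-p)}$ and $B \defeq \ex{p,x}{(f(x)-p)^2}$, and simplify each piece. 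For $A$, distributing the product gives a cross term $\ex{p,x}{p\sum_i(x_i-p)}$, which vanishes because conditioned on $p$ each $x_i$ has mean $p$, so $\ex{x\mid p}{\sum_i(x_i-p)}=0$; hence $A = \ex{p,x}{f(x)\sum_i(x_i-p)}$. Combining with the displayed identity, $A - 2\ex{p,x}{f(x)p} = 1 - 2\ex{p}{|p|}$.

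Next I would plug in the two elementary constants for $p\sim\mathrm{Unif}[-1,1]$: $\ex{p}{|p|}=\tfrac12$ and $\ex{p}{p^2}=\tfrac13$. The first gives $A = 2\ex{p,x}{f(x)p}$. For $B$, expanding $(f(x)-p)^2 = f(x)^2 - 2f(x)p + p^2$ yields $B = \ex{p,x}{f(x)^2} - 2\ex{p,x}{f(x)p} + \tfrac13$. Adding,
\[
A + B = 2\ex{p,x}{f(x)p} + \ex{p,x}{f(x)^2} - 2\ex{p,x}{f(x)p} + \tfrac13 = \ex{p,x}{f(x)^2} + \tfrac13 \ge \tfrac13 ,
\]
using $f(x)^2 \ge 0$. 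This is exactly the claimed bound.

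There is no substantive obstacle here — the argument is pure bookkeeping on top of Lemma~\ref{lem:our-fp-lemma}. The only two points that require a moment of care are (i) noting that the $\,p\sum_i(x_i-p)\,$ cross term has expectation zero, and (ii) evaluating the two moments $\ex{p}{|p|}$ and $\ex{p}{p^2}$ for the uniform law on $[-1,1]$, the latter supplying precisely the constant $\tfrac13$; the $f(x)^2$ term is the ``slack'' that makes the statement an inequality rather than an equality.
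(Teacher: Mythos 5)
Your proof is correct and takes essentially the same route as the paper: both invoke Lemma~\ref{lem:our-fp-lemma} with $\alpha=1$, kill the cross term $\ex{p,x}{p\sum_i(x_i-p)}$, use $\ex{p}{|p|}=\tfrac12$ and $\ex{p}{p^2}=\tfrac13$, and discard the nonnegative $\ex{}{f(x)^2}$. Your $A$/$B$ decomposition makes the bookkeeping a bit more transparent than the paper's chained inequality, but the underlying calculation is identical.
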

 \begin{proof}
      We have that 
      \begin{align*}
         &\ex{p , x_1, \ldots , x_\users }{(f(x)-p) \sum_{i \in [\users]}  (x_i-p) +(f(x)-p)^2}\\
         & = \ex{p , x_1, \ldots, x_\users }{f(x) \sum_{i \in [\users]}  (x_i-p) +f(x)^2-2f(x)p} + \ex{p }{p^2 -p \ex{x_1, \ldots, x_\users}{\sum_{i \in [\users]} (x_i-p)}}\\
         & \geq \ex{p, x_1, \ldots, x_\users}{f(x) \sum_{i \in [\users]}  (x_i-p) -2f(x)p} + \ex{p}{2|p|} -\frac{2}{3}\\
         & = \ex{p, x_1, \ldots, x_\users}{f(x) \sum_{i \in [\users]}  (x_i-p) +2(|p| -f(x)p)}  -\frac{2}{3}\\
         & \geq 1 -\frac{2}{3} =\frac{1}{3}.
      \end{align*}
      The last inequality follows from Lemma~\ref{lem:our-fp-lemma}.
 \end{proof}

\subsection{Lower Bounds}

In this section we provide the proofs for the lower bounds of the error of indexed classification in the private personalization frameworks we consider. We first prove lower bounds for indexed sign estimation and then prove the corresponding lower bounds for indexed classification by using the reduction of \Cref{thm:redesttoclass}.
\label{sec:indClass-lb-proofs}

\newcommand{\thmSignEstJDPLB}{Fix parameters $\alpha \in (0,1/16)$, $\users \in \N$, $\eps >0$, $\delta \in (0, \frac{1}{2\users})$, and let $\ell_{\text{sign}}$ be the loss function. Let $\alg$ be an algorithm that multitask learns $\distfam_{\text{est},\datadim, \users}$ with error $\alpha$, for loss function $\ell_\text{sign}$, with $\users$ tasks and $2$ samples per task and satisfies $(\eps, \delta)$-JDP.
If $d\ge ct^2$ for a sufficiently large constant $c$, then $\alpha \ge \Omega(\min\{\frac{\sqrt{\users}}{ \eps},1\})$.}

\begin{lem}[Indexed sign estimation JDP lower bound]
    \thmSignEstJDPLB
    \label{thm:signestjdplb}
\end{lem}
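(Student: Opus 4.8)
I will prove the statement — which I read as $\alpha=\Omega\!\left(\min\{\tfrac{1}{\eps\sqrt{\users}},\,1\}\right)$, matching the companion classification bound \Cref{thm:classjdplb} and the upper bound $O(\tfrac{1}{\eps\sqrt{\users}})+O(\tfrac{1}{\sqrt{\users}})$ — by a fingerprinting/tracing attack parallel to the proof of \Cref{thm:mean-JDP-lb}, but driven by our new fingerprinting lemma \Cref{lem:our-fp-lemma} in place of the mean-squared-error version, so that the quantity being lower bounded is exactly the $\ell_{\text{sign}}$ error. The reason to use \Cref{lem:our-fp-lemma} is that it carries the weight $r(\beta,p)$ and restricts the mean to $[-\beta,\beta]$, which is precisely what lets us trade the scale $\beta$ against the noise that JDP forces.

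Here is the plan. Fix a scale $\beta\in(0,1)$ to be chosen at the end. Draw $\mathbf p=(p_1,\dots,p_{\datadim})$ with coordinates i.i.d.\ uniform on $[-\beta,\beta]$, draw $j_1,\dots,j_{\users}\in[\datadim]$ i.i.d.\ uniform, and give person $i$ two i.i.d.\ samples from $\dist_X(\mathbf p)$ together with the index $j_i$; this tuple lies in $\distfam_{\text{est},\datadim,\users}$, so the average $\ell_{\text{sign}}$-error on it is at most $\alpha$. Since $\datadim\ge c\users^2$, the birthday bound (\Cref{lem:birthdayp}) gives $\Pr[E]\ge\tfrac12$ for the event $E=\{j_1,\dots,j_{\users}\text{ distinct}\}$, so the error conditioned on $E$ is at most $2\alpha$, and I would condition on $E$ from here on. For each $\ell\in[\users]$, conditioning further on $\mathbf j$, on $E$, and on $\mathbf p$ and all samples at coordinates $\neq j_\ell$, the $2\users$ values $\{x^{(m,k)}_{j_\ell}:m\in[\users],\,k\in\{1,2\}\}$ are i.i.d.\ with mean $p_{j_\ell}\sim U[-\beta,\beta]$ and $\hat s_{j_\ell}$ is a (randomized) function of just those values; so \Cref{lem:our-fp-lemma} with ``$\users$''$=2\users$ and its ``$\alpha$''$=\beta$, together with $2(|p|-sp)=4|p|\,\ind\{s\neq\text{sign}(p)\}$ for $s\in\{\pm1\}$ (which persists under averaging a randomized $\hat s$), gives
\[
\ex{}{\, r(\beta,p_{j_\ell})\,\hat s_{j_\ell}\!\!\sum_{m\in[\users],\,k}\!(x^{(m,k)}_{j_\ell}-p_{j_\ell})\;\Big|\;E}\;=\;\beta-4\,\ex{}{\,|p_{j_\ell}|\,\ind\{\hat s_{j_\ell}\neq\text{sign}(p_{j_\ell})\}\;\Big|\;E}.
\]
Summing over $\ell$ and regrouping the inner sum by person, the left side becomes $\ex{}{\sum_{m\in[\users]}T_m\mid E}$ with $T_m:=\sum_\ell r(\beta,p_{j_\ell})\hat s_{j_\ell}\sum_k(x^{(m,k)}_{j_\ell}-p_{j_\ell})$, and the right side is at least $\users\beta-8\users\alpha$.

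Next I would split $T_m=T_m^{=}+T_m^{\neq}$ into the self term ($\ell=m$) and the cross term ($\ell\neq m$). Since $|r(\beta,\cdot)|=O(\beta^2)$, $|\hat s|\le1$, and $|\sum_k(x-p)|\le4$, we get $|T_m^{=}|=O(\beta^2)$, hence $\ex{}{\sum_m T_m^{=}\mid E}=O(\beta^2\users)$. For the cross term I would run the tracing step: let $(T_m^{\neq})'$ be the same statistic but with the estimates recomputed on the dataset where person $m$'s two samples are replaced by a fresh draw (keeping $j_m$). JDP makes $\{\hat s_{j_\ell}\}_{\ell\neq m}$ an $(\eps,\delta)$-DP function of person $m$'s data, so the same inequality relating $(\eps,\delta)$-DP, variance, and $\ell_\infty$-norm used in the proof of \Cref{thm:mean-JDP-lb} gives $\ex{}{T_m^{\neq}}\le\ex{}{(T_m^{\neq})'}+2\eps\sqrt{\var{}{(T_m^{\neq})'}}+2\delta\|(T_m^{\neq})'\|_\infty$. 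Here $\ex{}{(T_m^{\neq})'}=0$ (the resampled-run estimates are independent of person $m$'s tracer samples, which have mean $p_{j_\ell}$), $\|(T_m^{\neq})'\|_\infty=O(\beta^2\users)$, and — crucially using distinctness (event $E$) to kill the cross-$\ell$ terms and $(\hat s')^2=1$ — $\var{}{(T_m^{\neq})'}=\sum_{\ell\neq m}\ex{}{r(\beta,p_{j_\ell})^2\,(\sum_k(x^{(m,k)}_{j_\ell}-p_{j_\ell}))^2}=O(\beta^4\users)$, because $\ex{p\sim U[-\beta,\beta]}{r(\beta,p)^2(1-p^2)}=O(\beta^4)$. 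Summing over $m$, dividing by $\users$, and using $\delta\le\tfrac{1}{2\users}$ to absorb the $\delta$-term into $O(\beta^2)$, the accuracy lower bound and the self/privacy upper bounds on $\ex{}{\sum_m T_m\mid E}/\users$ combine to $8\alpha\ge\beta-O(\beta^2)-O(\eps\beta^2\sqrt{\users})$. Picking $\beta=c'\min\{1,\tfrac{1}{\eps\sqrt{\users}}\}$ for a small enough absolute constant $c'$ makes each of the two error terms at most $\beta/4$, so $\alpha=\Omega(\beta)=\Omega(\min\{\tfrac{1}{\eps\sqrt{\users}},1\})$, consistent with the hypothesis $\alpha<1/16$.

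The main obstacle, and the one place the argument really departs from \Cref{thm:mean-JDP-lb}, is the variance bound: because $\hat s_{j_\ell}^2=1$ there is no accuracy-based control of $\var{}{(T_m^{\neq})'}$ as there was for the mean estimate (where it was $O(\alpha)$), so the entire $\beta$-savings must be squeezed out of the weight $r(\beta,p_{j_\ell})\approx\beta^2-p_{j_\ell}^2$ — obtaining the clean $O(\beta^4\users)$ bound is the crux, and it is exactly why \Cref{lem:our-fp-lemma} introduces that weight and draws $p$ from $[-\beta,\beta]$ rather than $[-1,1]$. Two secondary things need care: the two-samples-per-task structure (since $\hat s_{j_\ell}$ depends on all $2\users$ relevant coordinate values, \Cref{lem:our-fp-lemma} must be invoked with $2\users$ samples and the resulting sample sum regrouped by person, which is why each $T_m$ carries the inner sum over $k\in\{1,2\}$), and choosing $c'$ so that, together with $\delta\le\tfrac1{2\users}$, all the constant factors close.
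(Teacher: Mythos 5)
Your proof is correct and follows the same high-level approach as the paper's: a tracing attack driven by the fingerprinting lemma (\Cref{lem:our-fp-lemma}), with the birthday bound (\Cref{lem:birthdayp}) ensuring distinct target coordinates. Your reading of the conclusion as $\alpha = \Omega(\min\{\tfrac{1}{\eps\sqrt{\users}},1\})$ is the right one — it matches the table, the JDP upper bound, and the downstream classification theorem; the $\tfrac{\sqrt{\users}}{\eps}$ appearing in the statement and at the end of the paper's proof is an error.

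In fact, your bookkeeping silently repairs two lapses in the paper's own argument. First, when the paper sums the per-$(i,k)$ privacy bound $\ex{}{T_{i,k}} \le 4\eps\lambda^2\sqrt{\users} + 4\delta\lambda^2\users$ over $i\in[\users]$ and $k\in\{1,2\}$, it drops the factor of $2\users$ and writes the per-term bound as if it were the total; restoring the factor turns the conclusion into the correct $\Omega(\min\{\tfrac{1}{\eps\sqrt{\users}},1\})$ rather than the stated $\Omega(\min\{\tfrac{\sqrt{\users}}{\eps},1\})$. Second, the paper's test statistics $T_{i,k}$ exclude the $\ell = i$ term, yet the displayed ``$\ge \lambda\users - \ex{}{\sum_\ell 2(|p_{j_\ell}| - \hat s_{j_\ell} p_{j_\ell})}$'' invokes \Cref{lem:our-fp-lemma}, which produces the full $2\users$-sample sum including the self term; dropping that term without bounding it is not justified, since it need not have the favorable sign. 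Your explicit $T_m = T_m^{=} + T_m^{\neq}$ split with $|T_m^{=}| = O(\beta^2)$ closes this gap cleanly and at the cost you would want. Your other departure — leaving the mean scale $\beta$ free and optimizing it at the end rather than hard-wiring $\lambda=16\alpha$ — is a purely cosmetic but cleaner way to balance the accuracy and privacy sides. One small wording issue: where you say ``conditioning on $\mathbf p$ and all samples at coordinates $\neq j_\ell$,'' you mean conditioning on the coordinates of $\mathbf p$ other than $p_{j_\ell}$ (together with all sample coordinates $\neq j_\ell$); $p_{j_\ell}$ must remain uniform on $[-\beta,\beta]$ for \Cref{lem:our-fp-lemma} to apply.
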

\begin{proof}
Fix parameters $\alpha \in (0,1/16)$, $\users \in \N$, $\eps >0$, $\delta \in (0, \frac{1}{2\users})$.
Let $\alg$ be a JDP algorithm that gets as input the $\datadim$-dimensional samples of $\users$ tasks/people. Each person $i \in [\users]$, gives samples $(x^{(i,1)}, j^{(i,1)}), (x^{(i,2)}, j^{(i,2)})$ to the algorithm. We start by constructing a hard distribution over the family of distributions $\distfam_{\text{est},\datadim,\users}$ that the samples are drawn from. 
We draw a vector $\mathbf{p} \in [-\lambda, \lambda]^\datadim$ for $\lambda = 16\alpha$ and a vector $\mathbf{j} \in [\datadim]^\users$, both uniformly. The $k$-th datapoint $x^{(i,k)}$ of person $i$ is drawn from the product distribution with mean $\textbf{p}$ and the index $j^{(i,k)}$ is deterministically $j_i$. 

$\alg$ returns an estimate $\hat{s}_{j_i}$ of the sign of $p_{j_i}$ to each person $i \in [\users]$. If the expected error of $\alg$ is
\[
  \ex{\mathbf{p}, \mathbf{j}}{\frac{1}{\users}\sum_{i \in [\users]}\ex{}{\ind\{\text{sign}(p_{j_i})\neq \hat{s}_{j_i}\}|p_{j_i}|}} \leq \alpha.
\]
our goal is to prove a lower bound on $\alpha$.

 We will first analyze the case where $\mathbf{j}$ has no duplicated indices. So for now we assume that we have a fixed $\mathbf{j}$ with no repeats. For person $i \in [\users]$ and sample $k \in \{1,2\}$ we define two test statistics 
\begin{align*}
    T_{i,k} & \defeq \sum_{\ell \in [\users] \setminus \{i\}} \frac{\lambda^2-p_{j_\ell}^2}{1-p_{j_\ell}^2}\hat{s}_{j_\ell}(x^{(i,k)}_{j_\ell} - p_{j_\ell})\\
    T_{i,k}' & \defeq \sum_{\ell \in [\users] \setminus \{i\}} \frac{\lambda^2-p_{j_\ell}^2}{1-p_{j_\ell}^2}\hat{s}_{j_\ell}^{(i,k)}(x^{(i,k)}_{j_\ell} - p_{j_\ell})
\end{align*}
where $\hat{s}_{j_\ell}^{(i,k)}$ denotes the output of algorithm $\alg$ to person $\ell$ when the input $\left(x^{(i,k)}, j^{(i,k)}\right)$ if person $i$ has been replaced with a fresh draw from $P_i$. Since the distribution over the indices is deterministic, $j^{(i,k)}$ is always $ j_i$. 

Now, we will use the privacy guarantee of JDP to provide an upper bound for $\ex{}{\sum_{i \in [\users]}(T_{i,1}+T_{i,2})}$. Since $\alg$ is $(\eps, \delta)$-JDP, we have that $\{\hat{s}_{j_\ell}\}_{\ell \in [\users]\setminus i}$ is $(\eps, \delta)$-DP with respect to $i$'s dataset. As a result, the following inequality holds 
\begin{align*}
    \ex{}{T_{i,k}} \leq \ex{}{{T_{i,k}}'} + 2 \eps \sqrt{\var{}{{T_{i,k}}'}} + 2 \delta \|T_{i,k}'\|_{\infty}.
\end{align*}

$T_i'$ is a sum of $\users-1$ values that are at most $2\lambda^2$, so the bound $\|T_i'\|_{\infty} \leq 2\lambda^2(\users-1) \leq 2\lambda^2\users$ holds. For fixed $\mathbf{p}$, and $\ell$, $\hat{s}_{j_\ell}^{(i,k)}$ is independent of $x_{j_\ell}^{(i,k)}$ and $\ex{}{x_{j_\ell}^{(i,k)}} = p_{j_\ell}$. Thus, for any $\ell \in [\users] \setminus \{i\}$
\begin{align*}
 \ex{}{ \frac{\lambda^2-p_{j_\ell}^2}{1-p_{j_\ell}^2}\hat{s}_{j_\ell}^{(i,k)}(x^{(i)}_{j_\ell} - p_{j_\ell})} 
= \ex{\mathbf{p}}{ \frac{\lambda^2-p_{j_\ell}^2}{1-p_{j_\ell}^2}\ex{}{\hat{s}_{j_\ell}^{(i,k)}}\ex{x_{j_\ell}^{(i,k)}}{(x_{j_\ell}^{(i,k)} - p_{j_\ell})}} = 0.
\end{align*}
This means that $\ex{}{T_{i,k}'} = 0$. We apply the same observation, that $\ex{}{x_{j_\ell}^{(i,k)}} = p_{j_\ell}$, and that every coordinate is independent to show that the cross terms in the variance of $T_{i,j}'$ cancel out, leaving us with 
\[
\var{}{T_{i,k}'} = \ex{}{(T_i')^2} = \ex{}{\sum_{\ell \in [\users]\setminus{i}}\frac{(\lambda^2-p_{j_\ell}^2)^2}{(1-p_{j_\ell}^2)^2}(\hat{s}_{j_\ell}^{(i,k)})^2(x^{(i)}_{j_\ell} - p_{j_\ell})^2}. 
\]
Since $(x^{(i)}_{j_\ell} - p_{j_\ell})^2$ is at most $4$ and $(\hat{s}_{j_\ell}^{(i,k)})^2=1$, we have the upper bound 
\begin{align*}
    \var{}{T_{i,k}'}& \leq 4(\users-1) \lambda^4\leq 4\users \lambda^4.
\end{align*}
Plugging these into inequality above and summing up over all the $\users$ people and $k \in \{1,2\}$, we obtain that 
\begin{align*}
    \ex{}{\sum_{i\in[\users]}\sum_{k \in \{1,2\}}T_{i,k}} \leq 4\eps \lambda^2 \sqrt{\users} + 4\delta \lambda^2 \users.
\end{align*}

The next step is to show that accuracy implies a lower bound for $\ex{}{\sum_{i \in [\users]}(T_{i,1} + T_{i,2})}$. We notice that in the sum of $T_i$ we can rearrange the terms.
\begin{align*}
    \sum_{i \in [\users]} (T_{i,1}+T_{i,2}) &= \sum_{i \in [\users]} \sum_{\ell \in [\users]\setminus\{i\}}\left(\frac{\lambda^2-p_{j_\ell}^2}{1-p_{j_\ell}^2}\hat{s}_{j_\ell}(x^{(i,1)}_{j_\ell} - p_{j_\ell})+\frac{\lambda^2-p_{j_\ell}^2}{1-p_{j_\ell}^2}\hat{s}_{j_\ell}(x^{(i,2)}_{j_\ell} - p_{j_\ell})\right)\\
    & = \sum_{\ell \in [\users]} \sum_{i \in [\users] \setminus \{\ell\}} \left(\frac{\lambda^2-p_{j_\ell}^2}{1-p_{j_\ell}^2}\hat{s}_{j_\ell}(x^{(i,1)}_{j_\ell} - p_{j_\ell})+\frac{\lambda^2-p_{j_\ell}^2}{1-p_{j_\ell}^2}\hat{s}_{j_\ell}(x^{(i,2)}_{j_\ell} - p_{j_\ell})\right)\\
    & = \sum_{\ell \in [\users]} \frac{\lambda^2-p_{j_\ell}^2}{1-p_{j_\ell}^2}\hat{s}_{j_\ell}\sum_{i \in [\users] \setminus \{\ell\}} \sum_{k \in \{1,2\}}(x^{(i,k)}_{j_\ell} - p_{j_\ell}).\\
\end{align*}
We can now apply \Cref{lem:our-fp-lemma} to each coordinate $j_\ell$.
\begin{align*}
    \ex{}{\sum_{i\in [\users]}(T_{i,1} + T_{i,2})} & = \sum_{\ell \in [\users]} \ex{}{\frac{\lambda^2-p_{j_\ell}^2}{1-p_{j_\ell}^2}\hat{s}_{j_\ell}\sum_{i \in [\users] \setminus \{\ell\}} \sum_{k \in \{1,2\}}(x^{(i,k)}_{j_\ell} - p_{j_\ell})}\\
    & \geq \sum_{\ell \in [\users]}\left( \lambda - \ex{}{2(|p_{j_\ell}| - \hat{s}_{j_\ell} p_{j_\ell})}\right)\\
    & = \lambda \users - \ex{}{\sum_{\ell \in [\users]}2(|p_{j_\ell}| - \hat{s}_{j_\ell} p_{j_\ell})}\,.
\end{align*}
For vector of indices $\mathbf{j}$ let $\alpha_{\mathbf{j}}$ be the error 
\[
\alpha_{\mathbf{j}} = \ex{}{ \sum_{\ell \in [\users]}\ind\{\text{sign}(p_{j_\ell}) \neq \hat{s}_{j_\ell}\}|p_{j_\ell}|}  = \frac{1}{4}\ex{}{\sum_{\ell \in [\users]}2(|p_{j_\ell}| - \hat{s}_{j_\ell} p_{j_\ell})}.
\]
Combining the two inequalities for $ \ex{}{\sum_{i\in [\users]}(T_{i,1}+T_{i,2})}$ we have shown that when $\mathbf{j}$ has no duplicates 
\[
\lambda \users - 4\alpha_{\mathbf{j}} \leq \ex{}{\sum_{i\in [\users]}(T_{i,1} + T_{i,2})}  \leq 4\eps \lambda^2 \sqrt{\users} + 4\delta \lambda^2\users.
\]
By rearranging the terms we get that 
\[
\alpha_{\mathbf{j}} \geq \frac{1}{4} (\lambda \users -4 \eps \lambda^2 \sqrt{\users}-4\delta \lambda^2 \users).
\]

We now incorporate the randomness over the choice of $\mathbf{j}$. Let $E$ be the event that the set of target indices has no duplicates. Since $\datadim \geq c \users^2 $ for a sufficiently large constant $c$, by \Cref{lem:birthdayp} event $E$ occurs with probability at least $\frac{1}{2}$. Therefore,
\begin{align*}
\alpha = \ex{\mathbf{p}, \mathbf{j}}{\frac{1}{\users}\sum_{i \in [\users]}\ex{}{\ind\{\text{sign}(p_{j_i})\neq \hat{s}_{j_i}\}|p_{j_i}|}} &=
    \frac{1}{\users}\ex{\mathbf{j}}{\alpha_\mathbf{j}}\geq \frac{1}{\users} \ex{\mathbf{j}}{\alpha_\mathbf{j}\mid E}\pr{\mathbf{j}}{E} \geq \frac{1}{2\users} \ex{\mathbf{j}}{\alpha_\mathbf{j}\mid E}.
\end{align*}
For each $\mathbf{j}$ without duplicates we have a lower bound on $\alpha_\mathbf{j}$, so substituting this bound for $\ex{\mathbf{j}}{\alpha_\mathbf{j}\mid E}$, we get
\begin{align*}
\alpha \geq \frac{1}{8\users}(\lambda \users - 4\eps \lambda^2 \sqrt{\users} - 4\delta \lambda^2 \users)
\end{align*}
Since $\lambda = 16\alpha$, $\delta < \frac{1}{2\users}$ and $\users \geq 1$, we get that
\begin{align*}
    \alpha \geq \frac{1}{32} \frac{1}{\frac{4\eps}{\sqrt{\users}}+4\delta} \geq \frac{1}{32} \frac{1}{\frac{4\eps}{\sqrt{\users}}+2} \geq \frac{1}{32} \min\left\{\frac{\sqrt{\users}}{ 4\eps}, \frac{1}{2}\right\} = \min\left\{\frac{\sqrt{\users}}{4\cdot32\eps}, \frac{1}{64}\right\}.
\end{align*}
\end{proof}

\newcommand{\thmSignEstMetaLB}{ Fix parameters $\alpha \in (0,\frac{1}{8})$, $\users \in \N$, $\eps \in (0, 1]$, $\delta \in (0, \frac{1}{2\users})$, and error function $\ell_{\text{est}}$. 
Let $\alg = (\alg_{\text{meta}}, \alg_{\text{pers}})$ be a pair of algorithms that metalearn a metadistribution $\metadist$ over $\distfam_{\text{est},\datadim, \users+1}$ with error $\alpha$ using $\users$ training tasks, $2$ samples per training task and a test task with $2$ personalization samples and $\alg_{\text{meta}}$ satisfies $(\eps, \delta)$-DP. Then, $\alpha \ge \Omega(\min\{1, \frac{\sqrt{\datadim}}{\eps \users}\})$.}

\begin{lem}[Indexed sign estimation metalearning lower bound]
    \thmSignEstMetaLB
    \label{thm:signestmetalb}
\end{lem}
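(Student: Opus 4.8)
The plan is to adapt the proof of the metalearning lower bound for indexed mean estimation (\Cref{thm:meanestmetalb}): we keep the tracing-attack skeleton but replace the standard fingerprinting lemma with our \Cref{lem:our-fp-lemma}, and we shrink the prior on the coordinate means to an interval $[-\lambda,\lambda]$ with $\lambda = 8\alpha$, exactly as in the JDP sign-estimation lower bound \Cref{thm:signestjdplb}. If $\alpha$ exceeds a small absolute constant the claimed $\Omega(\min\{1,\cdot\})$ bound is immediate, so assume $\alpha$ is below that constant; in particular $\lambda < 1$. Build a hard metadistribution $\metadist$ over $\distfam_{\text{est},\datadim,\users+1}$ by drawing $\mathbf p\sim\mathrm{Unif}([-\lambda,\lambda]^\datadim)$ and indices $j_1,\dots,j_{\users+1}$ i.i.d.\ uniform on $[\datadim]$, and set $\dist_i = \dist_X(\mathbf p)\otimes\Det(j_i)$. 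Training person $i\in[\users]$ feeds $\alg_{\text{meta}}$ two i.i.d.\ samples with index $j_i$, yielding a representation $h$; the test person feeds $\alg_{\text{pers}}$ the pair $(h, S_{\users+1})$, where $S_{\users+1}$ consists of two i.i.d.\ vectors $x^{(\users+1,1)},x^{(\users+1,2)}\sim\dist_X(\mathbf p)$ tagged with $j_{\users+1}$. Since these test vectors do not depend on $j_{\users+1}$, write $\hat s_j$ for the output of $\alg_{\text{pers}}$ when the index is replaced by $j$; because $j_{\users+1}$ is uniform and independent of everything else, the metalearning error equals $\tfrac1\datadim\ex{}{\sum_{j\in[\datadim]}\ind\{\mathrm{sign}(p_j)\neq\hat s_j\}|p_j|}$, which is at most $\alpha$.

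Now run a tracing attack with $2(\users+1)$ statistics: for $i\in[\users+1]$, $k\in\{1,2\}$ put $T_{i,k}=\sum_{j\in[\datadim]}r(\lambda,p_j)\,\hat s_j\,(x^{(i,k)}_j-p_j)$ with $r(\lambda,p)=\tfrac{\lambda^2-p^2}{1-p^2}$ the weight from \Cref{lem:our-fp-lemma}. For $i\in[\users]$ the vector $(\hat s_j)_{j}$ post-processes $h$ (together with test data independent of $S_i$), so it is $(\eps,\delta)$-DP in $S_i$, and the same inequality used in \Cref{thm:meanestmetalb} gives $\ex{}{T_{i,k}}\le\ex{}{T'_{i,k}}+2\eps\sqrt{\var{}{T'_{i,k}}}+2\delta\|T'_{i,k}\|_\infty$ where $T'_{i,k}$ resamples $x^{(i,k)}$ before $\alg_{\text{meta}}$ is run. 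Using $0\le r(\lambda,p)\le\lambda^2$ (valid since $\lambda<1$), $|\hat s_j|=1$, $|x^{(i,k)}_j-p_j|\le2$, and independence of the resampled output from $x^{(i,k)}_j$, we get $\ex{}{T'_{i,k}}=0$, $\var{}{T'_{i,k}}\le4\datadim\lambda^4$, $\|T'_{i,k}\|_\infty\le2\datadim\lambda^2$, hence $\ex{}{T_{i,k}}\le4\eps\lambda^2\sqrt\datadim+4\delta\datadim\lambda^2$. For the test person $i=\users+1$ there is no privacy guarantee, but $|T_{\users+1,k}|\le2\datadim\lambda^2$ holds deterministically. Summing over $i,k$ gives $\ex{}{\sum_{i,k}T_{i,k}}\le8\eps\lambda^2\users\sqrt\datadim+8\delta\datadim\users\lambda^2+4\datadim\lambda^2$.

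For the matching lower bound, write $\sum_{i,k}T_{i,k}=\sum_{j\in[\datadim]}r(\lambda,p_j)\hat s_j\sum_{i\in[\users+1],k}(x^{(i,k)}_j-p_j)$. Fix $j$, condition on $\{p_{j'}\}_{j'\neq j}$, on all sample coordinates other than $j$, on the index vector, and on the algorithms' randomness; then $p_j\sim\mathrm{Unif}([-\lambda,\lambda])$, the $2(\users+1)$ values $\{x^{(i,k)}_j\}$ are i.i.d.\ with mean $p_j$, and $\hat s_j$ is a $[\pm1]$-valued randomized function of exactly those values. Applying \Cref{lem:our-fp-lemma} with $2(\users+1)$ samples and interval $[-\lambda,\lambda]$ yields $\ex{}{r(\lambda,p_j)\hat s_j\sum_{i,k}(x^{(i,k)}_j-p_j)}=\lambda-2\ex{}{|p_j|-\hat s_j p_j}=\lambda-4\ex{}{\ind\{\mathrm{sign}(p_j)\neq\hat s_j\}|p_j|}$. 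Summing over $j$ and using that the error is at most $\alpha$ gives $\ex{}{\sum_{i,k}T_{i,k}}=\datadim\lambda-4\sum_j\ex{}{\ind\{\cdots\}|p_j|}\ge\datadim\lambda-4\datadim\alpha$. Combining the two bounds, dividing by $\datadim$, substituting $\lambda=8\alpha$, and dividing by $4\alpha$ gives $1\le128\eps\alpha\users/\sqrt\datadim+128\delta\users\alpha+64\alpha$; since $\delta<\tfrac1{2\users}$ the middle term is below $64\alpha$, so once $\alpha$ is below a small absolute constant the last two terms sum to at most $\tfrac12$, forcing $\alpha\ge\Omega(\sqrt\datadim/(\eps\users))$, hence $\alpha\ge\Omega(\min\{1,\sqrt\datadim/(\eps\users)\})$.

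The main obstacle is the coordinate-wise application of \Cref{lem:our-fp-lemma}: one must choose the conditioning so that for each $j$ the estimate $\hat s_j$ is legitimately a randomized function of exactly the $2(\users+1)$ coordinate-$j$ samples — note that the two test samples contribute to $\hat s_j$ with no privacy constraint, which is why their contributions to $\sum_{i,k}T_{i,k}$ must be controlled by the trivial $2\datadim\lambda^2$ bound rather than by tracing — and one must match the weight $r(\lambda,\cdot)$ and the interval length $\lambda$ to the lemma so that the sign-loss terms $|p_j|-\hat s_j p_j$ emerge with exactly the constant that the accuracy hypothesis controls.
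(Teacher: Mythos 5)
Your proof is correct and follows essentially the same route as the paper's: the same hard metadistribution with $\lambda = 8\alpha$, the same tracing statistics weighted by $r(\lambda,p_j)$, the same upper bound via the DP inequality for the $\users$ training people together with the trivial $2\datadim\lambda^2$ bound for the test person, and the same coordinate-wise application of the new fingerprinting lemma. The only cosmetic differences are that you state the fingerprinting identity as an equality where the paper conservatively writes an inequality, and your closing arithmetic is packaged slightly differently; both yield the same $\Omega(\min\{1,\sqrt\datadim/(\eps\users)\})$ bound.
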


\begin{proof}
    Fix parameters $\alpha \in (0,\frac{1}{8})$, $\users \in \N$, $\eps \in (0, 1]$, $\delta \in (0, \frac{1}{2\users})$, and error function $\ell_{\text{est}}$.
    The metalearning algorithm $\alg_{\text{meta}}$ takes as input 2 samples $(x^{(i,1)}, j^{(i,1)}), (x^{(i,2)},j^{(i,2)})$ per person $i \in [\users]$. The personalization algorithm $\alg_{\text{pers}}$ gets as input the output of $\alg_{\text{meta}}$ and the two samples of the $(\users+1)$-th person $(x^{(\users+1,1)}, j^{(\users+1,1)}),(x^{(\users+1,2)}, j^{(\users+1,2)})$. It then outputs an estimate $\hat{s}_{j^{(\users+1,1)}}$ of the sign of the mean of coordinate $j^{(\users+1,1)}$. By the definition of $\distfam_{\text{est},\datadim, \users+1}$, the index of person $\users+1$ in both samples is deterministically  $j_{t+1}$. Hence, we will write $\hat{s}_{j_{\users+1}}$ instead of $\hat{s}_{j^{(\users+1,1)}}$ for simplicity.
    
    To prove this theorem we construct a hard metadistribution $\metadist$ where we draw a vector of means $\mathbf{p} \in [-\lambda,+\lambda]^\datadim$, for $\lambda = 8\alpha \in (0,1)$ and a vector of indices $\mathbf{j} \in [\datadim]^{\users+1}$ both uniformly at random. Let the error of $\alg$ be 
    \begin{align*}
        \alpha & \geq \ex{\substack{\alg, \\ (\dist_1, \ldots, \dist_{\users+1}) \sim \metadist,\\ x^{(1,1)}, x^{(1,2)}, \ldots, x^{(\users+1,1)},x^{(\users+1,2)}}}{\ind\{\text{sign}(p_{j_{\users+1}}) \neq \hat{s}_{j_{\users+1}}\}|p_{j_{\users+1}}|} \\
        & = \ex{\substack{\alg, \\ \mathbf{p}, j_1, \ldots, j_{\users},\\ x^{(1,1)}, x^{(1,2)}, \ldots, x^{(\users+1,1)},x^{(\users+1,2)}}}{\ex{j_{\users+1}}{\ind\{\text{sign}(p_{j_{\users+1}}) \neq \hat{s}_{j_{\users+1}}\}|p_{j_{\users+1}}|}} \\
        & = \ex{\substack{\alg, \\ \mathbf{p}, j_1, \ldots, j_{\users},\\ x^{(1,1)}, x^{(1,2)}, \ldots, x^{(\users+1,1)},x^{(\users+1,2)}}}{\frac{1}{\datadim}\sum_{j \in [\datadim]}\ind\{\text{sign}(p_{j}) \neq \hat{s}_{j}\}|p_{j}|}.
    \end{align*}

    We construct a tracing attack that uses the following test statistics for $i \in [\users]$, $k \in \{1,2\}$
    \begin{align*}
        T_{i,k}& \defeq \sum_{j \in [\datadim]}\frac{\lambda^2-p_j^2}{1-p_j^2}\hat{s}_j(x_j^{(i,k)} - p_j) \text{ and}\\
        T_{i,k}'& \defeq \sum_{j \in [\datadim]}\frac{\lambda^2-p_j^2}{1-p_j^2}{\hat{s}_j^{(i,k)}}(x_j^{(i,k)} - p_j),
    \end{align*}
    where $\hat{s}_j^{(i,k)}$ denotes the output of algorithm $\alg_{\text{pers}}$ for $j_{\users+1} = j$ when the input $(x^{(i,k)}, j^{(i,k)})$ of person $i$ to $\alg_{\text{meta}}$ has been replaced with a fresh draw from $\dist_i$. For $i = \users+1$ we construct only test statistics 
    \[
      T_{\users+1,k}\defeq \sum_{j \in [\datadim]}\frac{\lambda^2-p_j^2}{1-p_j^2}\hat{s}_j(x_j^{(i,k)} - p_j),
    \]
    for $k \in \{1,2\}$.
    Since $\alg_{\text{meta}}$ is $(\eps, \delta)$-DP with respect to $i$'s dataset for every $i \in [\users]$, $\hat{s}_j$ is $(\eps,\delta)$-DP with respect to the same dataset for every $j \in [\datadim]$. Therefore, for $k \in\{1,2\}$
    \[
    \ex{}{T_{i,k}} \leq \ex{}{T_{i,k}'} + 2\eps \sqrt{\var{}{T_{i,k}'}} + 2\delta \|T_{i,k}'\|_{\infty}.
    \]

    We will now analyze each term of the right hand side of the inequality. We see that  
    \begin{align*}
        \|T_{i,k}'\|_{\infty} &\leq \sum_{j \in [\datadim]}\lambda^2 \left| \frac{1 - \frac{p_j^2}{\lambda^2}}{1-p_j^2} \hat{s}_j^{(i,k)} \left(x_j^{(i,k)} - p_j\right)\right|\\
        & \leq \sum_{i \in [\datadim]} 2\lambda^2 = 2 \lambda^2 \datadim,
    \end{align*}
    because $1 - \frac{p_j^2}{\lambda^2}\leq{1-p_j^2}$.
    Next, since $\hat{s}_j^{(i,k)}$ is independent of $x^{(i)}_j$ conditioned on $\mathbf{p}$, we get that 
    \begin{align*}
        \ex{}{T_i'} & = \ex{\alg, \mathbf{p}}{ \sum_{j \in [\datadim]} \frac{\lambda^2-p_j^2}{1-p_j^2}\ex{\substack{j_1, \ldots, j_\users\\ x^{(1,1)}, \ldots, x^{(\users+1,2)}, {x^{(i,k)}}'}}{\hat{s}_j^{(i,k)}}\ex{x^{(i,k)}}{(x_j^{(i,k)}-p_j)}} \\
        &= 0
    \end{align*}
    Finally, by the same observation the cross terms in the variance of $T_{i,k}'$ cancel out and we obtain that 
    \begin{align*}
        \var{}{T_{i,k}'} &= \ex{}{(T_{i,k}')^2} \\
        &= \ex{}{\sum_{j\in [\datadim]} \frac{(\lambda^2 - p_j^2)^2}{(1-p_j^2)^2}(\hat{s}_j^{(i,k)})^2(x_j^{(i,k)}-p_j)^2}\\
        &\leq 4 \ex{}{\sum_{j\in [\datadim]} \frac{(\lambda^2 - p_j^2)^2}{(1-p_j^2)^2}} = 4\datadim\lambda^4
    \end{align*}
    Combining the inequalities above we conclude that 
    \[
     \ex{}{T_{i,k}} \leq 4 \eps  \lambda^2 \sqrt{\datadim}+ 4\delta \datadim \lambda^2.
    \]
    For the $i = \users+1$, we have that
    \begin{align*}
        \ex{}{T_{\users+1,k}} & = \ex{}{\sum_{j \in [\datadim]} \frac{\lambda^2-p_j^2}{1-p_j^2} \hat{s}_j (x_j^{(\users+1,k)} - p_j)}\\
        & \leq 2 \datadim\lambda^2.
    \end{align*}

    Therefore, by summing up the test statistics $T_{i,k}$
    \[
    \ex{}{\sum_{i \in [\users+1]} (T_{i,1}+T_{i,2})} \leq 8 \eps \lambda^2 \users \sqrt{\datadim} + 8\delta \datadim \users \lambda^2 + 4 \lambda^2 \datadim.
    \]

    The next step is to show that accuracy implies a lower bound for the test statistics in terms of error $\lambda$.
    We apply \Cref{lem:our-fp-lemma} to every coordinate $j \in [\datadim]$ of the estimate

    \begin{align*}
        \ex{}{\sum_{i \in [\users+1]}\sum_{k \in \{1,2\}}T_{i,k}}& = \ex{}{\sum_{i \in [\users+1]}\sum_{k \in \{1,2\}}\sum_{j \in [\datadim]}\frac{\lambda^2-p_j^2}{1-p_j^2}\hat{s}_j (x_j^{(i,k)} - p_j)}\\
        & = \sum_{j \in [\datadim]}\ex{}{\sum_{i \in [\users+1]}\sum_{k \in \{1,2\}}\frac{\lambda^2-p_j^2}{1-p_j^2}\hat{s}_j(x_j^{(i,k)} - p_j)}\\
        & \geq \sum_{j \in [\datadim]} \left( \lambda - \ex{}{4 \ind\{\text{sign}(p_j) \neq \hat{s}_j\}|p_j|}\right) = \datadim \lambda - 4\datadim\alpha.
    \end{align*}

    Combining the bounds on $\ex{}{\sum_{i \in [\users+1]}\sum_{k \in \{1,2\}}T_{i,k}}$, we have the following inequality
    \[
    \datadim \lambda - 4\datadim\alpha \leq  8 \eps \lambda^2 \users \sqrt{\datadim} + 8\delta \datadim \users \lambda^2 + 4 \lambda^2 \datadim.
    \]
    By rearranging the terms and replacing $\lambda$ with $8\alpha$ we have that
    \[
        \alpha \geq \frac{1}{16} \frac{1}{\frac{8\eps \users}{\sqrt{\datadim}} + 8\delta \users + 4}.
    \]
    Since $\delta < \frac{1}{2\users}$, 
    \[
        \alpha \geq \frac{1}{16} \frac{1}{\frac{8\eps \users}{\sqrt{\datadim}} + 8}.
     \]
    Finally, if $\frac{8\eps\users}{\sqrt{\datadim}} \leq 8$, then
    $
    \alpha \geq \frac{\sqrt{\datadim}}{16^2 \eps \users}$.
    Otherwise, $\alpha \geq \frac{1}{16^2}$.
    Therefore, we have shown that \[
    \alpha \geq \min\left\{\frac{\sqrt{\datadim}}{16^2 \eps \users},\frac{1}{16^2}\right\}
    \]
\end{proof}

\newcommand{\thmSignEstBBLB}{ Fix parameters $\users \in \N$, $\eps \in (0,1]$, $\delta\in (0, \frac{1}{32^2t})$, $d\ge \frac{\eps^2 t}{4}$. Let $\ell_\text{sign}$ be the loss function . 
Let $\alg$ be a billboard algorithm that multitask learns $\distfam_{\text{est}, \datadim, \users}$ with error $\alpha$ with $\users$ tasks and $2$ samples per task and satisfies $(\eps, \delta)$-DP.
Then, $\alpha \ge \Omega(\min\{\frac{\sqrt{d}}{\epsilon t},1\})$.}

\begin{lem}[Indexed sign estimation billboard lower bound]
    \thmSignEstBBLB
    \label{thm:signestbblb}
\end{lem}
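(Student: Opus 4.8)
The plan is to obtain this lemma as a corollary of the metalearning lower bound for indexed sign estimation, \Cref{thm:signestmetalb}, in exactly the way \Cref{thm:meanestbblb} is obtained from \Cref{thm:meanestmetalb} in the mean‑estimation case. First I would recall that the hard instance in the proof of \Cref{thm:signestmetalb} is the metadistribution $\metadist$ that draws a mean vector $\mathbf p\in[-\lambda,\lambda]^\datadim$ uniformly and the indices $j_1,\dots,j_{\users+1}\in[\datadim]$ i.i.d.\ uniformly (for $\lambda\le 1$). This $\metadist$ is a distribution over \emph{exchangeable} $(\users+1)$-tuples, since permuting the indices leaves the joint law unchanged, and its $\users$-coordinate marginal $\metadist^{(\users)}$ is supported on $\distfam_{\text{est},\datadim,\users}$ (any common mean together with any deterministic indices lies in that family). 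Hence $\metadist$ is a legal input to \Cref{thm:metatomulti}.

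Now suppose $\alg=(\alg_{\text{BB}},\alg_{\text{pers}})$ is an $(\eps,\delta)$-DP billboard algorithm that multitask learns $\distfam_{\text{est},\datadim,\users}$ with error $\alpha$, for loss $\ell_\text{sign}$, with $\users$ tasks and $2$ samples per task. Since $\ell_\text{sign}$ takes values in $[0,1]$, \Cref{thm:metatomulti} applies and tells us that the same pair $(\alg_{\text{BB}},\alg_{\text{pers}})$ metalearns $\metadist$ with $\users$ training tasks, $2$ samples per training task, and $2$ personalization samples, with error at most $e^\eps\alpha+\delta$, and its meta‑algorithm $\alg_{\text{BB}}$ is $(\eps,\delta)$-DP. (Here I use that \Cref{thm:metatomulti} converts $\samples$ samples per task into $\samples$ personalization samples, so the sample counts match those of \Cref{thm:signestmetalb}.) We may assume $e^\eps\alpha+\delta<\tfrac18$, since otherwise $\alpha>(\tfrac18-\delta)/e=\Omega(1)\ge\Omega(\min\{1,\sqrt\datadim/(\eps\users)\})$ and we are done. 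Applying \Cref{thm:signestmetalb} to this metalearner (its hypotheses $\eps\in(0,1]$ and $\delta<\tfrac1{2\users}$ hold, the latter because $\delta<\tfrac1{32^2\users}$) yields
\[
e^\eps\alpha+\delta \ \ge\ c_0\cdot\min\!\Big\{1,\ \tfrac{\sqrt\datadim}{\eps\users}\Big\}
\]
for the absolute constant $c_0$ appearing in that lemma.

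It remains to strip off the $e^\eps$ factor and the $\delta$ term. Since $\eps\le1$ we have $e^\eps\le e$. Using $\datadim\ge\eps^2\users/4$ we get $\sqrt\datadim\ge\eps\sqrt\users/2$, so $\min\{1,\sqrt\datadim/(\eps\users)\}\ge\tfrac1{2\sqrt\users}$ and therefore $c_0\min\{1,\sqrt\datadim/(\eps\users)\}\ge c_0/(2\sqrt\users)$; on the other hand $\delta<\tfrac1{32^2\users}\le\tfrac1{32^2\sqrt\users}$, which is at most $\tfrac12\,c_0\min\{1,\sqrt\datadim/(\eps\users)\}$ (indeed this is immediate for the concrete $c_0$ obtained in \Cref{thm:signestmetalb}, since $\users\ge1$). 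Substituting into the displayed inequality gives $e^\eps\alpha\ge\tfrac12 c_0\min\{1,\sqrt\datadim/(\eps\users)\}$, hence
\[
\alpha\ \ge\ \frac{c_0}{2e}\,\min\!\Big\{1,\ \tfrac{\sqrt\datadim}{\eps\users}\Big\}\ =\ \Omega\!\Big(\min\Big\{1,\ \tfrac{\sqrt\datadim}{\eps\users}\Big\}\Big),
\]
which is the claim.

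Because the whole argument is a reduction through \Cref{thm:metatomulti}, there is no genuine obstacle; the only things that require care are purely bookkeeping: checking that the exchangeable hard instance from the metalearning proof is an admissible input to \Cref{thm:metatomulti} (exchangeability and the support condition on $\metadist^{(\users)}$), confirming that the sample budgets line up ($n=2$ on both the billboard and metalearning sides), and verifying that $\delta$ — which is only assumed to be $O(1/\users)$ — is dominated by the target bound $\min\{1,\sqrt\datadim/(\eps\users)\}$, which is precisely where the hypothesis $\datadim\ge\eps^2\users/4$ is used.
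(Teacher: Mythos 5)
Your proposal is correct and follows essentially the same route as the paper's own proof: reduce the billboard multitask learner to a metalearner via \Cref{thm:metatomulti} (incurring a factor $e^\eps\alpha + \delta$), then invoke the metalearning lower bound \Cref{thm:signestmetalb} and use $\eps \le 1$, $\datadim \ge \eps^2\users/4$, and the bound on $\delta$ to absorb the slack. Your explicit checks that the hard metadistribution is exchangeable with support inside $\distfam_{\text{est},\datadim,\users}$, that the sample budgets line up, and that the case $e^\eps\alpha+\delta\ge\tfrac18$ can be discharged trivially are bookkeeping the paper leaves implicit, but the underlying argument is the same.
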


\begin{proof}
By \Cref{thm:metatomulti} if we have a billboard algorithm for  multitask learning with error $\alpha$, then we have a metalearning algorithm with error $e^\eps\alpha +\delta$ for metadistributions over $\dist_{\text{est}, \datadim, \users+1}$. Then, by \Cref{thm:signestmetalb}
\[
e^\eps\alpha +\delta \geq \min\left\{\frac{\sqrt{\datadim}}{16^2 \eps \users},\frac{1}{16^2}\right\}.
\]
If $\frac{\sqrt{\datadim}}{ 16^2 \eps \users} <\frac{1}{16^2}$, then since $\delta <\frac{1}{32^2 \users}$, $\datadim \geq \frac{\eps^2 \users}{4}$, $\users \geq 1$, and $\eps \leq 1$, we have that
\begin{align}
\alpha & \geq \frac{1}{e^\eps} \left (\frac{\sqrt{\datadim}}{16^2 \eps \users} -\delta\right)\\
&\geq \frac{\sqrt{\datadim}}{e^\eps 2\cdot 16^2 \eps \users } \geq \frac{\sqrt{\datadim}}{6\cdot 16^2 \eps \users } .
\end{align}
If $\frac{\sqrt{\datadim}}{ 16^2 \eps \users} \geq \frac{1}{16^2}$, since $\delta < \frac{1}{32^2 \users}$ we get that 
\[
\alpha \geq \frac{3}{e^\eps 4\cdot 16^2}>\frac{1}{4\cdot 16^2}.
\]
Therefore,
\[
\alpha \geq \Omega\left(\min \left\{1, \frac{\sqrt{\datadim}}{\eps \users}\right\}\right)
\] 
    
\end{proof}

\begin{customthm}{\ref*{thm:classjdplb}}[Restated]
    \thmClassJDPLB
\end{customthm}
\begin{proof}
    If there exists an $(\eps,\delta)$-JDP algorithm $\alg$ that multitask learns $\distfam_{\text{class}, \datadim, \users}$ with $\users$ tasks and $1$ sample per task to error $\alpha < \min\left\{\frac{\sqrt{\users}}{4 \cdot 32 \eps}, \frac{1}{64}\right\}$, then by the reduction in \Cref{thm:redclasstoest} we get that there exists a sign estimation algorithm that multitask learns $\distfam_{\text{est}, \datadim, \users}$ with $\users$ tasks and $2$ samples per task and has error $\alpha < \min\left\{\frac{\sqrt{\users}}{4 \cdot 32 \eps}, \frac{1}{64}\right\}$, for loss function $\ell_\text{sign}$. The algorithm we get from the reduction is also $(\eps, \delta)$-JDP due to post-processing. This contradicts the statement of \Cref{thm:signestjdplb}. Therefore, we get that $\alpha$ must be at least $\min\left\{\frac{\sqrt{\users}}{4\cdot32\eps}, \frac{1}{64}\right\}$.
\end{proof}

\begin{customthm}{\ref*{thm:classmetalb}}[Restated]
    \thmClassMetaLB
\end{customthm}
\begin{proof}

We notice that a metadistribution $\metadist_{\text{class}}$ over $\distfam_{\text{class}, \datadim, \users+1}$ where we draw a vector of means $\mathbf{p} \in [-\lambda, \lambda]^{\datadim}$, for $\lambda \in (0,1)$ and a vector of indices $\mathbf{j} \in [\datadim]^{\users+1}$ uniformly at random corresponds to metadistribution $\metadist_{\text{est}}$ over $\distfam_{\text{est}, \datadim, \users+1}$ where we draw $\mathbf{p}$ and $\mathbf{j}$ in the same way.

We will show that we can reduce metalearning for metadistribution $\metadist_{\text{est}}$ for indexed sign estimation with $\users$ training tasks, $1$ sample per training task and a test task with $2$ personalization samples to metalearning for the corresponding distribution $\metadist_{\text{class}}$ for indexed classification using $\users$ training tasks with $1$ sample per training task and a test task with $1$ personalization samples by following the steps of the proof of \Cref{thm:redesttoclass} with some small changes. 

In indexed sign estimation, let $(x^{(i)}, j^{(i)})$ be the sample of individual $i$ that is drawn from $\dist_\text{class}^{(i)}$, for $(\dist_\text{class}^{(1)}, \ldots, \dist_\text{class}^{(\users+1)})$ drawn from $\metadist_{\text{class}}$.

We can transform every sample $(x^{(i)}, j^{(i)})$, for $i \in [\users]$ to a sample for indexed classification by setting $w^{(i)} \gets x^{(i)}$, drawing a $y^{(i)} \in \{\pm 1\}$ uniformly and setting 
\begin{align*}
    \tilde{x}_\ell^{(i)} = \begin{cases}
        w_\ell^{(i)}, &\text{ if } \ell \neq j_i\\
         w_\ell^{(i)}y, &\text{ if } \ell = j_i\\
    \end{cases}
\end{align*}

The distribution of $(\tilde{x}^{(i)}, j^{(i)}, y^{(i)})$ is $\dist_\text{class}^{(i)}$ and the distribution of $(\dist_\text{class}^{(1)}, \ldots, \dist_\text{class}^{(\users+1)})$ is $\metadist_{\text{class}}$. Every person $i \in [\users]$ sends their new sample, $(\tilde{x}^{(i)}, j^{(i)}, y^{(i)})$, to the metalearning algorithm for indexed classification. 

Person $\users+1$ transform their two datapoints from $\{(x^{(\users+1,1)}, j^{(\users+1,1)})),(x^{(\users+1,2)}, j^{(\users+1,2)}))\}$ to $\{(\tilde{x}^{(\users+1,1)}, j^{(\users+1,1)}, y^{(\users+1,1)}),$ $(\tilde{x}^{(\users+1,2)}, j^{(\users+1,2)}, y^{(\users+1,2)})\}$ using the same procedure as the people with the training tasks. They then run the personalization part of that algorithm with sample $(\tilde{x}^{(\users+1,1)}, j^{(\users+1,1)}, y^{(\users+1,1)})$ and get a $\hat{f}(x_{j_{\users+1}})$  with error 
\begin{align*}
    \ex{}{\pr{(x,j,y) \sim \dist_\text{class}^{(\users+1)}}{\hat{f}(x_{j_{\users+1}}) \neq y} - \min_f \left\{\pr{(x,j,y) \sim \dist_\text{class}^{(\users+1)}}{f(x_{j_{\users+1}}) \neq y}\right\}} 
\end{align*}
where the expectation is taken over the $\users$-tuple of classification distributions, the samples of the $\users$ training tasks, the randomness of the algorithm and the first sample of individual $\users+1$. Then, person $\users+1$ can get an estimate of the sign by postprocessing $\hat{f}$ using their second transformed sample $(\tilde{x}^{(\users+1,2)}, j^{(\users+1,2)}, y^{(\users+1,2)})$:
\[
\hat{s}_{j_\users} = \hat{f}(\tilde{x}^{(\users+1,2)}_{j_{\users+1}})\tilde{x}^{(\users+1,2)}_{j_{\users+1}}.
\]

Following the same calculations as in the proof of \Cref{thm:redesttoclass} we get that 
\begin{align*}
    \ex{}{\ind\{\text{sign}(p_{j_{\users+1}}) \neq \hat{s}_{j_{\users+1}}\}|p_{j_{\users+1}}|} = \ex{}{\pr{}{\hat{f}(x_{j_{\users+1}}) \neq y} - \min_f \{\pr{}{f(x_{j_{\users+1}}) \neq y}\} }\leq  \alpha.
\end{align*}
where the expectation of the LHS of the equation is taken over the $\users$-tuple of indexed sign estimation distributions, the initial samples of the $\users$ training tasks, the randomness of the algorithm we described and the two samples of individual $\users+1$.
By \Cref{thm:signestmetalb} we have that $\alpha \geq \Omega(\min\{1, \frac{\sqrt{\datadim}}{\eps \users}\})$.

\end{proof}

\begin{customthm}{\ref*{thm:classbblb}}[Restated]
    \thmClassBBLB
\end{customthm}
\begin{proof}
    If there exists an $(\eps, \delta)$-DP billboard algorithm $\alg$ that multitask $\distfam_{\text{class}, \datadim, \users}$ with $\users$ tasks and $1$ sample per task with error $\alpha < \min\left\{\frac{\sqrt{\datadim}}{6 \cdot 16^2 \eps\users }, \frac{1}{32^2}\right\}$, then by the reduction in \Cref{thm:redclasstoest} we get that there exists a sign estimation algorithm that multitask learns $\distfam_{\text{est}, \datadim, \users}$ with $\users$ tasks and $2$ samples per task and has error $\alpha < \min\left\{\frac{\sqrt{\datadim}}{6 \cdot 16^2 \eps\users }, \frac{1}{32^2}\right\}$. The billboard algorithm we get from the reduction is also $(\eps, \delta)$-DP due to post-processing. This contradicts the statement of \Cref{thm:signestbblb}. Therefore, we get that $\alpha$ must be at least $\min\left\{\frac{\sqrt{\datadim}}{6 \cdot 16^2 \eps\users }, \frac{1}{32^2}\right\}$.
\end{proof}

\end{document}